\documentclass{article}

\usepackage{microtype}
\usepackage{graphicx}
\usepackage{subcaption}
\usepackage{booktabs} %

\usepackage{hyperref}

\usepackage[accepted]{icml2026}

\usepackage{amsmath}
\usepackage{amssymb}
\usepackage{mathtools}
\usepackage{amsthm}

\usepackage{hyperref}       %
\usepackage{url}            %
\usepackage{booktabs}       %
\usepackage{amsfonts}       %
\usepackage{nicefrac}       %
\usepackage{microtype}      %
\usepackage{xcolor}         %
\usepackage{amssymb}
\usepackage{amsmath}
\usepackage{amsthm}
\usepackage{mathrsfs}
\usepackage{upgreek}
\usepackage{algorithm}
\usepackage{algorithmic}
\usepackage{enumerate}
\usepackage{comment}
\usepackage{xspace}
\usepackage{xargs}
\usepackage{bbm}
\usepackage{thm-restate}
\usepackage{graphicx}
\usepackage{subcaption}
\usepackage{wrapfig}
\usepackage[export]{adjustbox}
\usepackage{enumitem}

\usepackage{threeparttable}
\usepackage{siunitx}

\newcommand{\ola}[1]{\overleftarrow{#1}}
\newcommand{\ora}[1]{\overrightarrow{#1}}

\newcommand{\Bm}{\mathrm{B}}

 \newcommand{\change}[1]{{{#1}}}

\newcommand{\mun}{{\widehat{\mu}_n}}

\newcommand{\Rbb}{\ensuremath{\mathbb{R}}}
\newcommand{\Nbb}{\ensuremath{\mathbb{N}}}

\newcommand{\wrt}{{\it w.r.t.}\xspace}
\newcommand{\eg}{{\it e.g.}\xspace}
\newcommand{\ie}{{\it i.e.}\xspace}
\newcommand{\iid}{{\it i.i.d.}\xspace}

\newcommand{\Irm}{\mathrm{I}}

\newcommand{\R}{\Rbb}
\newcommand{\N}{\Nbb}

\def\rmC{\mathrm{C}}

\newcommandx{\functionspace}[2][1=+]{\mathbb{F}_{#1}(#2)}

\newcommandx{\VarDeux}[3][3=]{\operatorname{Var}^{#3}_{#1}\left\{#2 \right\}}

\newcommand{\LeftEqNo}{\let\veqno\@@leqno}

\newcommandx{\Vnorm}[2][1=V]{\| #2 \|_{#1}}
\newcommandx{\VnormEq}[2][1=V]{\left\| #2 \right\|_{#1}}
\newcommandx{\norm}[2][1=]{\ifthenelse{\equal{#1}{}}{\left\Vert #2 \right\Vert}{\left\Vert #2 \right\Vert^{#1}}}
\newcommandx{\normLigne}[2][1=]{\ifthenelse{\equal{#1}{}}{\Vert #2 \Vert}{\Vert #2\Vert^{#1}}}

\newcommandx\probaMarkovTilde[2][2=]
{\ifthenelse{\equal{#2}{}}{{\widetilde{\mathbb{P}}_{#1}}}{\widetilde{\mathbb{P}}_{#1}\left[ #2\right]}}

\def\ie{\textit{i.e.}}

\def\eqsp{\;}

\newcommandx{\weight}[2][2=n]{\omega_{#1,#2}^N}

\newcommandx\sequence[3][2=,3=]
{\ifthenelse{\equal{#3}{}}{\ensuremath{\{ #1_{#2}\}}}{\ensuremath{\{ #1_{#2}, \eqsp #2 \in #3 \}}}}
\newcommandx\sequenceD[3][2=,3=]
{\ifthenelse{\equal{#3}{}}{\ensuremath{\{ #1_{#2}\}}}{\ensuremath{( #1)_{ #2 \in #3} }}}

\newcommandx{\sequencen}[2][2=n\in\N]{\ensuremath{\{ #1_n, \eqsp #2 \}}}
\newcommandx\sequenceDouble[4][3=,4=]
{\ifthenelse{\equal{#3}{}}{\ensuremath{\{ (#1_{#3},#2_{#3}) \}}}{\ensuremath{\{  (#1_{#3},#2_{#3}), \eqsp #3 \in #4 \}}}}
\newcommandx{\sequencenDouble}[3][3=n\in\N]{\ensuremath{\{ (#1_{n},#2_{n}), \eqsp #3 \}}}

\def\iid{\text{i.i.d.}}

\def\eg{e.g.}

\newcommand{\opnorm}[1]{{\left\vert\kern-0.25ex\left\vert\kern-0.25ex\left\vert #1
    \right\vert\kern-0.25ex\right\vert\kern-0.25ex\right\vert}}

\newcommandx{\CPE}[3][1=]{{\mathbb E}_{#1}\left[\left. #2 \, \middle \vert \, #3 \right. \right]} %

\newcommandx{\CPELigne}[3][1=]{{\mathbb E}_{#1}[\left. #2 \,  \vert \, #3 \right. ]} %

\newcommandx{\CPVar}[3][1=]{\mathrm{Var}^{#3}_{#1}\left\{ #2 \right\}}
\newcommand{\CPP}[3][]
{\ifthenelse{\equal{#1}{}}{{\mathbb P}\left(\left. #2 \, \right| #3 \right)}{{\mathbb P}_{#1}\left(\left. #2 \, \right | #3 \right)}}

\newcommandx{\osc}[2][1=]{\mathrm{osc}_{#1}(#2)}

\def\e{e}

\newcommand\coupling[2]{\Gamma(\mu,\nu)}
\def\supp{\mathrm{supp}}

\renewcommand{\geq}{\geqslant}
\renewcommand{\leq}{\leqslant}

\newcommand{\wasserstein}{\mathscr{W}}

\def\Lrm{\mathrm{L}}

\def\Nrm{\mathrm{N}}

\def\bfo{\mathbf{o}}

\newcommandx{\Voi}[1][1=i]{\mathfrak{V}_{\bfo,#1}}
\newcommandx{\Vlyapc}[2][1=\bfo,2=i]{\mathfrak{V}_{#1,#2}}

\def\Wlyap{\mathfrak{W}}
\def\bfomega{\boldsymbol{\omega}}

\newcommandx{\Woi}[1][1=i]{\Wlyap_{\bfomega,#1}}
\newcommandx{\Wlyapc}[2][1=\bfomega,2=i]{\Wlyap_{#1,#2}}

\def\e{e}

 \newcommand{\foX}{\overrightarrow{X}}
 
 \newcommand{\baX}{\overleftarrow{X}}

\def\law{\mathrm{Law}}

\newcommand{\klb}[2]{\text{\normalfont{{KL}}}\left(#1 | #2 \right)}
\newcommand{\chisquared}[2]{\chi^2\left(#1 | #2 \right)}
\newcommand{\fisher}[2]{\mathscr{I}\left(#1 | #2 \right)}

\newcommand{\Eof}[2][]{\mathbb{E}_{#1} \left[ #2 \right]}
\newcommand{\EofLigne}[2][]{\mathbb{E}_{#1} [ #2 ]}
\newcommand{\normof}[1]{\left\Vert #1 \right\Vert}
\newcommand{\normofLigne}[1]{\Vert #1 \Vert}

\newcommand{\Rd}{{\mathbb{R}^d}}

\newcommand{\pt}{\ora{p}_t}

\newcommand{\landau}[2][]{\mathcal{O}_{#1} \left( #2 \right)}

\newcommand{\LDSM}{\mathscr{L}_{\mathrm{DSM}}}
\newcommand{\LESM}{\mathscr{L}_{\mathrm{ESM}}}

\newcommand{\der}{\text{\normalfont d}}

\newcommand{\xf}{{\ora{X}}}

\newcommand{\xb}{{\ola{X}}}
\newcommand{\pf}{{\ora{p}}}

\newcommand{\Crm}{{\mathrm{C}}}

\newcommand{\backp}{\ola{p}}
\newcommand{\backptheta}{\ola{p}^{\theta}}

\newcommand{\baXn}{\ola{X}^{\theta}}
\newcommand{\ent}{\mathrm{Ent}}

\newcommand{\backmu}{\ola{\mu}}
\newcommand{\backmutheta}{\ola{\mu}^{\theta}}

\newcommand{\muf}{\ora{\mu}}

\newcommand{\pcal}{{\mathcal{P}}}

\newcommand{\lcal}{{\mathcal{L}}}

\usepackage[capitalize,noabbrev]{cleveref}

\theoremstyle{plain}
\newtheorem{theorem}{Theorem}[section]
\newtheorem{proposition}[theorem]{Proposition}
\newtheorem{lemma}[theorem]{Lemma}
\newtheorem{corollary}[theorem]{Corollary}
\theoremstyle{definition}
\newtheorem{definition}[theorem]{Definition}
\newtheorem{assumption}[theorem]{Assumption}
\theoremstyle{remark}
\newtheorem{remark}[theorem]{Remark}

\usepackage[textsize=tiny]{todonotes}

\icmltitlerunning{Tightening the Score Matching Gap for Diffusion Models}

\begin{document}

\twocolumn[
  \icmltitle{Tightening the Score Matching Gap for Diffusion Models}

  \icmlsetsymbol{equal}{*}

  \begin{icmlauthorlist}
    \icmlauthor{Benjamin Dupuis}{equal,inria}
    \icmlauthor{Tyler Farghly}{equal,inria}
    \icmlauthor{Maxime Haddouche}{inria}
    \icmlauthor{Alain Durmus$^\dag$}{x}
    \icmlauthor{Umut Simsekli$^\dag$}{inria}
  \end{icmlauthorlist}
  
  \icmlaffiliation{inria}{INRIA - CNRS - Département d’Informatique de l’Ecole Normale Supérieure - PSL Research University, France}
  \icmlaffiliation{x}{École Polytechnique - CMAP, IP Paris, Palaiseau, France}

  \icmlcorrespondingauthor{Benjamin Dupuis}{benjamin.dupuis@inria.fr}
  \icmlcorrespondingauthor{Tyler Farghly}{tyler.farghly@inria.fr}

  \icmlkeywords{Machine Learning, ICML}

  \vskip 0.3in
]

\printAffiliationsAndNotice{}  %

\begin{abstract}
Diffusion models (DMs) are a state-of-the-art generative method to approximately sample from an unknown distribution. Their training and evaluation primarily rely on an Evidence Lower Bound (ELBO), which relates the Kullback-Leibler (KL) divergence of model samples to the score matching loss along the path, which serves as a tractable surrogate. The difference between sample quality and the score matching loss produced by this bound leads to the \emph{score matching gap}, which is known to be tight in the worst-case but not descriptive of sample quality in general. In this work, we provide a theoretical analysis of this gap, developing tighter bounds for three metrics: KL divergence, reverse KL divergence, and Wasserstein distance, effectively exploiting the regularity of the class of score estimators. Our results suggest that the quality of the score approximation has more impact on closing the score matching gap for low noise scales. To obtain these bounds, our key technical insight is to exploit the contraction properties of the backward processes. In particular, we rely on entropy flows, logarithmic Sobolev inequalities and reflection couplings, rigorously linking the ergodicity of the Langevin diffusion to the score matching gap problem.

\end{abstract}

\section{Introduction}
\label{sec:intro}

Diffusion, or score-based generative models (DMs, SGMs) have shown remarkable performances in recent years \cite{song2021scorebased, karras2022elucidating, esser2024scaling}, with applications ranging from computer vision \cite{ho_denoising_2020,dhariwal_diffusion_2021} and medicine \cite{kazerouni_diffusion_2023} to natural language processing \cite{yang2024diffusionmodelscomprehensivesurvey}.
DMs aim to estimate a data distribution $\mu$, with access only to a finite set of samples. They do this by constructing a stochastic process $(\foX_t)_{t \in [0,T]}$ (the forward process), being the solution of a stochastic differential equation (SDE) 
over the time interval $[0, T]$ and initialized from the data
distribution $\mu$. Two classical instantiations of $(\foX_t)_{t \in [0,T]}$ are either the
$d$-dimensional Brownian motion or the Ornstein–Uhlenbeck process \cite{UhlenbeckOrnstein1930}. 
In our work, we adopt the latter, which admits the standard Gaussian, denoted by
$\gamma^d$, as stationary distribution.
This construction defines a path measure connecting
$\mu$ to $\gamma^d$, and thus an ideal generative model can be formed by taking a large value of $T$ and considering the time-reversed (or backward) 
process associated with $(\foX_t)_{t \in [0,T]}$, defined for any
$t \in [0,T]$ as $\baX_t := \foX_{T - t}$. It can be shown that this too is a diffusion process, whose drift depends on the Stein scores $(t, x) \mapsto s(t, x)$ of the forward marginals \cite{haussmann1986time,milet_integration_1989} which are unknown in practice.
This score can be estimated using a family of neural networks $s^{\theta} : (t, x) \mapsto s^{\theta}(t, x)$ parameterized by $\theta \in \Theta$ \cite{song2021maximum} which can then be used to approximate the backward process with $(\baX_t^{\theta})_{t\in [0,T]}$ using the network $s^\theta$ in place of $s$. In practice, the backward process is further approximated by initializing from $\gamma^d$ and applying a numerical scheme to discretize the corresponding SDE. Formally, this is often framed as minimizing the Kullback-Leibler (KL) divergence between the data and model distributions.

\textbf{Learning the score.}
Training and evaluating directly using the KL divergence is intractable as we do not have direct access to the density of the model distribution. To circumvent this, we typically rely on a tractable upper bound, given by,
\begin{equation}
\label{eq:score-matching-bound}
    \klb{\mu}{\backmutheta_T} \leq \frac{1}{4} \int_0^T \EofLigne{\varepsilon_t^\theta(\xf_t)^2} \der t + \klb{\muf_T}{\gamma^d},
\end{equation}
with $\varepsilon_t^\theta(x)^2 := {\normofLigne{s(t,x) - s^\theta(t,x)}^2}$ and $\muf_T := \law(\xf_T)$. The first term on the right-hand side is the \emph{score matching loss}, defined as a weighted $\Lrm^2$ loss between the true and approximate score functions \cite{song2021maximum}. The second term is usually intractable, but does not depend on the learned score $s^\theta$ and decays exponentially fast as $T$ grows \cite{bakry_analysis_2014}. This loss function is further approximated using the denoising score matching loss, efficiently optimized using stochastic gradient methods \cite{vincent_connection_2011,song_generative_2019}. The theoretical justification for \Cref{eq:score-matching-bound} is provided by \citet{song2021maximum}, using a data processing inequality that bounds the KL between the marginals, $\mu$ and $\backmutheta_T$, by the KL between their path measures, which is further expressed via Girsanov's theorem \citep{oksendal2003stochastic}. Complementing this perspective, \citet{kingma_variiational_2021, huang_variational_2021, kingma_understanding_2023, vahdat_score-based_2021,luo_understanding_2022} show that this bound can be understood as a variational evidence lower bound (ELBO) on the log-likelihood, grounding diffusion models within the variational inference framework.

\textbf{The score-matching gap.} Given that the score matching loss is an upper bound, with no guarantee of closely approximating the KL divergence, it might seem surprising that this has so effectively served as a theoretical foundation of the diffusion model framework. The use of a path measure data processing inequality produces an unpredictable looseness in the bound which we refer to as the \emph{score matching gap}. This observation motivates our fundamental question: 
\begin{center}
    \emph{Is there any way to make the score matching gap tighter?}
\end{center}
The existence of this gap has non-trivial consequences. In particular, it renders the score matching objective an unreliable proxy for model evaluation; a lower score matching loss does not necessarily imply a better approximation. Consequently, evaluation typically utilizes sample-based metrics, which require many function evaluations and fail to disentangle the errors arising from score approximation with those arising from the discretization of the reverse process \cite{song2021scorebased, karras2022elucidating}. 

Even without being formally defined, the score matching gap has been implicitly involved in the convergence bounds literature \cite{debortoli2022convergence}, motivated by the study of the strong generalization ability of DMs \cite{bonnaire2025why}. Such results consist in upper-bounding a discrepancy\footnote{$\mathrm{D}$ typically denotes a divergence or a Wasserstein distance.} $\mathrm{D}\left(\mu, \backmutheta_T\right)$ by the score matching loss and other terms accounting for the initialization and discretization error.
A common practice in this literature is to relate the error to a discretized version of $\int_{[0,T]} \varepsilon_t^\theta(\xf_t)^2 \der t$ or assume an uniform bound of $\varepsilon_t^\theta(\xf_t)^2$ over $[0,T]$, meaning implicitly that the score matching loss is the right objective to evaluate the method.
Thus, the potential looseness 
of the score matching gap is discarded from the convergence analysis. 
However, it allows to retrieve discretized versions of \eqref{eq:score-matching-bound} for various choices of $\mathrm{D}$, including the KL divergence case \cite{chen2023improvedanalysisscorebasedgenerative,benton2024nearly,strasman2025an,conforti2025kl} as an improvement of previous results for the Total Variation (TV) distance \cite{lee2022convergencePolynomial,chen2023samplingeasylearningscore}. To address certain limitations of KL and TV, Wasserstein convergence bounds have also been considered \citep{beyler2025convergence}, often involving a uniform bound on the score matching loss \cite{lee2022convergencescorebasedgenerativemodeling,gentiloni2025logconcavity,bruno2025diffusionBased,gao2025wasserstein}.

\textbf{Contributions.} In this work, we provide a comprehensive analysis of the score matching gap with a focus on tightening score matching loss bounds like \Cref{eq:score-matching-bound}. We begin \Cref{sec:kl-bounds} with a result suggesting that for $T$ large, the score matching gap is tight in the worst case: it can only be tightened given additional information. In Section \ref{sec:entropy-flow}, we tighten \eqref{eq:score-matching-bound}
using a pseudo Lipschitz assumption on the score network $s^\theta$, 
along with optional dissipativity and obtain,
\begin{align}
    \label{eq:informal-forward-KL-bound}
    \klb{\mu}{\backmutheta_T} \lesssim  \int_0^T \lambda_T(t) \EofLigne{\varepsilon_t^\theta(\xf_t)^2} \der t + \mathrm{K}_T \eqsp,
\end{align}
with $\mathrm{K}_T := C_T \klb{\muf_T}{\gamma^d}$ for some $C_T < 1$. The time weighting $\lambda_T(t) < 1 $  is decreasing in $t$ (see Table \ref{tab:decay-table} for explicit rates), prioritising the score matching loss near convergence. This also tightens, as a byproduct, the classical convergence bounds of \cite{conforti2025kl,benton2024nearly} in the time-continuous setting. 

Inspired by the convergence analysis literature, we investigate the score matching gap for alternative topologies.
In Section \ref{sec:reversed-kl-bounds}, we derive bounds of type \eqref{eq:score-matching-bound} for the reversed KL divergence $\klb{\backmutheta_T}{\mu}$ with a time-decaying weighting that depends only on the concentration properties of the data distribution $\mu$.
We also extend our theory to Wasserstein distances 
 in \Cref{sec:wasserstein-bounds}, where we show that under dissipativity and smoothness conditions, it holds that,
\begin{align*}
    \wasserstein_1(\mu, \backmutheta_T) \lesssim \int_0^T C_T(t) \EofLigne{\varepsilon_t^\theta(\xf_t)} \der t + \wasserstein_1(\muf_T, \gamma^d) \eqsp,
\end{align*}
where, again, $C_T(t)$ is a decay function detailed in Table \ref{tab:decay-table}.

Finally, Section \ref{sec:empirical-section} presents some experimental results on a low-dimensional setting showing that our time-decaying weightings provide better estimates of sample quality that are more useful for model evaluation.

\textbf{Technical innovations.} Our core technical innovation is to exploit the contraction properties of the backward processes, connecting the score-matching gap to ergodicity of the underlying diffusion process. For our KL bounds, we exploit an 'entropy flow' analysis (\Cref{sec:entropy-flow}) alongside logarithmic Sobolev inequalities \citep{gross_logarithmic_1975-1}. These ideas appeared partially in the convergence bounds literature, focusing on time-discrete settings. In particular, \cite{lee2022convergencePolynomial,lee2022convergencescorebasedgenerativemodeling} used ``$\chi^2$-flows'' to reach TV and Wasserstein bounds, but only considered reversed $\chi^2$ flows, without exhibiting an explicit decay like $\lambda_T(t)$.
For our Wasserstein bounds, we use a reflection coupling technique to tighten the time-dependence of the bounds. 
While using this technique is new in the context of DMs, some Wasserstein convergence bounds tightening the classical $\mathcal{O}(T)$ factor also recently emerged \cite{wang2025wassersteinboundsgenerativediffusion,bruno2025wasserstein}, as well as time-uniform results \citep{beyler2025convergence}. 

All omitted proofs can be found in the appendix.

\begin{table}[!t]
\tabcolsep=0.11cm
\small
\centering
\begin{threeparttable}
\begin{tabular}{@{} l S[table-format=6.0] l S[table-format=5.0] c @{}} 
\toprule
{Divergence}  & {Assumptions} & {Decay - $\lambda_T(t)$ or $C_T(t)$} \\
\midrule
{Forward KL}  & {$M$-P.L., $M<1$} & {$\sqrt{\frac{e^{2(1 - M)(T - t)} - M}{e^{2(1 - M)T} - M}}$} \\
{Forward KL}  & {$M$-P.L., $M>1$} & {$\sqrt{\frac{M - e^{-2(M-1) (T - t)}}{M - e^{-2(M-1) T}}}$} \\
{Forward KL}  & {$(1 + \frac{c}{t})$-P.L.} & {$\sqrt{1 - \frac{2t^{2c+1}}{(2c+1)T^{2c} + 2T^{2c+1}}}$} \\
{Forward KL}  & {P.L., D.} & {$\exp \left( -t / C_{\mu,\theta,d} \right)$} \\
{Reverse KL}  & {$\rho_0$-LSI} & {$ \sqrt{\frac{\rho_0}{\rho_0 + e^{2t} - 1}} $} \\
{Wass. $\wasserstein_2$}  & {$\rho_0$-LSI} & {$ \sqrt{\frac{\rho_0^3}{\rho_0 + e^{2t} - 1}} $} \\
{Wass. $\wasserstein_f$, $\wasserstein_1$}  & {P.L., D.} & {$ \exp \left\{ -\int_0^t c_{s} \der s \right\}$} \\
\bottomrule
\end{tabular}
    \caption{$M$-P.L. ($s^\theta$ is $M$-one-sided Lipschitz as in \Cref{thm:right-KL-exponential-decay-pseudo-Lipschizt}),
    $\rho_0$-LSI ($\mu$ satisfies the log-Sobolev inequality with constant $\rho_0$),
    D. (Dissipativity).
    We refer to the corresponding statements for the constants appearing in the dissipative cases.
    }
    \label{tab:decay-table}
\end{threeparttable}
\end{table}

\textbf{Notation.}
We denote $a \wedge b := \min(a,b)$.
The set of Borel probability measures on $\Rd$ is written $\pcal(\Rd)$.
Given $\rho,\pi \in \pcal(\Rd)$, their Kullback-Leibler (KL) divergence is $\klb{\rho}{\pi} := \int \log (\der \rho / \der \pi) \der \rho$ and the relative Fisher information is $\fisher{\rho}{\pi} := \int \normofLigne{\nabla \log (\der \rho / \der \pi)}^2 \der \rho$.
We use $\lesssim$ for inequality up to an absolute constant.
Given $I \subset \R^d$, we note by $\Crm^{1,2}(I \times \Rd)$ the functions $f(t,x)$ that are $\Crm^1$ in $t$ and $\Crm^2$ in $x$.
Given $\mu,\nu \in \pcal(\Rd)$, $\Pi(\mu,\nu) \subset \pcal(\Rd \times \Rd)$ is the set of couplings between $\mu$ and $\nu$. For a distance $d$ on $\Rd$, the Wasserstein distance is $\wasserstein_d (\mu,\nu) := \inf_{\pi \in \Pi(\mu,\nu)} \int d(x,y) \der \pi(x,y)$. For $p\geq 1$, $\wasserstein_p$ denotes $\wasserstein_d$ with the $p$-norm $d(x, y) = \normof{x-y}_p$.

\section{Setup and Technical Background}
\label{sec:setup-and-background}

\subsection{Background on Diffusion Models}

In this paper, we consider the forward process to be an Ornstein-Uhlenbeck process on $\Rd$, \ie,
\begin{align}
    \label{eq:forward-Ornstein-Uhlenbeck}
    \der \xf_t = - \xf_t \der t + \sqrt{2} \der \Bm_t \eqsp, \quad \xf_0 \sim \mu \eqsp,
\end{align}
where $\mu$ is the data distribution. This process admits the standard Gaussian  $\gamma^d := \Nrm(0, \Irm_d)$ as invariant distribution. We denote $\muf_t := \law(\xf_t)$, $\pt$ its Lebesgue density, for $t>0$, and by $\Tilde{p}_t := \pt / \gamma^d$ the renormalized density. 

Let $T>0$ and define the time-reversal process $\xb_t := \xf_{T - t}$, which under mild conditions is shown to be a weak solution of the SDE \cite{anderson_reverse_time_1982,follmer_entropy_1985},
\begin{align}
    \label{eq:true-backward-equation}
    \der \xb_t = \left( -\xb_t + 2 \nabla \log \Tilde{p}_{T - t} (\xb_t) \right) \der t + \sqrt{2} \der \overline{\Bm}_t\eqsp,
\end{align}
with $\xb_0 \sim \muf_T$ and $(\overline{\Bm}_t)_{t\geq 0}$ a standard Brownian motion. Similarly, we denote $\backmu_t := \law(\xb_t)$ and by $\backp_t$ its Lebesgue density for $t>0$. 
 The score $s(t,x) := 2 \nabla \log \Tilde{p}_t(x)$ is usually estimated through a parametric family of \emph{score networks} $\{s^\theta : \R_+ \times \Rd \to \Rd,~\theta \in \Theta \}$, where $\Theta$ is a given hypothesis class. 
Given such a parameter $\theta\in \Theta$, the backward process \eqref{eq:true-backward-equation} is approximated by the following SDE, starting from $\xb_0 \sim \gamma^d$ (instead of the unknown $\muf_T$) and defined by
\begin{align}
    \label{eq:inference-backward}
    \der\xb_t^\theta = \{-\xb_t^\theta + s^{\theta}(T-t, \xb_t^\theta ) \} \der t + \sqrt{2} \der \Bm_t\eqsp.
\end{align}
Popular discretization schemes to simulate \Cref{eq:inference-backward} include the Euler-Maruyama and exponential Euler integrator schemes \cite{durmus_quantitative_2014}. We denote $\backmutheta_t := \law(\xb_t^\theta)$ and $\backptheta_t$ its Lebesgue density. \change{Throughout, we assume that $s^\theta$ is locally bounded measurable.}

In practice, $\mu$ is unknown and the score has to be estimated from \iid samples $(Z_1,\dots,Z_n) \sim \mu^{\otimes n}$.
It has been shown by \citet{vincent_connection_2011,hyvarinen2005score} that $\theta$ can be estimated by minimizing the \emph{denoising score matching (DSM) loss},
\begin{align*}
    \frac1{n} \sum_{i=1}^n \int \EofLigne{\normofLigne{s^\theta (t, \xf_t^{Z_i}) - 2 \nabla \log \Tilde{p}_{t|0} (\xf_t^{Z_i} | Z_i) }^2 } \der \varpi(t)\eqsp,
\end{align*}
where $\varpi$ is a probability distribution on $[0,T]$, $\xf_t^x$ corresponds to \eqref{eq:forward-Ornstein-Uhlenbeck} initialized at $\xf_0^x := x$ , and $\Tilde{p}_{t|0} (\cdot | x)$ is the conditional density of $\xf_t$ given $\xf_0=x$. This is motivated by the identity $\nabla \log \Tilde{p}_t(\xf_t) = \EofLigne{\nabla \log \Tilde{p}_{t|0}(\xf_t | \xf_0)}$, coming from the Fisher's identity \citep{efron2011tweedie}.
Modern practices often involve a time-weighted version of this loss \cite{ho_denoising_2020,dhariwal_diffusion_2021}.

\subsection{Logarithmic Sobolev Inequalities}
\label{sec:log-sobolev-inequalities}

 Let $\Phi(x) := x\log(x)$, with $\Phi(0) := 0$. Let $\pi \in \pcal(\Rd)$, the entropy functional associated to $\pi$ is defined as $\ent_\pi(f) := \Eof{\Phi(f(X))} - \Phi(\Eof{f(X)})$, for $X \sim \pi$ and $f \in \Lrm^1(\pi)$. Some of our contributions are based on the logarithmic Sobolev inequalities (LSI) associated with \Cref{eq:true-backward-equation,eq:inference-backward} \citep{bakry_analysis_2014,chafai_logarithmic_2017}.

\begin{definition}
    \label{def:log-sobolev-inequality}
    $\nu\in\pcal(\Rd)$ satisfies the LSI with constant $\rho$ (denoted $\rho$-LSI) if for all differentiable positive $f \in \Lrm^1(\nu)$,
    \begin{align*}
        \ent_\nu (f) \leq \frac{\rho}{2} \int \frac{\normof{\nabla f}^2}{f} \der \nu\eqsp.
    \end{align*}
\end{definition}
Let $\mu,\nu \in \pcal(\Rd)$ such that $\mu\ll\nu$, the LSI for $\mu$ can be rewritten as $2\klb{\nu}{\mu} \leq \rho \fisher{\nu}{\mu}$.
The typical example is the standard Gaussian $\gamma^d$, which satisfies the LSI with constant $1$ \citep{gross_logarithmic_1975-1}.
LSIs classically appear in the ergodic theory of diffusion processes \cite{bakry_analysis_2014}. In particular, the fact that the invariant distribution of a given diffusion process (such as $(\xf_t)_{t\geq 0}$) satisfies an LSI is equivalent to the exponential convergence of the entropy along the associated semigroup. In our case, by reversibility of the invariant measure $\gamma^d$, this translates to
\begin{align}
    \label{eq:exponential-convergence-oub}
    \klb{\muf_t}{\gamma^d} \leq e^{-2t} \klb{\mu}{\gamma^d} \eqsp.
\end{align}
In our paper, we exploit these convergence properties of the forward and backward processes to obtain sharper bounds.

\section{Theoretical Study of Score Matching Gaps}
\label{sec:kl-bounds}

To motivate our work, we first investigate the question: \emph{can the score matching bound be tightened in general?} In the proposition below, we provide a negative result.

\begin{proposition}\label{prop:negative-result}
Suppose that $\klb{\mu}{\nu}, \klb{\nu}{\gamma^d} < \infty$, $T \geq \log(d) \vee 1$ and $\supp(\mu) \subseteq B_R(\mathbf{0})$ for some $R < \infty$, there exists a score $s^\theta$ such that $\backmutheta_T = \nu$, and
\begin{align*}
    \text{\normalfont{{KL}}}(\mu | \backmutheta_T) &\geq \frac{1}{4} \int_0^T \EofLigne{\varepsilon_{t}^\theta(\xf_t)^2} \der t + \text{\normalfont{{KL}}}(\muf_T|\gamma^d)\\
    & \qquad - 2 e^{\frac{1+R^2}{2}-2T} (1 + \klb{\nu}{\gamma^d}) \eqsp . 
\end{align*}
\end{proposition}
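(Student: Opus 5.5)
The plan is to choose $s^\theta$ so that the model path measure is a Schrödinger bridge with respect to the stationary Ornstein–Uhlenbeck reference. Then the only slack in the data-processing argument behind \eqref{eq:score-matching-bound} is a Jensen gap between bridge laws, and this gap should be exponentially small in $T$.

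\textbf{Construction.} Let $\mathsf R$ be the path law on $[0,T]$ of the stationary OU process started at $\gamma^d$. By reversibility, $\mathsf R$ is also the law of its time reversal. Let $\mathsf P$ be the Schrödinger bridge from $\gamma^d$ (time $0$) to $\nu$ (time $T$) relative to $\mathsf R$. It has the form $\der \mathsf P = f(X_0) g(X_T)\,\der \mathsf R$ for positive potentials $f,g$ solving the Schrödinger system. This system has a solution because $\klb{\nu}{\gamma^d}<\infty$ and the reference coupling $\mathrm{Law}_{\mathsf R}(X_0,X_T)$ has a positive density with respect to $\gamma^d\otimes\gamma^d$. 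By the $h$-transform, $\mathsf P$ is the law of \eqref{eq:inference-backward} with $s^\theta(T-t,x) := 2\nabla \log h_t(x)$, where $h_t(x)=\mathbb E_{\mathsf R}[g(X_T)\mid X_t=x]$. This $s^\theta$ is locally bounded, the initial law is $\gamma^d$, and so $\backmutheta_T=\nu$.

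\textbf{Exact identity.} Let $\mathsf Q$ be the law of the true backward process $(\xb_t)$. Girsanov's theorem gives
\[
\klb{\mathsf Q}{\mathsf P} = \klb{\muf_T}{\gamma^d} + \tfrac14 \int_0^T \mathbb E[\varepsilon_t^\theta(\xf_t)^2]\,\der t .
\]
Now disintegrate both measures with respect to the terminal value $X_T$. The terminal marginals are $\mu$ and $\nu$. The conditional law $\mathsf Q^x$ of $\mathsf Q$ given $X_T = x$ is the reversed OU path from $x$, which coincides with $\mathsf R^x$. For $\mathsf P$, the conditional law is $\der\mathsf P^x \propto f(X_0)\,\der\mathsf R^x$. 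The chain rule therefore gives
\[
\klb{\mathsf Q}{\mathsf P} = \klb{\mu}{\nu} + \int \Big(\log \mathbb E_{\mathsf R^x}[f(X_0)] - \mathbb E_{\mathsf R^x}[\log f(X_0)]\Big)\,\mu(\der x).
\]
It then suffices to show that the Jensen gap $J(x)$ in the integrand is at most $2e^{\frac{1+R^2}{2}-2T}(1+\klb{\nu}{\gamma^d})$ for all $|x|\le R$. This is the only place where the support condition $\supp(\mu)\subseteq B_R(\mathbf 0)$ is used.

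\textbf{Main obstacle: bounding $J(x)$.} Under $\mathsf R^x$, $X_0\sim \mathrm N(e^{-T}x,(1-e^{-2T})\Irm_d)$, whose density $r_x$ with respect to $\gamma^d$ is close to $1$. The proof proceeds in four steps.
\begin{enumerate}
\item Bound the Jensen gap by the $\chi^2$-type quantity $\mathbb E_{\mathsf R^x}[(f/\bar f - 1)^2]$, where $\bar f = \mathbb E_{\mathsf R^x}[f]$. One route is $\log$ convexity combined with $\log u\ge 1-1/u$; another is to write the gap as $\klb{\mathsf R^x}{\mathsf P^x}$ and bound KL by $\chi^2$.
\item Normalise $\int f\,\der\gamma^d = 1$ and observe that $f$ is essentially the mixture $\int p_T(\cdot,y)\,\tilde g(y)\,\der y$. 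Its oscillation is therefore governed by the $\chi^2$ contraction of the OU semigroup, with a factor $e^{-2T}$ coming from the Gaussian Poincaré inequality. This contraction is also where the condition $T\ge \log d\vee 1$ enters.
\item Control the variance of $f$ under $r_x\gamma^d$ rather than under $\gamma^d$, paying $\sup_{|x|\le R}\|r_x\|$. A direct computation of Gaussian density ratios yields the factor $e^{(1+R^2)/2}$.
\item Control the size of the potential via the entropy of the bridge, $\klb{\mathsf P}{\mathsf R}\le \klb{\nu}{\gamma^d}+\text{const}$. This produces the factor $(1+\klb{\nu}{\gamma^d})$.
\end{enumerate}
The delicate point is step 2, namely obtaining an a priori estimate on the Schrödinger potential $f$ with only $\klb{\nu}{\gamma^d}<\infty$ assumed. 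If this fails, the fallback is to run an entropic (rather than $\chi^2$) estimate: write $\klb{\mathsf R^x}{\mathsf P^x}$ directly and use the LSI of $\gamma^d$ along the OU semigroup, via \eqref{eq:exponential-convergence-oub}, to obtain the $e^{-2T}$ decay.
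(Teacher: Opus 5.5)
Your construction and exact identity are correct and coincide with the paper's. The paper uses the same Schr\"odinger bridge between $\gamma^d$ and $\nu$ relative to the stationary OU reference (built in forward time and then reversed). In your notation it uses the same score $s^\theta(\tau,x)=2\nabla\log (P_\tau g)(x)$, with $P_\tau$ the OU semigroup, and the same split $\klb{\mathsf Q}{\mathsf P}=\klb{\mu}{\nu}+\int J\,\der\mu$ combined with Girsanov.

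The gap is in the only quantitative step, bounding $J$. The $\chi^2$ route you sketch does not work as stated, for three reasons.
\begin{enumerate}
\item Since the time-$0$ marginal of $\mathsf P$ is $\gamma^d$, the Schr\"odinger system forces $f\cdot P_Tg\equiv 1$. So $f=1/P_Tg$ is the \emph{reciprocal} of an OU-smoothed function, not a mixture.
\item $J(x)=\klb{\mathsf R^x}{\mathsf P^x}$ is controlled by $\chi^2(\mathsf R^x|\mathsf P^x)=\mathbb E_{\mathsf R^x}[\bar f/f]-1$, not by $\mathbb E_{\mathsf R^x}[(f/\bar f-1)^2]=\chi^2(\mathsf P^x|\mathsf R^x)$. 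The latter cannot bound $J$ in general: take $f/\bar f=\epsilon$ on half the $\mathsf R^x$-mass and $2-\epsilon$ elsewhere; this $\chi^2$ stays below $1$ while $J\to\infty$. The correct quantity, $\mathbb E_{\mathsf R^x}[P_Tg]\,\mathbb E_{\mathsf R^x}[1/P_Tg]-1$, needs a lower bound on $P_Tg$ that you do not have. Moreover, any Poincar\'e or $\chi^2$-contraction estimate for $P_Tg$ needs $g\in L^2(\gamma^d)$, roughly a $\chi^2$ rather than a KL assumption on $\nu$; $\klb{\nu}{\gamma^d}<\infty$ does not provide it.
\item Your step 4 is exactly the hard estimate. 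The bridge entropy alone yields no factor $e^{-2T}$.
\end{enumerate}
Also, $T\ge\log d\vee 1$ is not used in any contraction. It only turns $(1-e^{-2T})^{-d/2}$ and $e^{-2T}d$ into absolute constants.

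The paper closes this gap with two ingredients your proposal lacks.

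(a) The conditional law $\mathsf P^x_0\propto \Nrm(e^{-T}x,(1-e^{-2T})\Irm_d)/P_Tg$ is $1$-strongly log-concave. This holds because the OU semigroup is log-semiconvex for any nonnegative $g$: $\nabla^2\log P_Tg\succcurlyeq-\frac{e^{-2T}}{1-e^{-2T}}\Irm_d$. By Bakry--\'Emery, $\mathsf P^x_0$ then satisfies the $1$-LSI. Hence $J(x)\le\frac12\int\normofLigne{\nabla\log P_Tg}^2\der\Nrm(e^{-T}x,(1-e^{-2T})\Irm_d)$, and therefore $\int J\,\der\mu\le\frac12\int\normofLigne{\nabla\log P_Tg}^2\der\muf_T$. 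This is the correct form of your ``entropic fallback'': the LSI must hold for the tilted bridge marginal, not for $\gamma^d$, and entropy decay along OU plays no role.

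(b) After your step-3 change of measure, $\der\muf_T/\der\gamma^d\le(1-e^{-2T})^{-d/2}e^{R^2/2}$, the paper imports the gradient estimate for Schr\"odinger potentials of Chiarini et al.\ (their Proposition~2.4). This gives $\int\normofLigne{\nabla\log P_Tg}^2\der\gamma^d\le\frac{2}{e^{2T}-1}\inf_\pi\klb{\pi}{R_{0,T}}$, where $R_{0,T}$ is the stationary OU endpoint coupling. The paper then bounds the entropic cost via the product coupling $\nu\otimes\gamma^d$ by $2\klb{\nu}{\gamma^d}+2e^{-2T}d$. This is where both the $e^{-2T}$ and the factor $1+\klb{\nu}{\gamma^d}$ come from.
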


Here, $\nu$ represents any given distribution produced by a diffusion model and $\mu$ is the target. The proposition shows that by increasing $T$, the score matching gap can be made arbitrarily tight with the right choice of score.
The construction is based on the solution to a dynamic Schrödinger bridge problem \citep{Leonard2013-or} and can be seen as a non-typical, irregular score function. This suggests that, in general, one cannot tighten the score matching gap without specifying particular score estimators. This motivates our direction of improving the score matching bound through the use of additional regularity assumptions.
To this end, we first describe the entropy flow method for DMs.

\subsection{Adapting the Entropy Flow Technique to DMs}
\label{sec:entropy-flow}

\Cref{eq:exponential-convergence-oub} is based on the celebrated De-Bruijn identity \citep{sham_debruijn_1959}, $\frac{\der}{\der t} \klb{\muf_t}{\gamma^d} = - \fisher{\muf_t}{\gamma^d}$ and follows directly from the LSI for $\gamma^d$ and Grönwall's lemma. Such ``entropy flow'' computations have also been used to bound the distance between the marginal distributions of Langevin processes \citet{bogachev_distances_2016}, and in various subfields of machine learning \citep{mou2018generalization,chourasia_differential_2021,borovykh2023privacyriskanisotropiclangevin,dupuis2024generalization,dupuis2025understandinggeneralizationerrormarkov}.
The interest of this technique in our context is to provide natural upper bounds on the forward and reverse KL divergences between $\mu$ and $\backmutheta_T$.

When $\mu$ has a Lebesgue density, the map $(t,x) \mapsto \backp_t(x)$ is in $\Crm^{1,2}([0,T) \times \Rd)$ and the spatial derivatives up to order $2$ are bounded (see \Cref{sec:technical-lemmas}). 

\change{
\begin{remark}
    \label{rq:regularity-of-fokker-planck}
    It is known that, under mild regularity assumptions, \Cref{eq:inference-backward} has an almost-surely continuous solution on $[0,T]$ for all $\theta \in \Theta$,
    and that the probability density function $\backptheta_t$ of $\xb_t^\theta$ is a solution of the Fokker-Planck equation associated with \Cref{eq:inference-backward} \cite{umarov_beyond_2018,bogachev_fokkerplanckkolmogorov_2015}. In this work, as it is classically done in the literature, we implicitly assume that $(t,x)\mapsto \backptheta_t(x)$ belongs to $\Crm^{1,2}([0,T) \times \Rd)$, and it is a solution of the Fokker-Planck equation
\end{remark}
}

Background on Fokker-Planck equations is given in \Cref{sec:fokker-planck}.
Let us recall the notation,
\begin{align}
    \label{eq:epsilon-definition}
    \varepsilon_t^\theta (x)^2 := \normofLigne{2 \nabla \log \Tilde{p}_{t} (x) - s^{\theta} (t, x) }^2\eqsp, \quad x \in \Rd \eqsp.
\end{align}
When stated, we make the following assumption, ensuring that several terms appearing in our theory are finite.

\change{
\begin{assumption}
    \label{ass:locally-bounded-right-KL}
    For any $t\in(0,T)$, $\fisher{\backmu_t}{\backmutheta_t} < + \infty$, and
    the map $t\mapsto \EofLigne{\varepsilon_t^\theta(\xb_t)^2 + (1 + \normofLigne{\xb_t})^{-1} s^\theta(T - t, \xb_t)}$ is finite and integrable on $(0,T)$. 
\end{assumption}
}

\change{Note that the first part of the assumption only holds for $t \in (0,T)$, in particular, it does \emph{not} require that $\fisher{\mu}{\gamma^d}$ or that $\mu$ is absolutely continuous \wrt $\gamma^d$.  
The second part of the assumption is mild. In particular, square-integrability of $\varepsilon_t^\theta (\xb_t)$ is necessary for the score matching framework to be well-posed and the last part is typically implied by Hölder continuity of $s^\theta$ when $\mu$ has compact support.
}

The following proposition is an upper bound on the relative entropy between the true and estimated backward processes. 

\begin{lemma}
    \label{prop:entropy-flow-right-kl}
    Suppose that \Cref{ass:locally-bounded-right-KL} holds. For any map $\alpha : \R_+ \to [0, 1)$ and for all $t \in (0, T)$, we have
    \begin{align*}
    \frac{\der}{\der t} \klb{\backmu_t}{\backmutheta_t} \leq -\alpha(t) \fisher{\backmu_t}{\backmutheta_t} + \frac{\EofLigne{\varepsilon_{T - t}^\theta (\xb_{t})^2}}{4(1 - \alpha(t))} \eqsp .
    \end{align*}
\end{lemma}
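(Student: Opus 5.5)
The plan is a standard entropy-flow computation. We differentiate $\klb{\backmu_t}{\backmutheta_t}=\int \backp_t \log(\backp_t/\backptheta_t)$ in time, using the Fokker--Planck equations satisfied by both densities. We rely on the $\Crm^{1,2}$ regularity of $\backp_t$ (technical lemmas) and of $\backptheta_t$ (\Cref{rq:regularity-of-fokker-planck}).

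Both backward SDEs share the diffusion coefficient $\sqrt{2}$. Their drifts are
\[
b_t(x) := -x + 2\nabla\log\Tilde{p}_{T-t}(x), \qquad b^\theta_t(x) := -x + s^\theta(T-t,x).
\]
The densities therefore solve $\partial_t \backp_t = -\nabla\cdot(b_t\backp_t) + \Delta \backp_t$, and likewise for $\backptheta_t$ with $b^\theta_t$. Write $h_t := \log(\backp_t/\backptheta_t)$. Using $\int \partial_t\backp_t = 0$, we get
\[
\frac{\der}{\der t}\klb{\backmu_t}{\backmutheta_t} = \int \partial_t \backp_t\, h_t - \int \frac{\backp_t}{\backptheta_t}\partial_t\backptheta_t .
\]
Substituting the two Fokker--Planck equations and integrating by parts gives the classical identity
\[
\frac{\der}{\der t}\klb{\backmu_t}{\backmutheta_t} = -\fisher{\backmu_t}{\backmutheta_t} + \int \langle b_t - b^\theta_t, \nabla h_t\rangle \backp_t \, \der x .
\]
The drift difference is exactly $2\nabla\log\Tilde p_{T-t} - s^\theta(T-t,\cdot)$, whose squared norm is $\varepsilon^\theta_{T-t}(x)^2$ by \eqref{eq:epsilon-definition}. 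The cross term is thus bounded by $\Eof{\varepsilon^\theta_{T-t}(\xb_t)\normof{\nabla h_t(\xb_t)}}$.

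Next we apply Young's inequality $ab \le (1-\alpha)a'^2 + b'^2/(4(1-\alpha))$ with $a = \normof{\nabla h_t}$ and $b=\varepsilon$, pointwise in $x$ and then integrated against $\backp_t$. This bounds the cross term by
\[
(1-\alpha(t))\fisher{\backmu_t}{\backmutheta_t} + \frac{\EofLigne{\varepsilon^\theta_{T-t}(\xb_t)^2}}{4(1-\alpha(t))}.
\]
Adding this to $-\fisher{\backmu_t}{\backmutheta_t}$ yields the claimed inequality. The condition $\alpha(t)<1$ is exactly what makes the Young split legitimate.

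The main obstacle is rigor rather than algebra. We must justify differentiating under the integral and dropping boundary terms in the integrations by parts on $\Rd$, since $\backptheta_t$ is only implicitly regular. I would first work with a smooth compactly supported cutoff $\chi_R$, computing $\frac{\der}{\der t}\int \chi_R \backp_t h_t$. The extra terms involve $\nabla\chi_R$ multiplied by $\backp_t\nabla h_t$, $\backp_t b_t$ and $\backp_t b^\theta_t$. These vanish as $R\to\infty$ by dominated convergence, using three inputs from \Cref{ass:locally-bounded-right-KL}: finiteness of $\fisher{\backmu_t}{\backmutheta_t}$, integrability of $\varepsilon^2$, and integrability of $(1+\normof{x})^{-1}s^\theta$ under $\backmu_t$. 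The last one is tailored to the $\normof{\nabla\chi_R}\lesssim (1+\normof{x})^{-1}$ decay.

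The resulting integral form of the identity holds on any $[t_0,t_1]\subset(0,T)$, with time-integrability supplied by the assumption. This gives absolute continuity of $t\mapsto\klb{\backmu_t}{\backmutheta_t}$, and hence the differential inequality for almost every $t$, or for every $t$ under the assumed continuity.
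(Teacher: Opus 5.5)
Your argument is the paper's: it likewise differentiates the KL using the two Fokker--Planck equations and integrates by parts to obtain $-\fisher{\backmu_t}{\backmutheta_t}$ plus the cross term $\int \langle b_t-b^\theta_t,\nabla h_t\rangle \backp_t\,\der x$ (the paper organizes this through $\Phi(v_t)$ with $v_t=\backp_t/\backptheta_t$), then closes with Young's inequality weighted by $1-\alpha(t)$, and the paper carries this out only formally. In your optional cutoff sketch, the boundary terms also contain $-\int h_t\langle \nabla\backp_t-b_t\backp_t,\nabla\chi_R\rangle\,\der x$, which your three listed inputs do not control directly; since $\nabla\backp_t-b_t\backp_t=-(\nabla\backp_t+x\backp_t)$, one more integration by parts reduces it to terms bounded using $\fisher{\backmu_t}{\backmutheta_t}$ and $\int\backp_t|h_t|\,\der x<\infty$, i.e.\ finiteness of the KL.
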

This result is an adaptation of the entropy flow method for DMs and follows from the Fokker-Planck equations satisfied by the backward processes (\Cref{sec:fokker-planck}). It is a natural generalization of the De-Bruijn identity for DMs.
Whenever $\backmutheta_t$ satisfies a LSI, \ie, the Fisher information can be replaced by a KL divergence, and \Cref{prop:entropy-flow-right-kl} leads to a contraction, improving the bound.

In \Cref{sec:right-kl-bounds,sec:reversed-kl-bounds} we will apply respectively \Cref{prop:entropy-flow-right-kl} to tighten the score matching gap via time-decaying weightings (see Table \ref{tab:decay-table}, \Cref{fig:exponential-decay-non-reverse-KL,fig:exponential-decay-reversed}).

\subsection{Tightening the bound under LSI}
\label{sec:right-kl-bounds}

The next theorem is a generic bound on the forward KL $\klb{\mu}{\backmutheta_T}$ expressed in terms of the (possibly infinite) time-dependent LSI constant of $\backmutheta_t$.

\begin{theorem}
    \label{thm:generic-forward-bound-lsi}
    Under the conditions of \Cref{prop:entropy-flow-right-kl}, assume that there exists a map $\rho : [0,T] \to \R_+^\star \cup\{+\infty \}$ such that $1 / \rho$ is continuous and $\backmutheta_t$ satisfies the $\rho(t)$-LSI. Then, for any continuous map $\alpha: \R_+ \to [0, 1)$, we have
    \begin{align*}
        \klb{\mu}{\backmutheta_T} \leq \mathrm{K}_T + \int_0^T  \frac{e^{ -\int_{0}^t \frac{2 \alpha(T - u)}{\rho(T - u)} \der u }}{4(1 - \alpha(t))} \EofLigne{\varepsilon_{t}^\theta(\xf_t)^2}  \der t \eqsp,
    \end{align*}
    with the constant $\mathrm{K}_T$ given by
    \begin{align*}
        \mathrm{K}_T := \exp \left\{ -\int_0^T \frac{2\alpha(u)}{\rho(u)} \der u \right\} \klb{\muf_T}{\gamma^d} \eqsp.
    \end{align*}
\end{theorem}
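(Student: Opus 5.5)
The plan is to integrate the differential inequality of \Cref{prop:entropy-flow-right-kl} with a Grönwall argument, using the LSI to turn the Fisher information term into a contraction. Then evaluate at the endpoints and change variables from the backward time to the forward time. Write $\mathrm{k}(t) := \klb{\backmu_t}{\backmutheta_t}$ for $t\in[0,T]$. Then $\mathrm{k}(0) = \klb{\muf_T}{\gamma^d}$, because $\xb_0\sim\muf_T$ and $\xb_0^\theta\sim\gamma^d$. Formally, $\mathrm{k}(T)=\klb{\mu}{\backmutheta_T}$, because $\xb_T = \xf_0\sim\mu$.

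First, since $\backmutheta_t$ satisfies the $\rho(t)$-LSI, the reformulation after \Cref{def:log-sobolev-inequality} gives $\fisher{\backmu_t}{\backmutheta_t} \geq 2\mathrm{k}(t)/\rho(t)$. When $\rho(t)=+\infty$ this reads $\fisher{\backmu_t}{\backmutheta_t}\geq 0$, which is harmless. Since $\alpha(t)\geq 0$, plugging this into \Cref{prop:entropy-flow-right-kl} yields, for $t\in(0,T)$,
\begin{align*}
\mathrm{k}'(t) \leq -a(t)\,\mathrm{k}(t) + b(t), \qquad a(t) := \frac{2\alpha(t)}{\rho(t)}, \quad b(t) := \frac{\EofLigne{\varepsilon_{T-t}^\theta(\xb_t)^2}}{4(1-\alpha(t))}.
\end{align*}
Continuity of $1/\rho$ and $\alpha$ makes $a$ continuous and bounded on $[0,T]$. \Cref{ass:locally-bounded-right-KL} makes $b$ integrable, since $1-\alpha$ is continuous and positive on the compact interval $[0,T]$, hence bounded below.

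Second, I multiply by the integrating factor $e^{\int_0^t a}$ and integrate over $[\delta, T-\delta']$. This gives
\begin{align*}
\mathrm{k}(T-\delta') \leq e^{-\int_\delta^{T-\delta'} a}\,\mathrm{k}(\delta) + \int_\delta^{T-\delta'} e^{-\int_s^{T-\delta'} a(u)\der u}\, b(s)\der s.
\end{align*}
Then I let $\delta,\delta'\to 0$. This endpoint passage is the main technical obstacle. At $t=0$, I need $\limsup_{\delta\to0}\mathrm{k}(\delta)\leq \mathrm{k}(0)$. I would obtain it from the data processing inequality: both backward processes are Markov, so the KL between their marginals at time $\delta$ is bounded by the KL between their path measures on $[0,\delta]$. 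Girsanov's theorem bounds this path-measure KL by $\mathrm{k}(0)+\int_0^\delta b$, which tends to $\mathrm{k}(0)$. Alternatively, one could use the regularity of $\backp_t$ near $t=0$ recalled before \Cref{rq:regularity-of-fokker-planck}. At $t=T$, where $\mu$ may be singular, I would use weak continuity of the marginals of both processes as $t\to T$, which follows from a.s. path continuity. Combined with the joint lower semicontinuity of KL, this gives $\klb{\mu}{\backmutheta_T}\leq\liminf_{\delta'\to0}\mathrm{k}(T-\delta')$. The right-hand side converges by dominated convergence, since $a$ is bounded and $b$ is integrable.

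Finally, I substitute $s = T-t$ in the integral and use $\xb_{T-t}=\xf_t$ in law. The source term becomes $\EofLigne{\varepsilon_t^\theta(\xf_t)^2}$, and the exponent becomes $\int_{T-t}^{T} a(u)\der u = \int_0^t \frac{2\alpha(T-v)}{\rho(T-v)}\der v$, which matches the stated weight. The first term becomes $\exp\{-\int_0^T a\}\klb{\muf_T}{\gamma^d} = \mathrm{K}_T$. One point needs care: the denominator then reads $4(1-\alpha(T-t))$. The statement writes $4(1-\alpha(t))$ but $\alpha(T-u)$ in the exponent, so the two are indexed in opposite time directions. To match both literally, I would apply the argument to a time-reversed choice of $\alpha$, which is allowed since $\alpha$ is an arbitrary continuous map. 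I would double-check this bookkeeping against the stated form. If a discrepancy remains, the intended reading is that $\alpha$ is indexed in backward time throughout, and the bound holds in that form.
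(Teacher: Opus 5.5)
Your proposal follows the paper's proof essentially step for step. You use the LSI to convert the Fisher term, apply Grönwall on $[\delta,T-\epsilon]$, use joint lower semicontinuity of KL under weak convergence at the data end, and use a short-time bound $\klb{\backmu_\delta}{\backmutheta_\delta}\leq \klb{\muf_T}{\gamma^d}+\int_0^\delta(\cdots)\,\der t$ at the other end. For that last bound the paper cites Theorem 1.1 and Remark 1.3 of Bogachev et al.\ (2016), where you use Girsanov plus data processing.

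Your bookkeeping concern is correct. Both arguments produce the denominator $4(1-\alpha(T-t))$ in forward time, so the stated $4(1-\alpha(t))$ is an indexing slip. However, reversing $\alpha$ does not reconcile it. That fixes the denominator only by replacing $\alpha(T-u)$ with $\alpha(u)$ in the exponent and changing $\mathrm{K}_T$, so it helps only when $\alpha$ is already symmetric on $[0,T]$. The backward-time reading you fall back on is the right one.
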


\change{
\begin{remark}
    \label{rq:upper-bounds-initialization-term}
    By the logarithmic Sobolev inequality satisfied by $\gamma^d$, it is known that $\klb{\muf_T}{\gamma^d} \leq e^{-2T} \klb{\mu}{\gamma^d}$, as soon as $\mu \ll \gamma^d$.
    Even for singular measures $\mu$, we can show with classical arguments that
    \begin{align*}
        \klb{\muf_T}{\gamma^d} \leq \frac1{2\left( e^{2T} - 1 \right)} \wasserstein_2(\mu, \gamma^d)^2 \eqsp,
    \end{align*}
    as soon as $\mu$ has finite moments of order $2$.
\end{remark}
}

This theorem improves over \Cref{eq:score-matching-bound} obtained by \citet{song2021maximum} using Girsanov's theorem, which it recovers in the case of infinite LSI constants by choosing $\alpha \equiv 0$ (roughly matching the analysis of \citet{lyu_interpretation_2009}).
However, as we always have $\rho(0) = 1$ (as $\backmutheta_0 = \gamma^d$), we can \emph{systematically} improve the time-dependence of \Cref{eq:score-matching-bound} as soon as $\rho$ is continuous.

We note that to benefit from an exponential decay \Cref{thm:generic-forward-bound-lsi}, we need to choose $\alpha$ such that $\alpha(t) \in (0,1)$ at least on a subset of $[0,T]$ with non-zero Lebesgue measure. While this leads to improvements in the time-dependence of the bound, it might also lead to worsening the absolute constant $1/4$ in front of the integral. In the following results, we make the choice $\alpha \equiv 1/2$ to simplify the exposition. As we will see, this leads to non-trivial exponential decay in the bound, improving the time dependence at the cost of potentially worsening the absolute constants in the bound.

Now, our main challenge is to estimate the log-Sobolev constant of $\backmutheta_t$, for all $t\in [0,T]$.
We note that \Cref{eq:inference-backward} can be seen as a perturbation of the Ornstein-Uhlenbeck process \eqref{eq:forward-Ornstein-Uhlenbeck}, with the perturbation drift $s^\theta(t,\cdot)$. This observation allows us to exploit the rich literature on perturbative analysis of LSIs \cite{malrieu_logarithmic_2001,monmarche_time_uniform_2024} for Langevin processes. In the next theorem, we obtain score matching bounds based on a one-sided Lipschitz condition.

\begin{theorem}
    \label{thm:right-KL-exponential-decay-pseudo-Lipschizt}
    In the same setting as \Cref{prop:entropy-flow-right-kl},
    suppose that there exists $M_t\in\R$ s.t. for all $t \in [0,T],~ x,y \in \Rd$, $\langle s^\theta(t,x) - s^\theta(t,y), x - y \rangle \leq M_t \normof{x - y}^2$ (one-sided Lipschitz condition).
    If $M_t \equiv M$ is constant, we have
    \begin{align*}
        \klb{\mu}{\backmutheta_T} \leq \mathrm{K}_T + \int_0^T \frac{\lambda_T(t)}{2} \EofLigne{\varepsilon_{t}^\theta(\xf_t)^2} \der t\eqsp,
    \end{align*}
    with $\mathrm{K}_T := \lambda_T(T) \klb{\muf_T}{\gamma^d}$, and, for $t \in [0,T]$,
    $$
    \begin{cases}
        \lambda_T(t) := \sqrt{\frac{M - e^{-2(M-1) (T - t)}}{M - e^{-2(M-1) T}}}\eqsp,~ &\text{if $M>1\eqsp,$} \\
        \lambda_T(t) := \sqrt{\frac{1 + 2(T - t)}{1+2T}} \eqsp,~ &\text{if $M=1\eqsp,$} \\
        \lambda_T(t) := \sqrt{\frac{e^{2(1 - M)(T - t)} - M}{e^{2(1 - M)T} - M}} \leq e^{- \frac{1 - M}{2 - M} t }\eqsp,~ &\text{if $M<1\eqsp.$} 
    \end{cases}
    $$
    If we have $M_t = 1 + c/t$, with $c > 1$ a constant, then,
    \begin{align*}
    \lambda_T(t) := \sqrt{1 - \frac{2t^{2c+1}}{(2c+1)T^{2c} + 2T^{2c+1}}} \eqsp.
    \end{align*}
\end{theorem}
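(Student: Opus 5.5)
The plan is to derive this theorem from \Cref{thm:generic-forward-bound-lsi} with the choice $\alpha \equiv 1/2$. The only genuine input then needed is an explicit time-dependent LSI constant $\rho(t)$ for $\backmutheta_t$. With $\alpha\equiv 1/2$ the prefactor becomes $\frac{1}{4(1-1/2)} = \frac12$, which is the factor in front of $\lambda_T(t)$. The weight becomes $\exp\{-\int_0^t \rho(T-u)^{-1}\der u\} = \exp\{-\int_{T-t}^T \rho(s)^{-1}\der s\}$, and $\mathrm{K}_T$ is the same weight evaluated at $t=T$. So it suffices to show that $\lambda_T(t) = \exp\{-\int_{T-t}^T \rho(s)^{-1}\der s\}$ for a valid LSI constant $\rho$.

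\textbf{Step 1 (propagation of LSI).} The backward SDE \eqref{eq:inference-backward} has drift $b_t(x) = -x + s^\theta(T-t,x)$. By hypothesis it satisfies the one-sided bound $\langle b_t(x)-b_t(y),x-y\rangle \le K_t\normof{x-y}^2$ with $K_t := M_{T-t}-1$. A synchronous coupling gives, for the associated (time-inhomogeneous) Markov evolution $P_{r,t}$, the gradient commutation $|\nabla P_{r,t} f| \le e^{\int_r^t K_u \der u} P_{r,t}|\nabla f|$. The standard Bakry--Émery semigroup-interpolation argument (as in \citet{malrieu_logarithmic_2001}) then shows that $\backmutheta_t$ satisfies an LSI whose constant solves $\rho'(t) = 2K_t\rho(t) + 2$ with $\rho(0)=1$. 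The initial value comes from $\backmutheta_0=\gamma^d$, which is $1$-LSI. For constant $M\neq 1$ this gives
\begin{align*}
\rho(t) = \frac{M e^{2(M-1)t}-1}{M-1},
\end{align*}
and $\rho(t)=1+2t$ when $M=1$. In every case $1/\rho$ is continuous, so \Cref{thm:generic-forward-bound-lsi} applies. I expect this step to be the main obstacle. Making the gradient bound and the interpolation $\frac{\der}{\der r}P_{r,t}(\Phi(P_{r,t}f))$ rigorous for a merely locally bounded, one-sided Lipschitz, time-dependent drift requires a regularisation argument. I would first prove it for smooth globally Lipschitz drifts and then pass to the limit, using stability of LSI under weak convergence, with the regularity of \Cref{rq:regularity-of-fokker-planck} taken for granted.

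\textbf{Step 2 (computing the weights).} For $K=M-1\neq0$, I will use the identity
\begin{align*}
\frac{\der}{\der s}\tfrac12\log\big(M-e^{-2Ks}\big) = \frac{K}{Me^{2Ks}-1} = \frac{1}{\rho(s)}.
\end{align*}
It gives $\exp\{-\int_{T-t}^T \rho^{-1}\} = \sqrt{(M-e^{-2K(T-t)})/(M-e^{-2KT})}$, which is the stated formula for $M>1$. For $M<1$, multiplying numerator and denominator by $e^{2(1-M)\cdot}$ yields the other displayed form. The case $M=1$ is immediate from $\int \frac{\der s}{1+2s}$. For the inequality $\lambda_T(t)\le e^{-\frac{1-M}{2-M}t}$ when $M<1$, I will use the elementary bound $\rho(s)\le \frac{2-M}{1-M}$, which holds since $\rho$ increases to $1/(1-M)$ when $M<1$; the interval of integration has length $t$.

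\textbf{Step 3 (the case $M_t = 1+c/t$).} Here $K_t = c/(T-t)$, and the linear ODE $\rho' = \frac{2c}{T-t}\rho + 2$ is solved with the integrating factor $(T-t)^{2c}$. This gives
\begin{align*}
\rho(t) = (T-t)^{-2c}\Big(T^{2c} + \tfrac{2}{2c+1}\big(T^{2c+1}-(T-t)^{2c+1}\big)\Big).
\end{align*}
Then $1/\rho(T-u)$ is the logarithmic derivative of $\tfrac{2c+1}{2}T^{2c}+T^{2c+1}-u^{2c+1}$ up to the factor $-\tfrac12$. Exponentiating over $u\in[0,t]$ gives
\begin{align*}
\lambda_T(t)=\sqrt{1-\frac{2t^{2c+1}}{(2c+1)T^{2c}+2T^{2c+1}}}.
\end{align*}
The singularity of $K_t$ at $t=T$ is handled by working on $[0,T)$, where $1/\rho$ extends continuously by $0$ at $t=T$. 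This is compatible with the convention $\rho=+\infty$ allowed in \Cref{thm:generic-forward-bound-lsi}.
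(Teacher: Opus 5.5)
Your proposal is correct and follows the paper's proof: apply \Cref{thm:generic-forward-bound-lsi} with $\alpha\equiv 1/2$; obtain the LSI constant of $\backmutheta_t$ from the Malrieu/Monmarch\'e synchronous-coupling gradient commutation plus Bakry--\'Emery interpolation, which gives $\rho'=2(M_{T-t}-1)\rho+2$ with $\rho(0)=1$; then integrate $1/\rho$ explicitly in each case. Two small slips, neither affecting the result: for $M<0$ the constant $\rho$ decreases from $1$ to $1/(1-M)$ rather than increasing, although the bound $\rho\le \frac{2-M}{1-M}$ you use still holds; and for $M<1$ your logarithmic identity should be written with $\log|M-e^{-2Ks}|$, since $M-e^{-2Ks}<0$ there.
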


    We refer to \Cref{rq:upper-bounds-initialization-term} for further upper bounds on the constant $\mathrm{K}_T$ in \Cref{thm:right-KL-exponential-decay-pseudo-Lipschizt}.
    Interestingly, our theory links time-weightings improvements in score-matching bounds with regularity conditions on $s^\theta$.
    We consider two particular cases: a one-sided Lipschitz constant uniformly bounded by $M$ over $[0,T]$ and a weaker assumption of a one-sided Lipschitz constant $M_t$ that can diverge in $\landau{1/t}$ as $t \to 0^+$.
    In particular, if $M < 1$, we obtain a time-uniform bound of order $\mathcal{O}(\sup_t \EofLigne{\varepsilon_t^\theta(\xf_t)^2} )$. Moreover, note that the initialization error term, $e^{-2T} \lambda_T(T) \klb{\mu}{\gamma^d}$, is also improved over the classical analysis.
    In the idealized continuous-time setting, this result improves over the classical KL convergence analysis \cite{conforti2025kl,benton2024denoisingmarkov}.    
    Additionally, our proof technique can accommodate any formula for $t \mapsto M_t$ (\emph{e.g.} $M_t= \mathcal{O}(1 / t^p), p > 0 $), even though it may lead to intricate expressions (see \Cref{sec:proof-forward-KL-bounds}).

    \textbf{Time decay analysis.} The map $\lambda_T(t)$ is represented on \Cref{fig:exponential-decay-non-reverse-KL} for various parameter values and are always decreasing and $\lambda_T(0) = 1$. 
    Qualitatively, this shows that the score approximation has more impact on the score matching gap for small forward times, \ie, low noise levels.  
    This is to be compared with the standard scheduling practices when training DMs \citep{dhariwal_diffusion_2021,kingma_understanding_2023}. Indeed, as noted by \citet{ho_denoising_2020}, the standard $\epsilon$-prediction objective implies to downweight the small noise scales.
    With our notation, a weighted DSM loss,
        \begin{align*}
    \frac1{n} \sum_{i=1}^n \int_0^T \omega_t\EofLigne{\normofLigne{s^\theta (t, \xf_t^{Z_i}) - 2 \nabla \log \Tilde{p}_{t|0} (\xf_t^{Z_i} | Z_i) }^2 } \der t
\end{align*}
    is minimized with a finite dataset $(Z_1,\dots,Z_n) \sim \mu^{\otimes n}$, and $t\mapsto \omega_t$ is increasing, \ie, it is smaller for small times scales. 
    A byproduct of this procedure might be to reduce the overfitting at small noise scales and lead to a better score approximation at these small scales, as overfitting leads to generating only images close to the training dataset.
    While this procedure is initially motivated by reducing the variance of the objective \citep{ho_denoising_2020}, we argue that our theory may serve as a partial explanation of the good generalization error observed in practice, as it suggests that the small noise scales matter more for generalization than the large ones. We empirically investigate this in \Cref{sec:empirical-section}. Most importantly, our bounds decouple the time-weighting that should be used for training \cite{ho_denoising_2020} and evaluation.

\begin{remark}[Connection to generalization bounds]
    \label{rk:generalization-remark}
    Recently, \citet{farghly2025implicitregularisationdiffusionmodels,dupuis2025algorithmdatadependentgeneralizationbounds} uncovered links between the denoising score matching loss and statistical learning. 
    Our results are compatible with their approach, and we improve their results in the time-continuous settings in \Cref{sec:time-uniform-generaliztion-bounds}, as an additional contribution.
\end{remark}

\begin{figure}[t]
    \centering
    \includegraphics[width=0.7\linewidth]{figures/general/time-dependent-constants.pdf}
    \caption{Decay term $\lambda_T(t)$ of \Cref{thm:right-KL-exponential-decay-pseudo-Lipschizt} for $M_t \equiv M$ (full line) and for $M_t = 1 + c/t$ (dotted line). Here $T = 5$.}
    \label{fig:exponential-decay-non-reverse-KL}
\end{figure}

In \Cref{thm:right-KL-exponential-decay-pseudo-Lipschizt}, we require a small pseudo-Lipschitz constant ($M < 1$) to obtain an exponential decay $\eta_T$ and, hence, a time-uniform bound. This is due to exponentially growing estimates of the LSI constant of $\backmutheta_t$ when $M>1$. As shown by \citet{monmarche_time_uniform_2024}, it is possible to improve these estimates by adding a dissipativity assumption, leading to the following condition.

\change{
\begin{assumption}[Dissipativity at distance]
    \label{ass:unified-dissipativity assumption} 
    We say that a map $b : \Rd \to \Rd$ satisfies $(L, \rho, R)$-dissipativity at distance when, 
    for all $t \geq 0$ and all $x, y \in \Rd$, we have
    \begin{align}
    \begin{cases}
        \langle  b(x) - b(y) , x - y \rangle \leq L \normof{x - y}^2\eqsp, \notag\\
        \langle  b(x) - b(y) , x - y \rangle \leq -\rho \normof{x - y}^2,~  \normof{x} \geq R \eqsp . 
    \end{cases}
    \end{align}
\end{assumption}
}

which we exploit in the following theorem.

\begin{theorem}
    \label{thm:convergence-bound-right-kl-monmarche}
    Let $b^\theta_t(x) := s^\theta(t,x) - x$ and $b_t(x) := 2 \nabla \log \Tilde{p}_t(x) - x$.
    Assume that the conditions of \Cref{prop:entropy-flow-right-kl} hold and that there exist, $\rho,L,R,K > 0$ \change{such that $b_t^\theta$ satisfies $(L, \rho, R)$-dissipativity at distance for all $t \geq 0$.}
    Also assume that $\langle x - s^\theta(t,x), x \rangle \leq A$ for all $\normof{x} \leq R_\star:= R(2 + 2L/\rho)^{1 / d}$, and $2(2 L + \rho) (L + \frac{\rho}{4}) R_\star^2 + A \leq \rho d$.
    Then, there exists a constant $C_{\mu,d,\theta}>0$ such that
    \begin{align*}
        \klb{\mu}{\backmutheta_T} \leq \frac1{2} \int_0^T e^{-\frac{t}{C_{\mu,\theta,d}}} \EofLigne{\varepsilon_t^\theta(\xf_t)^2} \der t + \mathrm{K}_T' \eqsp,
    \end{align*}
    with $\mathrm{K}_T':= e^{ - T / C_{\mu,\theta,d}} \klb{\muf_T}{\gamma^d}$.
\end{theorem}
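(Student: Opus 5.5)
The plan is to reduce the statement to \Cref{thm:generic-forward-bound-lsi} with the choice $\alpha \equiv 1/2$. For this, it suffices to show that the log-Sobolev constants $\rho(t)$ of $\backmutheta_t$ are bounded uniformly on $[0,T]$, say $\rho(t) \leq C_{\mu,\theta,d}$. With $\alpha\equiv 1/2$, the prefactor $\frac{1}{4(1-\alpha)}$ equals $\frac12$. The exponential weight is $e^{-\int_0^t \frac{1}{\rho(T-u)}\der u}$, which is at most $e^{-t/C_{\mu,\theta,d}}$. Likewise $\mathrm{K}_T \leq e^{-T/C_{\mu,\theta,d}}\klb{\muf_T}{\gamma^d}$. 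Since all integrands are nonnegative, this gives the claimed bound directly. We also need $1/\rho$ continuous, or at least measurable. If the sharp LSI constant is not continuous in $t$, we replace it by a continuous upper envelope, for instance the constant $C_{\mu,\theta,d}$ itself. So the whole proof comes down to a time-uniform LSI for the marginals of \Cref{eq:inference-backward} started at $\gamma^d$.

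To obtain this uniform LSI, I would view $\backmutheta_t = \gamma^d P^\theta_{0,t}$, where $P^\theta_{s,t}$ is the time-inhomogeneous Markov semigroup of $\der X = b^\theta_{T-t}(X)\der t + \sqrt2\der \Bm_t$. I would then follow \citet{monmarche_time_uniform_2024}, whose argument rests on a reflection coupling. Dissipativity at distance (\Cref{ass:unified-dissipativity assumption}) yields a concave distance function $f$, comparable to $\normof{x-y}$ up to constants depending on $L,\rho,R$, that contracts at a rate $c>0$ uniformly in time. The resulting gradient estimate is $\normof{\nabla P^\theta_{s,t} g} \leq \kappa e^{-c(t-s)} P^\theta_{s,t}\normof{\nabla g}$ for Lipschitz $g$, or an $\Lrm^2$/entropic variant of it. Combining this with a short-time LSI for the kernels $P^\theta_{t-h,t}(x,\cdot)$, which holds because the drift is $L$-one-sided Lipschitz (Bakry--Émery/Malrieu-type estimate, constant of order $(e^{2Lh}-1)/L$), and with the Markov property $P_{0,t} = P_{0,t-h}P_{t-h,t}$, one gets that $P^\theta_{0,t}(x,\cdot)$ satisfies an LSI with constant $\rho_\star$ independent of $t$ and $x$. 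The initial law $\gamma^d$ satisfies the $1$-LSI. Entropy decomposes as $\ent_{\gamma P}(g)=\ent_\gamma(Pg)+\int\ent_{P(x,\cdot)}(g)\,\der\gamma$, and a gradient estimate on $\nabla \sqrt{P g}$ controls the first term. Together these yield $\rho(t) \leq C_{\mu,\theta,d}$.

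The main obstacle is the extra hypotheses. The condition $\langle x - s^\theta(t,x),x\rangle \leq A$ on $B_{R_\star}(0)$ and the numerical constraint $2(2L+\rho)(L+\rho/4)R_\star^2 + A \leq \rho d$ are exactly what \citet{monmarche_time_uniform_2024} use to control the non-contractive region near the origin. Their role is to ensure that the process started from $\gamma^d$ does not concentrate in $B_{R_\star}$, i.e.\ to give a Lyapunov/moment bound $\EofLigne{\normofLigne{\xb^\theta_t}^2}\gtrsim d$ uniformly in time. This in turn makes the short-time and coupling constants uniform. I expect the delicate part to be verifying that their theorem applies with our time-inhomogeneous drift $b^\theta_{T-t}$ and our initialization. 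I would first attempt to quote their result directly, checking that every constant depends only on $(L,\rho,R,A,d)$. Failing that, I would rerun the argument, replacing the homogeneous generator by $\partial_t + \mathcal{L}_t$; this is harmless because all assumptions hold uniformly in $t$.
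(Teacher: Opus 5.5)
Your proposal is correct and follows essentially the same route as the paper. The paper applies \Cref{prop:entropy-flow-right-kl} with $\alpha \equiv 1/2$, then quotes \citep[Theorem 1.3]{monmarche_time_uniform_2024} directly, since its hypotheses are exactly the ones stated, to get a time-uniform LSI constant $C_{\mu,\theta,d}$ for $\backmutheta_t$, and concludes by Gr\"onwall with the limiting arguments of \Cref{thm:generic-forward-bound-lsi}; your reduction to \Cref{thm:generic-forward-bound-lsi} with $\rho \equiv C_{\mu,\theta,d}$ packages exactly this, and your fallback sketch of how one would re-derive the uniform LSI is not needed because the paper never reruns that argument.
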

\Cref{thm:convergence-bound-right-kl-monmarche} improves the classical ELBO of \eqref{eq:score-matching-bound} and tightens the score matching gap. It requires the drifts of the backward equation to be one-sided Lipschitz (as in \Cref{thm:right-KL-time-dependent-lipschitz-constant}) and \Cref{ass:unified-dissipativity assumption} adds a dissipativity-like condition, which we call "one-point dissipativity".
Similar dissipativity conditions classically appear in the literature \cite{raginsky2017non,monmarche_time_uniform_2024}.

\section{Extending the Score Matching Gap to Alternative Topologies}
\label{sec:score-matching-other-metrics}

 We now investigate extensions of the score matching gap for the reverse KL and Wasserstein topologies. While losing the link with variational inference, this approach circumvents limitations of \Cref{sec:right-kl-bounds}, in particular, the reverse KL results of \Cref{sec:reversed-kl-bounds} avoid regularity assumptions on $s^\theta$. Alternatively, and inspired by the convergence analysis literature, Wasserstein distances studied in \Cref{sec:wasserstein-bounds} avoid the absolute continuity constraints of KL-based metrics.

\subsection{Reverse KL Bounds}
\label{sec:reversed-kl-bounds}

\change{Despite the fact that the KL divergence is not symmetric \citep{vanerven2014renyi}, the entropy flow toolbox is flexible enough to obtain a flow for the reverse divergence $\klb{\backmutheta_t}{\backmu_t}$. To state this result, we need an assumption, which is the reverse counterpart of \Cref{ass:locally-bounded-right-KL}.

\begin{assumption}
    \label{ass:locally-bounded-reversed-KL}

    For any $t\in(0,T)$, $\fisher{\backmutheta_t}{\backmu_t} < + \infty$, and
    the map $t\mapsto \EofLigne{\varepsilon_t^\theta(\xb^\theta_t)^2 + (1 + \normofLigne{\xb^\theta_t})^{-1} s(T - t, \xb^\theta_t)}$ is finite and integrable on $[0,T)$. 
\end{assumption}

We then have the following reverse KL bound. 

\begin{lemma}
    \label{prop:entropy-flow-reversed-KL}
    Suppose that \Cref{ass:locally-bounded-reversed-KL} holds.
    We have, for all $t \in (0, T)$,
    \begin{align*}
        \frac{\der}{\der t} \klb{\backmutheta_t}{\backmu_t} \leq -\frac1{2} \fisher{\backmutheta_t}{\backmu_t} + \frac{1}{2}\EofLigne{\varepsilon_{T - t}^\theta (\xb_{t}^\theta)^2 } \eqsp.
    \end{align*}
\end{lemma}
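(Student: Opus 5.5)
We compute the time derivative of $\klb{\backmutheta_t}{\backmu_t}$ from the two Fokker--Planck equations of \Cref{sec:fokker-planck}, in the same way as for \Cref{prop:entropy-flow-right-kl} but with the roles of the two processes exchanged. Write $q_t := \backptheta_t$ and $p_t := \backp_t$, with drifts $b^\theta_t(x) := -x + s^\theta(T-t,x)$ and $b_t(x) := -x + s(T-t,x)$, where $s(t,x) = 2\nabla\log\Tilde p_t(x)$. Both processes have the same diffusion coefficient $\sqrt2$, so
\begin{align*}
\partial_t q_t = -\nabla\cdot(b^\theta_t q_t) + \Delta q_t\eqsp, \qquad \partial_t p_t = -\nabla\cdot(b_t p_t) + \Delta p_t\eqsp.
\end{align*}
Set $h_t := \log(q_t/p_t)$. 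Then
\begin{align*}
\tfrac{\der}{\der t}\klb{\backmutheta_t}{\backmu_t} = \int \partial_t q_t\, h_t + \int \partial_t q_t - \int \frac{q_t}{p_t}\partial_t p_t\eqsp,
\end{align*}
and the middle term vanishes by mass conservation.

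We then integrate by parts. This uses \Cref{ass:locally-bounded-reversed-KL} to make the boundary terms vanish and to justify differentiating under the integral, via the finite Fisher information and the growth control $(1+\normofLigne{x})^{-1}s$. We also assume $\Crm^{1,2}$ regularity as in \Cref{rq:regularity-of-fokker-planck}, and localize with cutoff functions if needed. After these integrations by parts, the standard computation gives
\begin{align*}
\tfrac{\der}{\der t}\klb{\backmutheta_t}{\backmu_t} = -\int \normof{\nabla h_t}^2 q_t + \int \langle b^\theta_t - b_t, \nabla h_t\rangle q_t\eqsp.
\end{align*}
To see this, the first term contributes $\int \langle b^\theta_t q_t - \nabla q_t, \nabla h_t\rangle$, and the third contributes $-\int \langle b_t p_t - \nabla p_t, \nabla(q_t/p_t)\rangle$. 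Using $\nabla(q_t/p_t) = (q_t/p_t)\nabla h_t$, these combine into $\int q_t\langle b^\theta_t - b_t - \nabla h_t, \nabla h_t\rangle$.

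The drift difference is $b^\theta_t - b_t = s^\theta(T-t,\cdot) - s(T-t,\cdot)$, whose norm is $\varepsilon^\theta_{T-t}$. By Young's inequality, $\langle u, v\rangle \le \frac12\normof{u}^2 + \frac12\normof{v}^2$, so
\begin{align*}
\tfrac{\der}{\der t}\klb{\backmutheta_t}{\backmu_t} \le -\tfrac12\fisher{\backmutheta_t}{\backmu_t} + \tfrac12\EofLigne{\varepsilon_{T-t}^\theta(\xb^\theta_t)^2}\eqsp.
\end{align*}
This is exactly the claim, with the expectation taken under $\xb^\theta_t \sim q_t$.

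The main obstacle is making the integrations by parts rigorous. Unlike \Cref{prop:entropy-flow-right-kl}, here $p_t$ is the true backward density, and it has the bounded derivatives recalled in \Cref{sec:technical-lemmas}. The concern is instead the weight $q_t/p_t$ together with the true score $s$ integrated against $q_t$; the integrability assumption on $(1+\normofLigne{\xb^\theta_t})^{-1}s(T-t,\xb^\theta_t)$ is designed to control precisely this. I would first prove the identity with the integrand multiplied by a smooth cutoff $\chi_R$, bound the cutoff error terms by $R^{-1}\int(1+\normof{x})\normof{\nabla\chi}\cdots$ using this assumption, and let $R\to\infty$ by dominated convergence.
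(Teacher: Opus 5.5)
Your proposal is correct and follows essentially the same route as the paper. You differentiate $\klb{\backmutheta_t}{\backmu_t}$ along the two Fokker--Planck equations, integrate by parts to reach $-\fisher{\backmutheta_t}{\backmu_t} + \int q_t \langle b^\theta_t - b_t, \nabla h_t\rangle$, and close with Young's inequality at weight $1/2$. Your error term is an expectation under $q_t = \backptheta_t$ (i.e.\ under $\xb^\theta_t$); this is the right one, matches the statement, and is what the paper's preceding identity yields, since the paper's final display integrating against $\backp_t$ appears to be a typo. Your cutoff sketch also justifies steps that the paper's purely formal computation leaves implicit.
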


Similarly, \citet{lee2022convergencePolynomial,lee2022convergencescorebasedgenerativemodeling} analyzed the time-derivative of the reverse $\chi^2$-divergence $\chisquared{\backmutheta_t}{\backmu_t}$ to get TV and Wasserstein convergence bounds. \Cref{prop:entropy-flow-reversed-KL} may be seen as a generalization of their technique to forward and reverse KL divergences.
Beyond yielding a stronger result, our approach provides explicit decay rates (see \Cref{thm:convergence-guarantee-continuous-time-reversed}), which are hard to obtain in the $\chi^2$ case. }

The results of \Cref{sec:right-kl-bounds} involve estimating log-Sobolev properties of $\backmutheta_T$, which requires regularity assumptions on $s^{\theta}$.
Interestingly, we can avoid such an assumption by considering the reverse KL divergence $\klb{\backmutheta_{T}}{\mu}$ which already appeared in the message passing literature \cite{Minka2005DivergenceMA} or in relation to Rényi variational inference \cite{li_renyi_2016}. To do so, we involve instead a log-Sobolev property for $\mu$, as detailed below.

\begin{theorem}
    \label{thm:convergence-guarantee-continuous-time-reversed}
    Let $\rho_0 > 0$. Suppose that the conditions of \Cref{prop:entropy-flow-reversed-KL} hold and that $\mu$ satisfies the $\rho_0$-LSI.
    Then,
    \begin{align*}
        \klb{\backmutheta_{T}}{\mu} \leq \bar{\mathrm{K}}_T + \int_0^{T} \frac{\bar{\lambda}(t)}{2} \EofLigne{\varepsilon_{t}^\theta(\xb_{T - t}^\theta)^2}  \der t \eqsp,
    \end{align*}
    with $\bar{\mathrm{K}}_T :=  \bar{\lambda}(T) \klb{\gamma^d}{\muf_T}$, and, for  $t \in [0,T]$,
    \begin{align}
        \label{eq:decay-formula-reversed-KL}
        \bar{\lambda}(t) := \sqrt{\frac{\rho_0}{\rho_0 + e^{2t} - 1}} \leq e^{-t / \max(1,\rho_0)}  \eqsp.
    \end{align}
\end{theorem}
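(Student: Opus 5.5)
The plan is to combine \Cref{prop:entropy-flow-reversed-KL} with an LSI for the true backward marginals $\backmu_t = \muf_{T-t}$, and then integrate the resulting differential inequality with Grönwall's lemma. The LSI constant of $\muf_s$ is standard to estimate. Since $\xf_s \overset{d}{=} e^{-s}\xf_0 + \sqrt{1-e^{-2s}}\, Z$ with $Z\sim\gamma^d$ independent, the law $\muf_s$ is the convolution of a rescaling of $\mu$ with a scaled Gaussian. Two facts are needed: a Lipschitz pushforward $x\mapsto e^{-s}x$ of a $\rho_0$-LSI measure satisfies the $e^{-2s}\rho_0$-LSI, and LSI constants are subadditive under convolution. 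Together they give that $\muf_s$ satisfies the $\rho_s$-LSI with
\[
\rho_s := e^{-2s}\rho_0 + 1 - e^{-2s} = e^{-2s}\left(\rho_0 + e^{2s}-1\right).
\]
This constant interpolates between $\rho_0$ at $s=0$ and $1$ as $s\to\infty$.

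Set $\kappa(t) := \klb{\backmutheta_t}{\backmu_t}$. Applying the $\rho_{T-t}$-LSI of $\backmu_t$ in the form $2\kappa(t) \leq \rho_{T-t}\,\fisher{\backmutheta_t}{\backmu_t}$ inside \Cref{prop:entropy-flow-reversed-KL} gives
\[
\kappa'(t) \leq -\frac{1}{\rho_{T-t}}\,\kappa(t) + \frac12 \EofLigne{\varepsilon_{T-t}^\theta(\xb_t^\theta)^2}.
\]
By Grönwall on $(0,T)$, with a limiting argument at the endpoints justified by the integrability in \Cref{ass:locally-bounded-reversed-KL} and lower semicontinuity of KL as $t\to T$ (where $\backmu_t\to\mu$), we obtain
\[
\kappa(T) \leq e^{-\int_0^T \frac{\der u}{\rho_{T-u}}}\,\kappa(0) + \frac12\int_0^T e^{-\int_t^T\frac{\der u}{\rho_{T-u}}}\,\EofLigne{\varepsilon_{T-t}^\theta(\xb_t^\theta)^2}\,\der t .
\]
Here $\kappa(0)=\klb{\gamma^d}{\muf_T}$. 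Substituting $s=T-u$ and then relabeling $t\leftrightarrow T-t$ in the integral produces exactly the form of the statement, with weight $\bar\lambda(t) = \exp\{-\int_0^t \der s/\rho_s\}$.

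It remains to compute this integral explicitly:
\[
\int_0^t \frac{\der s}{\rho_s} = \int_0^t \frac{e^{2s}}{\rho_0 + e^{2s}-1}\der s = \frac12\log\frac{\rho_0+e^{2t}-1}{\rho_0},
\]
so that $\bar\lambda(t) = \sqrt{\rho_0/(\rho_0+e^{2t}-1)}$, as claimed. For the final inequality, note that $\rho_s\leq \max(1,\rho_0)$ because $\rho_s$ is a convex combination of $\rho_0$ and $1$. Hence $\int_0^t \der s/\rho_s \geq t/\max(1,\rho_0)$, which gives $\bar\lambda(t)\le e^{-t/\max(1,\rho_0)}$.

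The main obstacle is the LSI estimate for $\muf_s$, which requires the convolution stability of LSI (e.g.\ via the Bakry--Émery / tensorization-plus-projection argument: the product measure satisfies LSI with the max constant of the rescaled factors, but the sharp additive constant $e^{-2s}\rho_0 + (1-e^{-2s})$ needs the Cauchy--Schwarz splitting of the gradient along the sum). I would cite the standard result (Chafaï; Bakry–Gentil–Ledoux) that if $X$ satisfies $a$-LSI and $Y$ satisfies $b$-LSI independently then $X+Y$ satisfies $(a+b)$-LSI. The other delicate point is the endpoint behavior near $t=T$, where $\backmu_t\to\mu$ may be singular; there I would integrate up to $T-\delta$ and let $\delta\to 0$ using weak convergence and lower semicontinuity of KL.
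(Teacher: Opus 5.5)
Your proposal is correct and follows the paper's proof essentially step for step. The common steps are the LSI constant $\rho_s=e^{-2s}\rho_0+1-e^{-2s}$ for $\muf_s$ via Lipschitz and convolution stability, inserting it into \Cref{prop:entropy-flow-reversed-KL}, Gr\"onwall with lower semicontinuity as $t\to T$, and the explicit integral of $1/\rho_s$; you also check $\bar\lambda(t)\le e^{-t/\max(1,\rho_0)}$, which the paper leaves implicit. The one step to sharpen is the left endpoint. There lower semicontinuity gives the wrong direction, and integrability alone does not suffice, so, as the paper does via Bogachev et al.\ plus data processing, you need an upper estimate of the form $\klb{\backmutheta_\delta}{\backmu_\delta}\le\klb{\gamma^d}{\muf_T}+\int_0^\delta\EofLigne{\varepsilon_{T-t}^\theta(\xb_t^\theta)^2}\der t$, after which \Cref{ass:locally-bounded-reversed-KL} makes the remainder vanish.
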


\change{
\begin{remark}
    Similar to \Cref{rq:upper-bounds-initialization-term} in the forward KL case, it follows from the joint convexity of the KL divergence that
    \begin{align*}
        \klb{\gamma^d}{\muf_T} \leq \frac1{2 \left( e^{2T} - 1 \right)} \wasserstein_2(\gamma^d, \mu)^2 \eqsp,
    \end{align*}
    when $\mu$ has finite moments of order $2$. The data processing inequality also implies that $\klb{\gamma^d}{\muf_T} \leq \klb{\gamma^d}{\mu} \eqsp$.
\end{remark}
}

\Cref{thm:convergence-guarantee-continuous-time-reversed} contains an explicit decay $\bar{\lambda}$ (see \Cref{fig:exponential-decay-reversed}) independent of $s^{\theta}$. \Cref{eq:decay-formula-reversed-KL} shows that this decay is at least exponential, making the bound time-uniform, \ie, of order $\sup_t \EofLigne{\varepsilon_t^\theta(\xb_{T - t}^\theta)}$. 
Such LSI assumptions have already been considered in the literature \citep{lee2022convergencePolynomial,lee2022convergencescorebasedgenerativemodeling}.
It allows us to estimate the LSI constant of $\backmu_t$ (see \Cref{sec:proof-reverse-KL-bounds}).

\change{
\begin{remark}
    The assumption that $\mu$ satisfies a LSI in \Cref{thm:convergence-guarantee-continuous-time-reversed} can be relaxed. Indeed, we only use it to estimate the log-Sobolev constant of $\muf_t$, which (up to rescaling) is a convolution of $\mu$ with a Gaussian distribution. 
    Recent results have estimated the LSI constant of such convolution-type measures under mild assumption on $\mu$, such as compact support \cite{zimmerman_logarithmic_2013,bardet_functional_2018} or subgaussian behavior \cite{wang_functional_2016}
     (at the cost of potentially dimension-dependent estimates).
     See also \cite{cattiaux_functional_2022} for recent results on this matter.
\end{remark}
}

\begin{figure}[t]
    \centering
    \includegraphics[width=0.7\linewidth]{figures/general/diffusion_exponential_decay_reversed.pdf}
    \caption{Value of the decay term $t \mapsto \lambda_T(t)$ appearing in \Cref{thm:convergence-guarantee-continuous-time-reversed} for different values of $\rho_0$.}
    \label{fig:exponential-decay-reversed}
\end{figure}

As mentioned earlier, such a modified score matching gap cannot be linked to a classical ELBO \cite{song2021maximum}, as minimizing the reverse KL divergence is not equivalent to maximizing the likelihood of the model.
Despite this, we argue that the reverse KL divergence $\klb{\backmutheta_T}{\mu}$ also translates a facet of the problem by providing new insights on the evaluation of DMs:
concentration properties of the data distribution can be exploited in the absence of exploitable regularity of the score network and we observe again that small noise scales are key to evaluate overfitting in DMs.
\Cref{thm:convergence-guarantee-continuous-time-reversed}  can also be seen as a reverse KL convergence bound for continuous-time DMs, which is new to the best of our knowledge.
Interestingly, the term $\EofLigne{\varepsilon_{ t}^\theta(\xb_{T - t}^\theta)^2}$ already appears in the literature on Wasserstein convergence bounds for DMs, but with a worse dependence on $T$ \citep{silveri2025beyond,bruno2025wasserstein}. 

Note that \Cref{thm:convergence-guarantee-continuous-time-reversed} involves the score approximation \wrt $\baX^\theta_t$, which cannot be estimated from samples. We take a first step towards a more classical score approximation term involving $X_t$ in \Cref{thm:reversed-kl-bound-change-of-measure} via Donsker-Varadhan's change of measure \cite{donsker_varadhan_1983}.

\begin{corollary}
    \label{thm:reversed-kl-bound-change-of-measure}
    In the setting of \Cref{thm:convergence-guarantee-continuous-time-reversed},
    \begin{align*}
        \klb{\backmutheta_{T}}{\mu} \leq& \Tilde{\mathrm{K}}_T + \int_0^{T} \frac{\Tilde{\lambda}_T(t )}{2 \rho_{t}}  \log \Eof{e^{ \rho_{t}\varepsilon_t^\theta(\xf_t)^2 }} \der t \eqsp,
    \end{align*}
    with $\rho_{t} = e^{-2t} \rho_0 + 1 - e^{-2t}$, $\Tilde{\mathrm{K}}_T = \Tilde{\lambda}_T(T) \klb{\gamma^d}{\mu} $ and
    \begin{align*}
        \Tilde{\lambda}_T(t)^4 :=  \frac{\rho_0}{\rho_0 + e^{2t} - 1} \eqsp.
    \end{align*}
\end{corollary}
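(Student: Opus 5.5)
The plan is to rerun the proof of \Cref{thm:convergence-guarantee-continuous-time-reversed}, but to switch the expectation from $\xb^\theta_{T-t}$ to $\xf_t$ at the level of the differential inequality, before integrating. As in that proof, $\backmu_t = \muf_{T-t}$ is a Gaussian convolution of a rescaled copy of $\mu$, so it satisfies the $\rho_{T-t}$-LSI with $\rho_s = e^{-2s}\rho_0 + 1 - e^{-2s}$. This comes from the stability of LSI under pushforward by $x\mapsto e^{-s}x$ and under convolution. I will take this as established in the proof of that theorem.

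The key step is Donsker--Varadhan. For any $\eta>0$ and writing $s = T-t$,
\[
\EofLigne{\varepsilon_{s}^\theta(\xb^\theta_t)^2} \leq \frac{1}{\eta}\Bigl( \klb{\backmutheta_t}{\backmu_t} + \log \Eof{e^{\eta \varepsilon_s^\theta(\xf_s)^2}} \Bigr),
\]
since $\xb_t \sim \muf_s$. Plugging this into \Cref{prop:entropy-flow-reversed-KL}, and using the LSI to get $-\tfrac12 \fisher{\backmutheta_t}{\backmu_t} \leq -\klb{\backmutheta_t}{\backmu_t}/\rho_s$, gives
\[
\frac{\der}{\der t} \klb{\backmutheta_t}{\backmu_t} \leq -\Bigl(\frac{1}{\rho_s} - \frac{1}{2\eta}\Bigr)\klb{\backmutheta_t}{\backmu_t} + \frac{1}{2\eta}\log \Eof{e^{\eta \varepsilon_s^\theta(\xf_s)^2}} .
\]
I would choose $\eta = \rho_s$. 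This leaves a contraction rate of $1/(2\rho_s)$, half the rate in \Cref{thm:convergence-guarantee-continuous-time-reversed}, which explains why the decay there is squared and here it is $\Tilde{\lambda}^4 = \bar{\lambda}^2$, so $\Tilde{\lambda} = \bar{\lambda}^{1/2}$. The source term is then $\frac{1}{2\rho_s}\log\Eof{e^{\rho_s\varepsilon_s^\theta(\xf_s)^2}}$, which matches the statement.

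Next comes Grönwall on $[0,T]$, starting from $\klb{\gamma^d}{\muf_T}$. The resulting weight is $\exp\{-\int_{t'}^{T} \frac{\der u}{2\rho_{T-u}}\}$ for the source evaluated at time $t'$. After the change of variable $t = T - t'$ it becomes $\exp\{-\int_0^{t} \frac{\der v}{2\rho_v}\}$. For this I need the explicit integral, already computed in the previous proof:
\[
\int_0^t \frac{\der v}{\rho_0 e^{-2v} + 1 - e^{-2v}} = \frac12 \log\frac{\rho_0 + e^{2t}-1}{\rho_0}.
\]
Half of that exponentiated gives exactly $\bigl(\rho_0/(\rho_0+e^{2t}-1)\bigr)^{1/4} = \Tilde{\lambda}_T(t)$.

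The initial term becomes $\Tilde{\lambda}_T(T)\klb{\gamma^d}{\muf_T}$. To reach the stated $\klb{\gamma^d}{\mu}$ I bound this by the data processing inequality, $\klb{\gamma^d}{\muf_T} \leq \klb{\gamma^d}{\mu}$, since $\gamma^d$ is invariant. Finally, $\backmutheta_T$ and $\backmu_T = \mu$ give the left-hand side.

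The main obstacle is justifying the Grönwall step near $t\to T$ (where $s\to 0$). This requires the exponential moments to be integrable and the map $t \mapsto \klb{\backmutheta_t}{\backmu_t}$ to be absolutely continuous. If the log-moment is infinite on a set, the bound holds trivially. Otherwise I would work on $[0,T-\delta]$ and let $\delta\to0$, using lower semicontinuity of KL as in the earlier theorem.
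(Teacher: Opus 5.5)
Your proposal is correct and matches the paper's proof step for step. Both arguments apply Donsker--Varadhan to the source term of the differential inequality with $C=2\rho_{T-t}$ (your $\eta=\rho_{T-t}$), run Gr\"onwall with the halved rate $1/(2\rho_{T-t})$, and reuse the limiting arguments of \Cref{thm:convergence-guarantee-continuous-time-reversed}. Your only addition is to state explicitly the data-processing step $\klb{\gamma^d}{\muf_T}\leq\klb{\gamma^d}{\mu}$, which the paper's proof leaves implicit.
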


\subsection{Wasserstein Bounds}
\label{sec:wasserstein-bounds}

\textbf{Wasserstein bound via concentration of measure.} A corollary of \Cref{thm:convergence-guarantee-continuous-time-reversed} is a new bound in the $\wasserstein_2$ distance. Indeed, it has been shown by \citet{otto_generalization_2000,bobkov_hypercontractivity_2001} that the LSI with constant $\rho_0$ implies the Talagrand inequality $\rho_0 \wasserstein_2 (\mu, \nu) \leq \klb{\nu}{\mu}$, for all $\nu \in \pcal(\Rd)$. Combined with \Cref{thm:convergence-guarantee-continuous-time-reversed}, this proves that
\begin{align}
    \label{eq:wasserstein-bound-via-talagrand}
    \frac{\wasserstein_2(\mu,\backmutheta_T)^2}{\rho_0} \leq  &\bar{\mathrm{K}}_T + \int_0^{T} \frac{\bar{\lambda}(t)}{2} \EofLigne{\varepsilon_{ t}^\theta(\xb_{T - t}^\theta)^2}  \der t \eqsp.
\end{align}
 This is of order $\mathcal{O}((\rho_0^2 \sup_t \EofLigne{\varepsilon_{ t}^\theta(\xb_{T - t}^\theta)^2})^{1/2})$, thus time-uniform. However, this term still cannot be estimated from samples. In the rest of this section, we alleviate this issue by exploiting the reflection coupling technique.

\textbf{Wasserstein Bounds via reflection couplings.} Analogously to the KL case, we exploit Wasserstein contraction properties. Historically, exponential contraction in $\wasserstein_p$ ($p\in[1,+\infty)$) were obtained for some Langevin SDEs \citep{bolley_convergence_2012,von_renesse_transport_2005} via synchronous couplings. Major progress was achieved by \citet{Eberle2016Reflection}, using \emph{reflection couplings} \cite{lindvall_coupling_1986} to obtain Wasserstein contraction rates under more generic assumptions (see \Cref{sec:eberle-result}).
Inspired by these works, we rely on a time-dependent dissipativity at distance condition \change{as defined in \Cref{ass:unified-dissipativity assumption}. Let us introduce the notation
\begin{align*}
    \kappa_b(r) := \inf_{\normof{x - y} = r} \frac{\langle x - y, b(y) - b(x) \rangle}{\normof{x - y}^2} \eqsp.
\end{align*}
} 

We observe that the $(L,K,R/2)$-dissipativity at distance condition of \Cref{ass:unified-dissipativity assumption} implies that
\begin{align*}
    \kappa_b (r) \geq \begin{cases}
        -L, & r\leq R, \\
        K, & r \geq R \eqsp.
    \end{cases}
\end{align*}
This is instrumental in our proofs, where it is used to control the distance between the two coupled SDEs (see \Cref{sec:proofs-wasserstein-bounds}).

To obtain Wasserstein contraction, a key insight from \citep{Eberle2016Reflection} is to analyze the quantity $f(r_t)$, where $r_t$ is the distance between the coupled processes and $f:\R_+ \to \R_+$ is a function constructed in the following lemma.

\begin{lemma}[\cite{Eberle2016Reflection}]
    \label{lemma:distance-function-lemma}
    Let $b:\Rd\to\Rd$ satisfying $(L,K,R/2)$-dissipativity at distance.
    Then there exists an increasing concave function $f := f_{L,R,K} :\R_+ \to \R_+$ and $c := c_{L,R,K} > 0$ such that $4f''(r) - r \kappa_b(r) f'(r) \leq -c$, for all $r>0$. 
    Moreover, we can take
    \begin{align*}
        \frac{2}{c} \leq  \int_0^{R} ( r \wedge {\frac{\sqrt{2\pi}}{\sqrt{L}}} ) e^{\frac{Lr^2}{8}}  \der r + \frac{4}{K} + \sqrt{\frac{8}{K}} R e^{\frac{LR^2}{8}} \eqsp.
    \end{align*}
\end{lemma}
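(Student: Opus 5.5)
The plan is to reproduce the construction of \citet{Eberle2016Reflection}, adapted to the noise scaling of our coupling. The factor $4$ in front of $f''$ comes from the reflection coupling of two copies of an SDE with diffusion coefficient $\sqrt{2}$: the distance process has martingale part $2\sqrt{2}\,\der W_t$. Throughout I replace $\kappa_b$ by its step lower bound $\bar\kappa(r) = -L$ for $r < R$ and $\bar\kappa(r) = K$ for $r \geq R$. This is legitimate because whenever $f' \geq 0$ we have $-r\kappa_b(r) f'(r) \leq -r\bar\kappa(r) f'(r)$, so it suffices to prove $4f'' - r\bar\kappa f' \leq -c$.

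\textbf{Construction.} I look for $f(r) = \int_0^r \varphi(s) g(s) \der s$ with $\varphi > 0$ decreasing and $g$ nonincreasing with $g \in [1/2, 1]$. Then $f' = \varphi g > 0$, so $f$ is increasing, and $f'' = \varphi' g + \varphi g' \leq 0$, so $f$ is concave. I choose $\varphi$ to absorb the bad drift on $[0,R]$, namely
\[
\varphi(r) = \exp\Bigl(-\tfrac14 \int_0^r s\,\bar\kappa(s)^- \der s\Bigr),
\]
which gives $\varphi(r) = e^{-Lr^2/8}$ on $[0,R]$ and $\varphi \equiv e^{-LR^2/8}$ beyond. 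On $[0,R]$ we have $4\varphi' = -Lr\varphi$, which cancels exactly the term $-r\bar\kappa f' = Lr\varphi g$. Hence $4f'' - r\bar\kappa f' = 4\varphi g'$ there, and it suffices that $g' \leq -c/(4\varphi)$.

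Following Eberle, I take $g' = -\frac{c}{4}\Phi/\varphi$ with a weight $\Phi$ comparable to $r \wedge \sqrt{2\pi/L}$; the natural candidate is $\Phi(r) = \int_0^r \varphi$, which is bounded by both $r$ and $\sqrt{2\pi/L}$. The weight is normalized so that it still dominates what is needed for the inequality, and I will adjust constants to make $4\varphi g' \leq -c$ hold. This produces the first term $\int_0^R (r \wedge \sqrt{2\pi/L})\, e^{Lr^2/8} \der r$ in the bound on $2/c$.

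\textbf{Region $r \geq R$.} Here $\varphi$ is constant, so $f'' = \varphi g'$ and the condition becomes $4\varphi g' - rK\varphi g \leq -c$. Since $g \geq 1/2$, the drift term alone gives $-rK\varphi g \leq -rK e^{-LR^2/8}/2$. This is $\leq -c$ as soon as $r \geq R_1 := 2c\, e^{LR^2/8}/K$; beyond $R_1$ I set $g$ constant. On the transition layer $[R, R_1]$ I keep decreasing $g$ linearly, at rate $c/(4\varphi)$, exactly as on $[0,R]$. Its total decrease there is controlled by the length $R_1 - R$ times $e^{LR^2/8}$. Alternatively, following Eberle, one can use a layer of width of order $\sqrt{8/K}$ on which the $K$-drift term helps, and this is what yields the terms $4/K + \sqrt{8/K}\, R\, e^{LR^2/8}$.

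\textbf{Choice of $c$ and main obstacle.} The remaining constraint is $g \geq 1/2$ at the end of the construction, i.e. the total decrease $1 - g(R_1) \leq 1/2$. Since each piece of the decrease is proportional to $c$, this is equivalent to $2/c$ dominating the sum of the weighted integrals, which gives precisely the stated bound. I expect the main obstacle to be the transition region beyond $R$. There $R_1$ itself depends on $c$, so the constraint on $c$ is implicit, and one must choose the layer width and the profile of $g$ carefully to obtain the clean additive bound rather than a circular one.
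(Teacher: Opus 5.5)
You have a genuine gap, and it starts with the inequality you aim for. With your choice $g'=-\frac c4\,\Phi/\varphi$ on $[0,R]$, the cancellation you describe gives exactly $4f''-r\bar\kappa f'=4\varphi g'=-c\,\Phi(r)$. Since $\Phi(r)\le r$, this is not $\le -c$ for $r<1$. No choice of constants repairs this, because $-c\Phi(r)$ vanishes at $r=0$.

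If instead you impose $g'\le -c/(4\varphi)$, the decrease of $g$ on $[0,R]$ is $\frac c4\int_0^R e^{Lr^2/8}\,\mathrm{d}r$. This is what you actually use on $[R,R_1]$ ("exactly as on $[0,R]$"), which contradicts your own choice on $[0,R]$. With that weight, the first term of the bound becomes $\int_0^R e^{Lr^2/8}\,\mathrm{d}r$ instead of $\int_0^R (r\wedge\sqrt{2\pi/L})\,e^{Lr^2/8}\,\mathrm{d}r$. So the proposal switches between two incompatible weights.

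The way out is that the inequality carrying the stated bound is $4f''-r\kappa_b f'\le -c\,f(r)$. This is the functional inequality of \Cref{sec:eberle-result} multiplied by $4$. It is also the form used in the Wasserstein proof, as $\beta_t\le -c_{T-t}f_t(r_t)$. The printed ``$\le -c$'' is not invariant under $f\mapsto\lambda f$, which multiplies $c$ by $\lambda$, so a quantitative bound on $c$ only has content for the $-c\,f(r)$ version. For that target the weight $\Phi$ is exactly right: $g\le 1$ gives $f\le\Phi$, hence $-c\Phi(r)\le -c f(r)$.

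The second gap is the region beyond $R$, which you leave open and which becomes circular with $R_1:=2c\,e^{LR^2/8}/K$. The missing idea, which is Eberle's and the paper's, has four parts:
\begin{itemize}
\item Fix $R_1$ independently of $c$ by $KR_1(R_1-R)=8$.
\item Keep $g'=-\frac c4\Phi/\varphi$ on all of $(0,R_1)$.
\item Freeze $g$ afterwards.
\item Define $c$ by $g(R_1)=\frac12$, i.e.\ $\frac 2c=\int_0^{R_1}\Phi/\varphi$.
\end{itemize}

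On $[R,R_1]$ the drift term is nonpositive, so the left-hand side is at most $-c\Phi\le -cf$. On $[R_1,\infty)$ one has $f''=0$ and $f'=\frac12e^{-LR^2/8}$, so you need $c\,f(r)\le\frac{rK}2e^{-LR^2/8}$. This follows from $f\le\Phi$ and the concavity of $\Phi$ (with $\Phi(0)=0$), which give two facts:
\begin{itemize}
\item $\Phi(r)\le\frac r{R_1}\Phi(R_1)$ for $r\ge R_1$;
\item $\frac2c\ge e^{LR^2/8}\int_R^{R_1}\Phi\ge e^{LR^2/8}\frac{R_1-R}2\Phi(R_1)$.
\end{itemize}
Combining them, $c\Phi(r)\le\frac{4r\,e^{-LR^2/8}}{R_1(R_1-R)}=\frac{rK}2e^{-LR^2/8}$.

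The bound on $2/c$ is then a direct computation. On $[0,R]$, use $\Phi(r)\le r\wedge\sqrt{2\pi/L}$ and $1/\varphi=e^{Lr^2/8}$. On $[R,R_1]$, $\Phi(s)/\varphi(s)=\Phi(R)e^{LR^2/8}+(s-R)$ integrates to at most $R\,e^{LR^2/8}(R_1-R)+\frac12(R_1-R)^2$. This is at most $\sqrt{8/K}\,R\,e^{LR^2/8}+\frac4K$, since $(R_1-R)^2\le R_1(R_1-R)=8/K$.
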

Additional details are gathered in \Cref{sec:eberle-result}.
\citet{Eberle2016Reflection} showed that contraction in the distance $\wasserstein_f$ induced by the distance $(x,y) \mapsto f(\normof{x -  y})$ requires the existence of $f$ as given by \Cref{lemma:distance-function-lemma}. For this reason, we focus in our main result on $\wasserstein_f(\mu, \backmutheta_T)$. Note however that $\wasserstein_f$ distance is equivalent to $\wasserstein_1$. Indeed, we prove in \Cref{sec:proofs-wasserstein-bounds} that $\wasserstein_1 \leq \wasserstein_f \leq 2 e^{\frac{LR^2}{8}} \wasserstein_1$, with $f$ given by \Cref{lemma:distance-function-lemma},

\begin{figure}[t]
    \centering
    \includegraphics[width=.41\linewidth]{figures/experiments/toy_kl_curve.pdf}%
    \hfill
    \includegraphics[width=.56\linewidth]{figures/experiments/toy_sm_curve_time.pdf}
    \caption{KL divergence and test score matching loss over training, with the SM loss split by forward-time interval. Earlier (low-noise) intervals track the KL divergence trajectory most closely.}
    \vspace{-0.2cm}
    \label{fig:toy_setting}
\end{figure}

Let us recall the notations $b_t(x) := s(t,x) - x$ and $b^\theta_t(x) := s^\theta(t,x) - x$, with $s(t,x) := 2\nabla \log \Tilde{p}_t(x)$.
Since we wish to derive contractions for two SDEs with different drifts, we must make a technical but important modification to the method of \citet{Eberle2016Reflection}.
In the present case, borrowing the methodology of \cite{durmus_elmentary_2020}, we couple the processes using a reflection coupling at distance and with a synchronous coupling when close. All details are rigorously stated in \Cref{sec:proofs-wasserstein-bounds}. This leads to the following theorem.

\begin{theorem}
\label{thm:wasserstein-bound-using-reflection}
Suppose that there is a $\Crm^1$ non-increasing function, \(L_t: [0, T] \to \R_+\), such that $b^\theta_t$ satisfies $(L,K, R_t/2)$-dissipativity at distance,
and $t \mapsto \EofLigne{\varepsilon_{t}^\theta(\xf_t)}$ is integrable on $[0,T]$.
Let $f := f_{R, L_0, K}$, we have,
\begin{equation*}
    \wasserstein_f(\mu, \backmutheta_T) \leq e^{-T} C_T \wasserstein_1(\mu, \gamma^d) + \int_0^T C_T(t) \EofLigne{\varepsilon_{t}^\theta(\xf_t)}\der t,
\end{equation*}
with $c_t := c_{R, L_t, K}$ and
\begin{align*}
    C_T(t) := \exp \left\{ -\int_0^t c_{s} \der s \right\} \eqsp.
\end{align*}
\end{theorem}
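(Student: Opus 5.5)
We couple the true backward process $\xb_t$, started from $\muf_T$ at time $0$, with the model backward process $\xb^\theta_t$, started from $\gamma^d$, so that both are driven by the same Brownian motion up to a reflection. Following \cite{durmus_elmentary_2020}, we use reflection coupling when $r_t := \normof{\xb_t - \xb^\theta_t} \geq \delta$ and synchronous coupling when $r_t < \delta$, and we let $\delta \to 0$ at the end. Write $\xb_t$ with drift $b_{T-t}(\xb_t) = b^\theta_{T-t}(\xb_t) + (s - s^\theta)(T-t,\xb_t)$. This presents the true process as the model drift plus a perturbation whose norm is $\varepsilon^\theta_{T-t}(\xb_t)$. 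The contraction is then driven by $b^\theta$, whose dissipativity is assumed. The perturbation enters linearly, which is why the bound involves $\EofLigne{\varepsilon}$ rather than its square.

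\textbf{Itô computation.} Let $e_t := (\xb_t - \xb^\theta_t)/r_t$ and apply Itô's formula to $f_t(r_t)$, with $f_t := f_{R, L_{T-t}, K}$ the function of \Cref{lemma:distance-function-lemma} built for the time-$(T-t)$ constants. In the reflection regime the noise of $r_t$ is $2\sqrt{2}\,\der W_t$ for a one-dimensional Brownian motion $W_t$, which produces the generator term $4 f_t''$. The drift contributes
$$\langle e_t, b^\theta_{T-t}(\xb_t) - b^\theta_{T-t}(\xb^\theta_t)\rangle + \langle e_t, (s-s^\theta)(T-t,\xb_t)\rangle \leq -r_t\kappa_{b^\theta_{T-t}}(r_t) + \varepsilon^\theta_{T-t}(\xb_t).$$
Since $f_t$ is concave with $f_t' \leq 1$ (normalised by $f'(0)=1$), we get
$$\der f_t(r_t) \leq \bigl(4f_t'' - r\kappa f_t' + \partial_t f_t\bigr)(r_t)\,\der t + \varepsilon^\theta_{T-t}(\xb_t)\,\der t + \der M_t \leq \bigl(-c_{T-t} f_t(r_t) + \varepsilon^\theta_{T-t}(\xb_t)\bigr)\,\der t + \der M_t.$$
Here we use the lemma's inequality in the sharper form $\leq -c f$, which is what Eberle's construction actually provides. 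In the synchronous regime there is no second-order term and the drift is bounded by $L\,r_t + \varepsilon$ on $\{r_t < \delta\}$, so that regime contributes only $O(\delta)$.

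\textbf{Time dependence of $f_t$ (main obstacle).} The term $\partial_t f_t(r_t)$ needs a sign. Since $L_t$ is non-increasing, $L_{T-t}$ is non-decreasing in $t$, so I will aim to show $f_{R,L,K}(r)$ is monotone in $L$ in the helpful direction. Eberle's $f$ is built from $\varphi(r) = \exp(-\int_0^r \frac{L}{4}s\,\der s)$ for $s\leq R$. If the monotonicity fails, the fallback is to switch the order of time: run the distance function with $L_t$ evaluated so that the $\Crm^1$ hypothesis lets $\partial_t f$ be absorbed into $-c f$, at the cost of slightly worse constants. A related issue is that the theorem states $f = f_{R,L_0,K}$, so the two endpoints have to be compared with care: $\wasserstein_f$ at the final time uses $L_0$, since $T-t = 0$ at $t = T$. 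If necessary, equivalence of the $f_t$ can reconcile them.

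\textbf{Grönwall and conclusion.} Take expectations, kill the martingale with a localisation argument, and apply Grönwall to $u(t) := \EofLigne{f_t(r_t)}$:
$$u(T) \leq e^{-\int_0^T c_{T-s}\,\der s}\,u(0) + \int_0^T e^{-\int_t^T c_{T-s}\,\der s}\,\EofLigne{\varepsilon^\theta_{T-t}(\xf_{T-t})}\,\der t.$$
Changing variables $t \mapsto T-t$ gives exactly the weights $C_T(t) = \exp\{-\int_0^t c_s\,\der s\}$. For the initial term, choose the initial coupling optimal for $\wasserstein_1(\muf_T, \gamma^d)$ and use $f \leq \mathrm{id}$. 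Since the OU semigroup is a $\wasserstein_1$ contraction with rate $e^{-T}$, we get $\wasserstein_1(\muf_T, \gamma^d) \leq e^{-T}\wasserstein_1(\mu, \gamma^d)$, with $\gamma^d$ invariant. This yields the term $e^{-T} C_T\, \wasserstein_1(\mu,\gamma^d)$. Finally, the law of $(\xb_T, \xb^\theta_T)$ is a coupling of $\mu$ and $\backmutheta_T$, so it bounds $\wasserstein_f(\mu, \backmutheta_T)$.
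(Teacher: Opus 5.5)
Apart from one step, your plan is the paper's proof. It uses the same reflection-at-distance / synchronous-when-close coupling, and the same splitting of the drift into $b^\theta_{T-t}(\xb_t)-b^\theta_{T-t}(\xb^\theta_t)$ plus $(s-s^\theta)(T-t,\xb_t)$, controlled with $f_t'\le 1$. It also shares the time-dependent distance $f_t=f_{R,L_{T-t},K}$ with $4f_t''-r\kappa f_t'\le -c_{T-t}f_t$, Gr\"onwall, a $\wasserstein_1$-optimal initial coupling with $f\le \mathrm{id}$, and the synchronous OU contraction. Your handling of $\{r_t<\delta\}$ through $f''\le 0$ and $-r\kappa(r)\le L r$ is a slightly more elementary substitute for the paper's bound $|f''(r)|\le C r$ near $0$, and it works.

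The step you leave open, $\partial_t f_t\le 0$, is a genuine missing piece. Since $L_{T-t}$ is non-decreasing in $t$, it amounts to $L\mapsto f_{R,L,K}(r)$ being non-increasing for every $r$, which the paper states as a separate lemma. Your fallback of absorbing $\partial_t f_t$ into $-c_{T-t}f_t$ would lower the rate below $c_{T-t}$. It therefore cannot give the stated weights $C_T(t)=\exp\{-\int_0^t c_s\,\der s\}$.

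The monotonicity is true, but you cannot get it factor by factor.
\begin{itemize}
\item Put $h:=\Phi/\varphi$. Then $f'=\varphi g$ with $g(r)=1-\tfrac12\int_0^{r\wedge R_1}h\big/\int_0^{R_1}h$, where $R_1=R/2+\sqrt{R^2/4+8/K}$ does not depend on $L$.
\item Although $\varphi$ decreases in $L$, $g$ does not. Near $r=0$ one has $\int_0^r h=r^2/2+O(Lr^4)$, while $\int_0^{R_1}h$ strictly increases with $L$, so $g$ increases with $L$ for small $r$. This is where the paper's own argument for that lemma, which tries to show $g$ is non-increasing in $L$, breaks down.
\item Instead, show $\partial_L(\varphi g)=\varphi\,\bigl(\partial_L g-\tfrac18(s\wedge R)^2 g\bigr)\le 0$ pointwise.
\item Since $\partial_L\log\varphi(u)=-\tfrac18(u\wedge R)^2$ and $\partial_L\log\Phi(u)$ is its $\varphi$-weighted average over $[0,u]$, you get $0\le \partial_L\log h\le R^2/8$.
\item The logarithmic $L$-derivative of $\int_0^a h$ is the $h$-weighted average of $\partial_L\log h$ over $[0,a]$. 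Differentiating the ratio in $g$ therefore gives $\partial_L g(s)\le \tfrac{R^2}{16}\int_0^{s\wedge R_1}h\big/\int_0^{R_1}h$.
\item On $[0,R]$ the ratio $h(u)/u=\int_0^1 e^{Lu^2(1-w^2)/8}\,\der w$ is non-decreasing. This yields $\int_0^s h\le (s/R)^2\int_0^R h$ for $s\le R$.
\item Hence $\partial_L g(s)\le (s\wedge R)^2/16\le \tfrac18 (s\wedge R)^2 g(s)$, because $g\ge \tfrac12$.
\end{itemize}
Integrating $\partial_L(\varphi g)\le 0$ over $[0,r]$ gives $\partial_L f_{R,L,K}(r)\le 0$, hence $\partial_t f_t\le 0$. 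The rest of your argument then goes through as written.
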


The main advantage of this result compared to \Cref{eq:wasserstein-bound-via-talagrand} is that it directly involves the forward process $\xf_t$, which can be estimated from samples as in the forward case.
Similar to \Cref{sec:right-kl-bounds,sec:reversed-kl-bounds}, we observe that the $C_T$ factor comes from the regularity assumption on $s^\theta$.
As above, we observe that $C_T$ is a decreasing function of $t$, showing that the score approximation is more critical to the performance for small noise scales.

Using the \Cref{lemma:wasserstein-equivalence} in appendix, we deduce the following corollary, which is a bound in the distance $\wasserstein_1$.

\begin{corollary}
    \label{cor:wasserstein-1-bound}
    Under the same conditions as \Cref{thm:wasserstein-bound-using-reflection},
\begin{equation*}
    \frac{\wasserstein_1(\mu, \backmutheta_T)}{2} \leq e^{\frac{L_0 R^2}{8}} \bigg ( \mathrm{W}_T + \int_0^T C_T (t) \EofLigne{\varepsilon_{t}^\theta(\xf_t)}\der t \bigg ) \eqsp.
\end{equation*}
\end{corollary}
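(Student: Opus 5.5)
The plan is to derive \Cref{cor:wasserstein-1-bound} from \Cref{thm:wasserstein-bound-using-reflection} by comparing $\wasserstein_1$ and $\wasserstein_f$ through the two-sided estimate $\wasserstein_1 \leq \wasserstein_f \leq 2 e^{\frac{L_0R^2}{8}} \wasserstein_1$ (\Cref{lemma:wasserstein-equivalence}), with $f = f_{R,L_0,K}$ from \Cref{lemma:distance-function-lemma}. The argument has three steps.

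\textbf{Step 1: pointwise comparison of $f$ with the identity.} Recall the Eberle construction: $f(r)=\int_0^r \varphi(s) g(s)\,\der s$, where $\varphi(s) = \exp(-\frac{1}{4}\int_0^s u\,\kappa^-(u)\,\der u)$, $\kappa^-$ is the negative part of $\kappa_b$, and $g$ takes values in $[1/2,1]$. Since $\kappa_b \geq -L_0$ on $[0,R]$, we get $\varphi(s) \geq e^{-L_0 R^2/8}$ for all $s$; for $s \geq R$, $\varphi$ is constant because $\kappa_b \geq K > 0$ there. Consequently $f'$ is bounded between two positive constants whose ratio is at most $2e^{L_0R^2/8}$.

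\textbf{Step 2: normalize $f$ and pass to couplings.} Depending on the normalization of $f$ adopted in the appendix, we either have $r \leq f(r) \leq 2e^{L_0R^2/8} r$ directly, or we rescale $f$; rescaling multiplies $c$ by nothing and leaves \Cref{thm:wasserstein-bound-using-reflection} invariant, since that theorem is linear in $f$. Integrating these pointwise inequalities against any coupling and taking infima gives $\wasserstein_1 \leq \wasserstein_f \leq 2e^{L_0R^2/8}\wasserstein_1$.

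\textbf{Step 3: combine with the theorem.} Apply the lower comparison to the left-hand side of \Cref{thm:wasserstein-bound-using-reflection}:
\begin{align*}
\wasserstein_1(\mu,\backmutheta_T) \leq \wasserstein_f(\mu,\backmutheta_T) \leq e^{-T}C_T\wasserstein_1(\mu,\gamma^d) + \int_0^T C_T(t)\EofLigne{\varepsilon_t^\theta(\xf_t)}\der t .
\end{align*}
Next we absorb constants. Define $\mathrm{W}_T := e^{-T}C_T\wasserstein_1(\mu,\gamma^d)$, or, if the initialization term in the theorem is originally $\wasserstein_f(\muf_T,\gamma^d)$, bound it by $2e^{L_0R^2/8}\wasserstein_1(\muf_T,\gamma^d)$ using the upper comparison. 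Since $1 \leq 2e^{L_0R^2/8}$, the integral term can be enlarged by the same factor, which yields the stated form after dividing by $2$.

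\textbf{Main obstacle.} The key difficulty is the normalization in Step 2, namely ensuring $f(r)\geq r$ while keeping the upper constant $2e^{L_0R^2/8}$. This requires tracking $g \geq 1/2$ together with the lower bound on $\varphi$. My first attempt would be to normalize so that $f'(r)\geq 1$ everywhere, obtained by dividing by the infimum of $\varphi g$.
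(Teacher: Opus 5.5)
There is a genuine gap in Steps 2--3: you use the comparison between $\wasserstein_1$ and $\wasserstein_f$ in the wrong direction. The function $f=f_{R,L_0,K}$ in \Cref{thm:wasserstein-bound-using-reflection} satisfies $f(0)=0$ and $f'(0)=\varphi(0)g(0)=1$, and it is concave. So $f(r)\le r$, hence $\wasserstein_f\le\wasserstein_1$. This is what \Cref{lemma:wasserstein-equivalence} actually proves; the chain quoted in the main text has its two sides swapped. The first inequality of your Step 3, $\wasserstein_1(\mu,\backmutheta_T)\le\wasserstein_f(\mu,\backmutheta_T)$, therefore fails for the $f$ of the theorem.

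Your repair does not leave the theorem invariant. You replace $f$ by $\tilde f=af$ with $a=1/\inf(\varphi g)$, so that $\tilde f\ge\mathrm{id}$. The rate $c$ is indeed scale-free. But the left-hand side of the theorem scales with $f$, while the right-hand side does not depend on $f$. The proof uses $f'\le 1$ to bound the score-mismatch drift by $\|s-s^\theta\|$, and $f(r)\le r$ to bound the initial term by $\wasserstein_1(\muf_T,\gamma^d)$. For $\tilde f$ both source terms pick up the factor $a$:
$\wasserstein_{\tilde f}(\mu,\backmutheta_T)\le a\bigl(\mathrm{W}_T+\int_0^T C_T(t)\EofLigne{\varepsilon_t^\theta(\xf_t)}\der t\bigr)$.

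As written, your chain would give $\wasserstein_1(\mu,\backmutheta_T)\le \mathrm{W}_T+\int_0^T C_T(t)\EofLigne{\varepsilon_t^\theta(\xf_t)}\der t$ with no exponential factor. That is a $\wasserstein_1$-contraction statement the reflection-coupling argument does not provide. The factor you reinsert ``since $1\le 2e^{L_0R^2/8}$'' is not a harmless weakening; it is the genuine cost of returning from $\wasserstein_f$ to $\wasserstein_1$.

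The fix is already in your Step 1. From $\varphi\ge e^{-L_0R^2/8}$ and $g\ge 1/2$ you get $f'=\varphi g\ge \tfrac12 e^{-L_0R^2/8}$. Hence $r\le 2e^{L_0R^2/8}f(r)$, and so $\wasserstein_1(\mu,\backmutheta_T)\le 2e^{L_0R^2/8}\wasserstein_f(\mu,\backmutheta_T)$. Bounding $\wasserstein_f(\mu,\backmutheta_T)$ by \Cref{thm:wasserstein-bound-using-reflection} then gives the corollary with $\mathrm{W}_T=e^{-T}C_T\wasserstein_1(\mu,\gamma^d)$, with no normalization obstacle at all. This is exactly the paper's proof.

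Equivalently, you can keep $\tilde f$, but then you must carry the factor $a\le 2e^{L_0R^2/8}$ on the right-hand side, which lands on the same bound. Likewise, the initialization term needs no enlargement: $\wasserstein_f(\muf_T,\gamma^d)\le\wasserstein_1(\muf_T,\gamma^d)$ holds directly because $f\le\mathrm{id}$.
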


This bound might also be interpreted as a convergence bound, in the idealized setting of a continuous-time backward process. Then, \Cref{cor:wasserstein-1-bound} has improved dependence on $T$ compared to the recent results of \cite{gentiloni2025logconcavity,bruno2025wasserstein,wang2025wassersteinboundsgenerativediffusion}. 
From this convergence analysis perspective, the closest work from ours is \cite{beyler2025convergence}, which, despite using a slightly different setting, can obtain time-uniform bounds when transferred to our setup.
Their setup, proof technique, and assumptions differ from our work and may be seen as a complementary approach.

\begin{figure}[t]
    \centering
    \begin{subfigure}{0.37\linewidth}
        \includegraphics[width=\textwidth]{figures/dissipativity_1e-3.pdf}\vspace{-.5em}
        \caption*{$t=10^{-3}$}
    \end{subfigure}
    \hspace{1em}
    \begin{subfigure}{0.37\linewidth}
        \includegraphics[width=\textwidth]{figures/dissipativity_1e-2.pdf}\vspace{-.5em}
        \caption*{$t=10^{-2}$}
    \end{subfigure}
    \begin{subfigure}{0.37\linewidth}
        \includegraphics[width=\textwidth]{figures/dissipativity_1e-1.pdf}\vspace{-.5em}
        \caption*{$t=10^{-1}$}
    \end{subfigure}
    \hspace{1em}
    \begin{subfigure}{0.37\linewidth}
        \includegraphics[width=\textwidth]{figures/dissipativity_1.pdf}\vspace{-.5em}
        \caption*{$t=1$}
    \end{subfigure}
    \caption{Empirical check of the smoothness and dissipativity assumptions for a U-Net trained on CIFAR-10: $\kappa(x, y)$ (see \eqref{eq:exp_kappa}) plotted against $\|x-y\|$ at different noise levels.}\label{fig:smooth}
\end{figure}

\section{Empirical Study}\label{sec:empirical-section}
We investigate the validity of our theoretical results and their potential practical utility on toy experiments and with a DDPM model on CIFAR-10. See \Cref{app:experiments} for further details regarding experiments.

\begin{figure}[t]
    \centering
    \includegraphics[width=0.85\linewidth]{figures/peturb_delta_loss.pdf}
    \vspace{-0.2cm}
    \caption{Sensitivity of the KL divergence and SM losses to score perturbations at different noise levels. Our weighting tracks the KL sensitivity more closely than the ELBO weighting while remaining an upper bound.}
    \label{fig:peturb_loss}
    \vspace{-0.2cm}
\end{figure}

\paragraph{Toy experiment.}
We consider a toy setting where the data distribution is uniform on a unit circle embedded in $\R^2$ with a training set of equally spaced points around the circle. We train a small DM and since the setting is low dimensional, we can directly estimate the KL divergence of model samples via a histogram approximation. In \Cref{fig:toy_setting}, we plot the test curves at different timesteps (\ie, the test denoising score matching loss for some times $t\in[0,T]$, which is, up to an additive constant, an estimation of $\EofLigne{\varepsilon_t^\theta}$), compared against the KL divergence over the training trajectory. Our results are gathered in \Cref{fig:toy_setting}. We find that the test curves at different timesteps are quite different: at larger timesteps, the test score matching error reaches its minimum further into training, or even never reaches it. Furthermore, we find that the test curves at smaller timesteps more closely track the KL divergence over the training trajectory, identifying a clear point of inflection. This agrees with our theory that smaller noise scales are more critical for model evaluation.

\paragraph{Evaluating regularity properties of the U-Net.}

Next, we assess the validity of the Lipschitz and dissipativity assumptions used throughout our work in a high dimensional realistic setting. We use a DDPM U-Net model trained on CIFAR-10. In Figure \ref{fig:smooth}, we compute the quantity,
\begin{equation}\label{eq:exp_kappa}
    \kappa(x, y) = \tfrac{\langle s(y, t) - s(x, t), x - y \rangle}{\|x-y\|^2},
\end{equation}
for different values of $x$ and $y$ and plot it against $\|x - y\|$. This allows us to evaluate the value of the smoothness constant, $M_t = \sup_{x, y} \kappa(x, y)$. We find that $\kappa(x, y)$ is determined largely by $\|x - y\|$ and find a similar thing when $x$ and $y$ are based on separate test points. For larger $t$ it is clear that the quantity is both upper and lower bounded, verifying the assumptions used in this work. Across time-steps, it seems that $\kappa(x, y)$ is largest when $\|x - y\|$ is small with values growing as $t$ is taken small.

\paragraph{Score perturbations on CIFAR-10.}

\begin{figure}[t]
    \centering
    \begin{subfigure}[t]{0.19\linewidth}
        \includegraphics[width=\linewidth]{figures/sample_0.0_grid.pdf}
        \caption*{$t_p=0$}
    \end{subfigure}\hfill
    \begin{subfigure}[t]{0.19\linewidth}
        \includegraphics[width=\linewidth]{figures/sample_100.0_grid.pdf}
        \caption*{$t_p=0.25$}
    \end{subfigure}\hfill
    \begin{subfigure}[t]{0.19\linewidth}
        \includegraphics[width=\linewidth]{figures/sample_200.0_grid.pdf}
        \caption*{$t_p=0.5$}
    \end{subfigure}\hfill
    \begin{subfigure}[t]{0.19\linewidth}
        \includegraphics[width=\linewidth]{figures/sample_300.0_grid.pdf}
        \caption*{$t_p=0.75$}
    \end{subfigure}\hfill
    \begin{subfigure}[t]{0.19\linewidth}
        \includegraphics[width=\linewidth]{figures/sample_400.0_grid.pdf}
        \caption*{$t_p=1.0$}
    \end{subfigure}

    \vspace{0.5em}

    \begin{subfigure}[t]{0.18\linewidth}
        \includegraphics[width=\linewidth]{figures/sample_500.0_grid.pdf}
        \caption*{$t_p=1.25$}
    \end{subfigure}\hfill
    \begin{subfigure}[t]{0.18\linewidth}
        \includegraphics[width=\linewidth]{figures/sample_600.0_grid.pdf}
        \caption*{$t_p=1.5$}
    \end{subfigure}\hfill
    \begin{subfigure}[t]{0.18\linewidth}
        \includegraphics[width=\linewidth]{figures/sample_700.0_grid.pdf}
        \caption*{$t_p=1.75$}
    \end{subfigure}\hfill
    \begin{subfigure}[t]{0.18\linewidth}
        \includegraphics[width=\linewidth]{figures/sample_800.0_grid.pdf}
        \caption*{$t_p=2.0$}
    \end{subfigure}\hfill
    \begin{subfigure}[t]{0.18\linewidth}
        \includegraphics[width=\linewidth]{figures/sample_900.0_grid.pdf}
        \caption*{$t_p=2.25$}
    \end{subfigure}

    \caption{CIFAR-10 samples with score perturbed at different times, $t_p$. Small $t_p$ yields grainy, locally corrupted images and large $t_p$ alters global content.}\vspace{-.5em}
    \label{fig:samples_hard_10_10}
\end{figure}

Remaining in the same setting as the previous section, we consider a perturbed version of the score: $\tilde{s}(x, t) = s(x, t) + {1}_{t_p \leq t \leq t_p + \delta} \xi \eqsp$,
where $\xi$ is a Gaussian vector. We consider how changing $t_p$ impacts the KL divergence and score matching losses. In Figure \ref{fig:samples_hard_10_10}, we see that changing $t_p$ greatly changes the character of samples, with low values of $t_p$ producing grainy images and large values of $t_p$ creating subtle but more global, semantically significant changes. In Figure \ref{fig:peturb_loss} we plot how the KL divergence of samples changes as $t_p$ is increased and we compare this with how different weightings of score matching losses change. We find that our weighting, with a crudely estimated regularity constant, better predicts the sensitivity of perturbations than the ELBO loss, whilst faithfully remaining an upper bound.

\section{Conclusion}
\label{sec:conclusion}

In this paper, we explored the score matching gap for DMs, \emph{i.e.} the fundamental discrepancy between the learning goal used in practice and the theoretical quantity we aim to control.
We first provided theoretical support for the need of additional structural assumptions to improve the classical score matching bound.
We then revisited the ELBO and obtained a time-decaying version through the regularity properties of the score estimators. Then, we extended the score matching gap to two alternative topologies: the Wasserstein one, leading to similar qualitative conclusions, and the reverse KL, where regularity of the data distribution is needed instead of regularity of the score network.
Finally, we empirically investigated our theory on a toy setting.

\textbf{Limitations \& future works.}
Several directions remain to be studied. In particular, tightening the log-Sobolev estimates of \Cref{sec:right-kl-bounds} might lead to improved bounds and even more informative decay terms.
Alternatively, we believe that our technical insights might be exploited in the convergence analysis literature and tighten the time-dependence of several existing results.

\section*{Impact Statement} 
Our work is largely theoretical, and we do not expect it to have any particular ethical or societal impact.

\section*{Acknowledgements}

U.S. is partially supported by the French government under the management of Agence Nationale de la Recherche as part of the ``Investissements d'avenir'' program, reference ANR-19-P3IA-0001 (PRAIRIE 3IA Institute). B.D., T.F., M.H, and U.S. are  supported by the European Research Council Starting Grant DYNASTY – 101039676.
A.D. is supported by the France 2030 program with the reference ANR-25-PEIA-0001 (THEOREM project). A.D. is funded by the European Union (ERC-2022-SYG-OCEAN-101071601). Views and opinions expressed are however those of the author(s) only and do not necessarily reflect those of the European Union or the European Research Council Executive Agency. Neither the European Union nor the granting authority can be held responsible for them. A.D. is supported by Hi! Paris and Agence Nationale de la Recherche (Grant 11-LABX-0047).
This work received government funding administered by the National Research Agency (ANR) under the France 2030 program "Hi! PARIS", grant number ANR-23-IACL-0005.
The authors would like to thank Giovanni Conforti for valuable comments and stimulating discussions.

\bibliography{main}
\bibliographystyle{icml2026}

\newpage
\appendix
\onecolumn

The appendix is organized as follows.

\begin{itemize}
    \item We present in \Cref{sec:additional_technical_background} some additional technical background related to Fokker-Planck equations, estimation of log-Sobolev constants and reflection couplings.
    \item \Cref{sec:omitted-proofs-kl-bounds} is dedicated to the omitted proofs of \Cref{sec:kl-bounds}.
    \item \Cref{sec:proofs-wasserstein-bounds} presents the omitted proofs of \Cref{sec:wasserstein-bounds} and exploits the technical background presented in \Cref{sec:eberle-result}.
    \item In \Cref{sec:time-uniform-generaliztion-bounds}, we present some additional results related to time-decaying generalization bounds for diffusion models, by combining our results with the generalization framework recently introduced by \citet{farghly2025implicitregularisationdiffusionmodels,dupuis2025algorithmdatadependentgeneralizationbounds}.
\end{itemize}

\section{Additional technical background}
\label{sec:additional_technical_background}

In this section, we provide some additional technical background related to diffusion processes, namely, Fokker-Planck equations, stability properties of the log-Sobolev inequalities, and the reflection couplings as presented in \citep{Eberle2016Reflection}.

\subsection{Fokker-Planck Equations}
\label{sec:fokker-planck}

We give below a brief technical background on the Fokker-Planck equations satisfied by the backward processes. The notations are the same as in \Cref{sec:setup-and-background}.  Note that, in the whole paper, we use $\partial_t$ for $\partial / \partial t$.

Whenever the data distribution satisfies $\mu \ll \gamma^d$, the Lebesgue density $(\backp_t)_{t \geq 0}$ satisfies the Fokker-Planck (or forward Kolmogorov) equation \cite{barbu_nonlinear_2024,risken1996fokker,bogachev_fokkerplanckkolmogorov_2015},
\begin{align}
    \label{eq:fokker-planck-backward}
    \partial_t \backp_t = \Delta \backp_t - \nabla \cdot (\backp_t b_{T - t})\eqsp, \quad t \in [0,T)\eqsp,
\end{align}
where $b_t (x) := 2\nabla \log \Tilde{p}( t, x) - x$ for all $x \in \Rd$.

Similar to \Cref{eq:fokker-planck-backward}, under mild conditions, $(\backptheta_t)_{t \geq 0}$ satisfies the Fokker-Planck equation, in the sense of distributions,
\begin{align}
    \label{eq:fokker-planck-estimated-backward}
    \partial_t \backptheta_t = \Delta \backptheta_t - \nabla \cdot (\backptheta_t b_{T - t}^\theta)\eqsp, \quad \backptheta_0 := \der \gamma^d / \der x \eqsp,
\end{align}
with $b_t^\theta (x) := s^{\theta}(t, x) - x$. In all the following, we assume that the score network $(t,x)\mapsto s^\theta(t,x)$ is jointly continuous for all $\theta \in \Theta$.

These equations play a central role in our analysis, as they serve as the basis of the entropy flow computations, as it already appeard in existing works \cite{bogachev_distances_2016,borovykh2023privacyriskanisotropiclangevin}.

\subsection{Estimation of log-Sobolev constants}
\label{sec:lsi-constant-estimation}

We recall the following classical stability properties of the logarithmic Sobolev inequality \cite{bakry_analysis_2014,chourasia_differential_2021}.
We include proofs for the sake of completeness.

\begin{lemma}[Stability by Lipschitz mappings]
    \label{lemma:stability-lipschitz}
    Assume that $\mu$ satisfies the log-Sobolev inequality with constant $\rho$ and let $T : \Rd \to \Rd$ be a $L$-Lipschitz-continuous mapping, then the pushforward measure $T_\#\mu$ satisfies the log-Sobolev inequality with constant $L^2 \rho$.
\end{lemma}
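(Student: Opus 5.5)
The plan is to prove the inequality directly from the definition: take an arbitrary positive differentiable test function $g \in \Lrm^1(T_\#\mu)$ and pull it back to $\mu$ through $T$. Set $f := g \circ T$. Since $\int h \, \der(T_\#\mu) = \int h\circ T \, \der \mu$ for any integrable $h$, the entropy is invariant under this pullback:
\begin{align*}
    \ent_{T_\#\mu}(g) = \ent_\mu(g\circ T) \eqsp.
\end{align*}
Both sides are $\int \Phi(g)\,\der T_\#\mu - \Phi(\int g \,\der T_\#\mu)$ rewritten in terms of $\mu$.

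Next we apply the $\rho$-LSI for $\mu$ to $f = g\circ T$, which is positive and lies in $\Lrm^1(\mu)$. This gives $\ent_\mu(f) \leq \frac{\rho}{2}\int \normof{\nabla f}^2 / f \, \der\mu$. For differentiable $T$ the chain rule gives $\nabla f(x) = DT(x)^\top \nabla g(T(x))$. Because $T$ is $L$-Lipschitz, its Jacobian has operator norm at most $L$, so $\normof{\nabla f(x)} \leq L \normof{\nabla g(T(x))}$. Hence
\begin{align*}
    \int \frac{\normof{\nabla f}^2}{f}\der\mu \leq L^2 \int \frac{\normof{\nabla g}^2\circ T}{g\circ T}\der \mu = L^2 \int \frac{\normof{\nabla g}^2}{g}\der (T_\#\mu) \eqsp.
\end{align*}
Combining this with the first step yields $\ent_{T_\#\mu}(g) \leq \frac{L^2\rho}{2}\int \normof{\nabla g}^2/g \,\der T_\#\mu$, which is the $L^2\rho$-LSI.

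The main obstacle is regularity. A merely Lipschitz $T$ is only differentiable almost everywhere (Rademacher), so $g\circ T$ need not be differentiable, and Definition~\ref{def:log-sobolev-inequality} is stated for differentiable $f$. I would handle this in one of two ways.

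\textbf{Option 1: extend the LSI to locally Lipschitz functions.} Interpret $\normof{\nabla f}$ as the local slope $|\nabla f|(x) := \limsup_{y\to x} |f(y)-f(x)|/\normof{y-x}$, which satisfies $|\nabla (g\circ T)|(x) \leq L\normof{\nabla g(T(x))}$ pointwise for $\Crm^1$ $g$. The LSI extends to locally Lipschitz $f$ by a standard density argument, mollifying and passing to the limit with Fatou and dominated convergence.

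\textbf{Option 2: smooth $T$.} Replace $T$ by $T_\epsilon := T * \varphi_\epsilon$, which is smooth and still $L$-Lipschitz. Apply the smooth-case argument to get the LSI for $(T_\epsilon)_\#\mu$ with constant $L^2\rho$. Then let $\epsilon\to 0$: $T_\epsilon \to T$ pointwise, so $(T_\epsilon)_\#\mu \to T_\#\mu$ weakly. For bounded, smooth, positive $g$ with bounded gradient, both sides of the inequality pass to the limit. A final truncation/approximation step recovers general test functions $g$.

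I would use the second route, since in our applications $T$ is typically smooth, for instance the affine maps coming from the Ornstein--Uhlenbeck scaling.
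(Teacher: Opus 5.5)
Your proposal is correct and is essentially the paper's proof: the paper also rewrites $\ent_{T_\#\mu}(g)$ as $\ent_\mu(g\circ T)$, applies the $\rho$-LSI for $\mu$, and bounds $\normof{\nabla(g\circ T)}\leq L\,\normof{\nabla g}\circ T$ using the operator-norm bound on the Jacobian of $T$. The paper simply takes $T$ to be differentiable, which holds in its only application where $T$ is linear, so your smoothing by $T_\epsilon = T*\varphi_\epsilon$ (or the local-slope extension) is extra rigor for a.e.-differentiable $T$, not a different route.
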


\begin{proof}
    Let $f$ be a smooth function and let $\Phi(x) := x\log(x)$. 
    \begin{align*}
        \ent_{T_\#\mu}(f) &= \int \Phi(f \circ T) \der \mu - \Phi \left( \int f \circ T \der \mu \right) \\
        &\leq \frac{\rho}{2} \int \frac{\normof{\nabla (f \circ T)}^2}{f \circ T} \der \mu \\
        &\leq \frac{L^2 \rho}{2} \int \frac{\normof{\nabla f}^2}{f } \circ T \der \mu\eqsp,
    \end{align*}
    where we used that the spectral norm of $\mathrm{Jac}(T)$ is bounded by $L$.
\end{proof}

\begin{lemma}[Stability by convolution]
    \label{lemma:stability-convolution}
    Let $\mu_1$ and $\mu_2$ be two Borel probability distributions on $\Rd$ that satisfy the log-Sobolev inequality with constants $\rho_1$ and $\rho_2$, respectively. Then $\mu_1 \ast \mu_2$ satisfies the log-Sobolev with constant $\rho_1 + \rho_2$.
\end{lemma}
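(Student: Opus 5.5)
The plan is to work with a product measure and tensorize the LSI of $\mu_1\otimes\mu_2$, then push it forward by the addition map. Let $X\sim\mu_1$ and $Y\sim\mu_2$ be independent, and fix a smooth positive $f$ on $\Rd$. Set $g(x,y):=f(x+y)$. Then $\ent_{\mu_1\ast\mu_2}(f)=\ent_{\mu_1\otimes\mu_2}(g)$, so it suffices to bound the entropy of $g$ under the product measure.

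The key step is the subadditivity (tensorization) of entropy:
\begin{align*}
\ent_{\mu_1\otimes\mu_2}(g) \leq \int \ent_{\mu_1}(g(\cdot,y))\,\der\mu_2(y) + \ent_{\mu_2}\left(\int g(x,\cdot)\,\der\mu_1(x)\right).
\end{align*}
This follows from the chain rule for entropy: write $G(y):=\int g(x,y)\der\mu_1(x)$ and split $\int \Phi(g)\,\der(\mu_1\otimes\mu_2)-\Phi(\int g)$ into the conditional term plus the marginal term. The first term is handled by the $\rho_1$-LSI of $\mu_1$ applied for each fixed $y$. Since $\nabla_x g(x,y)=\nabla f(x+y)$, it is bounded by $\frac{\rho_1}{2}\int \frac{\normof{\nabla f(x+y)}^2}{f(x+y)}\,\der\mu_1\der\mu_2$.

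The second term is where the only real work lies, namely controlling $\frac{\normof{\nabla G}^2}{G}$. By the $\rho_2$-LSI, it is at most $\frac{\rho_2}{2}\int \frac{\normof{\nabla G}^2}{G}\der\mu_2$. Differentiating under the integral gives $\nabla G(y)=\int \nabla f(x+y)\,\der\mu_1(x)$. Cauchy--Schwarz, written as $\normof{\int \nabla f}^2\leq \int \frac{\normof{\nabla f}^2}{f}\int f$, then yields $\frac{\normof{\nabla G(y)}^2}{G(y)}\leq \int \frac{\normof{\nabla f(x+y)}^2}{f(x+y)}\der\mu_1(x)$. This is the joint convexity of $(a,b)\mapsto \normof{a}^2/b$, and it is the main technical point. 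Differentiation under the integral can be justified first for $f$ with bounded gradient and bounded below, and then extended by approximation.

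Summing the two bounds gives
\begin{align*}
\ent_{\mu_1\otimes\mu_2}(g)\leq \frac{\rho_1+\rho_2}{2}\int \frac{\normof{\nabla f}^2}{f}\,\der(\mu_1\ast\mu_2),
\end{align*}
which is the $(\rho_1+\rho_2)$-LSI for $\mu_1\ast\mu_2$. Alternatively, one can observe that $\mu_1\otimes\mu_2$ satisfies the $\max(\rho_1,\rho_2)$-LSI and use \Cref{lemma:stability-lipschitz}. That route only gives the weaker constant $2\max(\rho_1,\rho_2)$, because the addition map is $\sqrt2$-Lipschitz, so the direct argument above is the one to follow.
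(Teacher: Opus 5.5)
Your proof is correct. It shares the paper's skeleton: lift $f$ to $g(x,y)=f(x+y)$ on $\mu_1\otimes\mu_2$, split the entropy, and apply each LSI. The second term, however, is handled differently. The paper invokes the symmetric subadditivity of entropy,
\[
\ent_{\mu_1\otimes\mu_2}(g)\le \int \ent_{\mu_1}(g(\cdot,y))\,\der\mu_2(y)+\int \ent_{\mu_2}(g(x,\cdot))\,\der\mu_1(x),
\]
and then applies the $\rho_2$-LSI to each slice $y\mapsto f(x+y)$. You instead use the exact chain rule (your displayed inequality is in fact an equality), apply the $\rho_2$-LSI to the averaged function $G$, and move the average back inside the Fisher-type term by Cauchy--Schwarz. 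So the paper uses convexity of $h\mapsto\ent_{\mu_2}(h)$ before invoking the LSI; this convexity is exactly what turns your chain rule into subadditivity. You use convexity of $(a,b)\mapsto\normof{a}^2/b$ after invoking it, and both routes give the constant $\rho_1+\rho_2$.

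Your route is self-contained, needing only the chain rule and Cauchy--Schwarz. The cost is that you must differentiate $G$ under the integral and run an approximation argument. The paper only ever applies the LSIs to slices of $f$, which are differentiable whenever $f$ is, at the price of citing subadditivity as a black box.

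One correction to your closing remark: the Lipschitz route also gives the sharp constant if you rescale first. The laws of $X/\sqrt{\rho_1}$ and $Y/\sqrt{\rho_2}$ satisfy the $1$-LSI by the Lipschitz lemma, so their product does too by tensorization. The map $(u,v)\mapsto\sqrt{\rho_1}u+\sqrt{\rho_2}v$ is $\sqrt{\rho_1+\rho_2}$-Lipschitz on $\R^{2d}$, which yields exactly $\rho_1+\rho_2$.
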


\begin{proof}
    Let $f$ be positive and smooth enough, we have
    \begin{align*}
        \ent_{\mu_1 \ast \mu_2} (f) = \ent_{\mu_1 \otimes \mu_2} (\Tilde{f}) \eqsp,
    \end{align*}
    where $\Tilde{f} : \Rd \times \Rd \to \Rd$ is defined by $\Tilde{f}(x,y) = f(x+y)$. By tensorization of the entropy, we obtain that
    \begin{align*}
        \ent_{\mu_1 \ast \mu_2} (f) \leq \int \left(\ent^{(1)}_{\mu_1} (\Tilde{f}) + \ent^{(2)}_{\mu_2} (\Tilde{f}) \right) \der (\mu_1 \otimes \mu_2) \eqsp,
    \end{align*}
    where $\ent^{(i)}_{\mu_1} (\Tilde{f})$ means that we only integrate with respect to the $i$-th variable. By the log-Sobolev inequalities for $\mu_1$ and $\mu_2$, we immediately obtain the result.
\end{proof}

\subsection{Reflection couplings and Eberle's result}
\label{sec:eberle-result}

Consider an SDE of the form $\der X_t = b(X_t) \der t + \sqrt{2} \der \Bm_t$.
To obtain a Wasserstein contraction, \citet{Eberle2016Reflection} uses a reflection coupling of the form,
\begin{align*}
    &\der X_t = b(X_t) \der t + \sqrt{2} \der \Bm_t,\\
    &\der Y_t = b(Y_t) \der t + \sqrt{2} (I - 2 e_t e_t^T) \der \Bm_t, \qquad \forall t < T,\quad X_t = Y_t, \forall t \geq T,
\end{align*}
where $e_t = (X_t - Y_t) / \|X_t - Y_t\|$ and $T = \inf \{ t \geq 0: X_t = Y_t \}$. This coupling reflects the Brownian motion along the direction connecting the two processes, which helps control their distance. Using Itô's lemma, they show that for any nice function $f: \R_+ \to \R_+$,
\begin{align*}
    \der f(r_t) = f'(r_t) \frac{\langle X_t - Y_t, b(X_t) - b(Y_t) \rangle}{\|X_t - Y_t\|} \der t + 4 f''(r_t) \der t + 2 \sqrt{2} f'(r_t) \der \Bm_t,
\end{align*}
where $r_t = \|X_t - Y_t\|$. Defining $\kappa$ as,
\begin{align}
    \kappa(r) = \inf \bigg \{ - \frac{\langle x - y, b(x) - b(y) \rangle}{\normof{x - y}^2}, ~ x,y \in \Rd, ~ \normof{x - y} = r \bigg \},
\end{align}
we have that, for contractions in the Wasserstein metric $\wasserstein_f$ with rate $c$, it is sufficient to have
\begin{equation}
    \label{eq:funcional-inequality-eberle}
    f''(r) - \frac{1}{4} r \kappa(r) f'(r) \leq - \frac{c}{4} f(r).
\end{equation}
They then prove the following lemma.

\begin{lemma}[\citep{Eberle2016Reflection}]
Given any function \(\kappa: \R_+ \to \R\) satisfying,
\begin{equation*}
    \liminf_{r \to \infty} \kappa(r) \geq 0, \qquad \int_0^1 r \kappa(r)^- \der r < \infty,
\end{equation*}
there exists a twice-differentiable, strictly-concave, increasing function $f: \R_+ \to \R_+$ and a constant $c > 0$ such that for a.e. $r > 0$,
\begin{equation*}
    f''(r) - \frac{1}{4} r \kappa(r) f'(r) \leq - \frac{c}{4} f(r).
\end{equation*}
\end{lemma}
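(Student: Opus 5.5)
We would follow Eberle's explicit construction. One caveat comes first: as worded, with only $\liminf_{r\to\infty}\kappa(r)\geq 0$, the statement cannot hold. Take $\kappa\equiv 0$ (Brownian motion). For $r\geq 1$, concavity and monotonicity give $f''(r)\leq -\tfrac c4 f(r)\leq -\tfrac c4 f(1)<0$, so $f'$ eventually becomes negative, which contradicts $f$ increasing. We therefore plan to prove the lemma under the condition actually used by \citet{Eberle2016Reflection}, namely $\liminf_{r\to\infty}\kappa(r)>0$, together with $\int_0^1 r\kappa(r)^-\der r<\infty$. This stronger condition holds in all our applications, since $(L,K,R/2)$-dissipativity gives $\kappa\geq K>0$ for $r\geq R$.

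\textbf{Step 1: the construction.} Let $R_0:=\inf\{R\geq 0:\kappa(r)\geq 0\ \forall r\geq R\}$. Choose $\kappa_\infty>0$ and $R_2\geq R_0$ such that $\kappa(r)\geq\kappa_\infty$ for all $r\geq R_2$. Both integrability hypotheses, together with local boundedness of $\kappa$, make the following quantities finite. Define
\begin{align*}
\varphi(r):=\exp\Big(-\tfrac14\int_0^r s\,\kappa(s)^-\der s\Big),\qquad \Phi(r):=\int_0^r\varphi(s)\der s .
\end{align*}
Note that $\varphi$ is constant on $[R_0,\infty)$. Next fix $R_1\geq R_2$ large; it will be specified in Step 3. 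Set
\begin{align*}
c^{-1}:=\int_0^{R_1}\frac{\Phi(s)}{\varphi(s)}\der s,\qquad g(r):=1-\frac c2\int_0^{r\wedge R_1}\frac{\Phi(s)}{\varphi(s)}\der s,\qquad f(r):=\int_0^r\varphi(s)g(s)\der s .
\end{align*}
Then $g$ is nonincreasing with $1/2\leq g\leq 1$. Hence $f$ is increasing and concave, and $\Phi/2\leq f\leq\Phi$. Strict concavity on $[R_1,\infty)$ can be arranged afterwards by a small perturbation; this is not essential.

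\textbf{Step 2: the region $r<R_1$.} Here
\begin{align*}
f''=\varphi'g+\varphi g'=-\tfrac14 r\kappa^-\varphi g-\tfrac c2\Phi .
\end{align*}
Since $-\tfrac14 r\kappa f'\leq \tfrac14 r\kappa^- \varphi g$, the two $\kappa^-$ terms cancel. We are left with $f''-\tfrac14 r\kappa f'\leq -\tfrac c2\Phi\leq -\tfrac c2 f\leq -\tfrac c4 f$.

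\textbf{Step 3: the region $r\geq R_1$ (main obstacle).} Here $f''=0$ and $f'=\varphi(R_0)/2$. Since $\kappa\geq\kappa_\infty$, it suffices to show
\begin{align*}
\tfrac18 r\kappa_\infty\varphi(R_0)\geq \tfrac c4 f(r).
\end{align*}
Using $f(r)\leq\Phi(R_0)+\varphi(R_0)(r-R_0)\leq \Phi(R_0)+\varphi(R_0)r$, this reduces to $c\leq \kappa_\infty/4$ up to a term that vanishes for large $r$. We get this by taking $R_1$ large enough that
\begin{align*}
\int_{R_0}^{R_1}\frac{\Phi(s)}{\varphi(s)}\der s\geq\int_{R_0}^{R_1}(s-R_0)\der s
\end{align*}
exceeds both $4/\kappa_\infty$ and the amount needed to absorb $\Phi(R_0)$. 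Indeed, on $[R_0,R_1]$ we have $\Phi(s)/\varphi(s)\geq s-R_0$, so $c^{-1}\gtrsim (R_1-R_0)^2$, which becomes as large as needed. Balancing these constants is the only delicate bookkeeping.

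With the integrability hypothesis, $f$ is $\Crm^1$ with $f'$ absolutely continuous. The inequality therefore holds for almost every $r$, which concludes the argument.
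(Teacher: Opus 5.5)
Your proposal is correct and follows the same route as the paper, which simply recites Eberle's construction. For a given $R_1$, your $\varphi,\Phi,g,f$ are exactly the paper's; your $c$ is half of theirs, since they set $\tfrac1c=\tfrac12\int_0^{R_1}\Phi/\varphi$. The only change is that you take $R_1$ large rather than Eberle's explicit threshold $\kappa(r)R_1(R_1-R_0)\geq 8$ for all $r\geq R_1$ (the paper's ``$\forall r\leq R$'' is a typo), and this costs only sharpness in $c$.

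Your caveat is right and worth keeping: $\kappa\equiv 0$ refutes the lemma as stated, so one needs $\liminf_{r\to\infty}\kappa(r)>0$ as in \citet{Eberle2016Reflection}. The integrability of $\kappa^-$ on compact subsets of $(0,\infty)$ that you tacitly add is also necessary. For example, $\kappa(r)=-(r-2)^{-2}$ near $r=2$ forces $f'(2)=0$. Finally, ``twice-differentiable'' should be read as ``$f'$ absolutely continuous'', since $f''$ jumps at $R_1$ in both your construction and the paper's.
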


The construction of the function $f$ and contraction rate $c$ proceeds as follows. First, define the parameters:
\begin{equation}\label{eq:eberle-r0-r1}
    R_0 = \inf \{ R \geq 0: \kappa(r) \geq 0 \, \forall r \geq R \}, \qquad R_1 = \inf \{ R \geq R_0: \kappa(r) R (R - R_0) \geq 8 \, \forall r \leq R \}\\
\end{equation}
Next, define the auxiliary functions,
\begin{equation*}
    \varphi(r) = \exp\left( -\frac{1}{4} \int_0^r s \kappa^-(s) \der s \right), \qquad \Phi(r) = \int_0^r \varphi(s) \der s,
\end{equation*}
where $\kappa^-(s) = \max\{-\kappa(s), 0\}$ denotes the negative part. The contraction rate $c > 0$ is then given by
\begin{align*}
    \frac{1}{c} = \frac{1}{2} \int_0^{R_1} \frac{\Phi(s)}{\varphi(s)} \der s.
\end{align*}
To construct the distance function $f$, first define $g: [0, \infty) \to \mathbb{R}_+$ by
\begin{equation*}
    g(r) = 1 - \frac{c}{4} \int_0^{r \wedge R_1} \frac{\Phi(s)}{\varphi(s)} \der s.
\end{equation*}
Finally, the strictly concave, increasing function $f: [0, \infty) \to [0, \infty)$ is defined by,
\begin{align*}
    f(r) = \int_0^{r} \varphi(s) g(s) \der s.
\end{align*}

\subsubsection{Special case: dissipativity at distance and \Cref{lemma:distance-function-lemma}}
\label{sec:dissipativity-at-distance-condtructions}

In this subsection, we explicit the constants and functions above in the case of dissipativity at distance (\Cref{ass:unified-dissipativity assumption}), following the derivations of \citet{Eberle2016Reflection}.

Assume that the drift $b : \Rd \to \Rd$ satisfies the $(L,K,R/2)$-dissipativity at distance condition, in the sense of \Cref{ass:unified-dissipativity assumption}. In particular, this implies that there are non-negative constants, $L, K, R \geq 0$ such that
\begin{equation*}
    \frac{\langle x - y, b(x) - b(y) \rangle}{\normof{x - y}^2} \leq \begin{cases}
        L, & \normof{x - y} \leq R\eqsp, \\
        - K, & \normof{x - y} \geq R\eqsp.
    \end{cases}
\end{equation*}
Note that this implies, in particular, that
\begin{equation*}
    \kappa(r) \geq \begin{cases}
        - L, & r \leq R\eqsp, \\
        K, & r \geq R\eqsp.
\end{cases}
\end{equation*}
The next lemma provides the explicit rate of contraction for the distance $\wasserstein_f$, under dissipativity at distance.

\citet{Eberle2016Reflection} provides a simplification of the contraction rate in the Lemma below.
\begin{lemma}{\cite{Eberle2016Reflection}}
\label{lem:eberle-dissipativity-at-distance}
Assume that
\begin{equation*}
    \kappa(r) \geq \begin{cases}
        - L, & r \leq R\eqsp, \\
        K, & r \geq R\eqsp.
    \end{cases}
\end{equation*}
We have that
\begin{align*}
    \frac{2}{c} \leq  \int_0^{R_0} \left( r \wedge \sqrt{\frac{2\pi}{L}} \right) e^{\frac{Lr^2}{8}}  \der r + \frac{4}{K} + \sqrt{\frac{8}{K}} R_0 e^{\frac{LR_0^2}{8}} \eqsp.
\end{align*}
In particular, if $L R_0^2 \leq 8$, we have that
\begin{align*}
    \frac{2}{c} \leq \frac{e - 1}{2} R^2 + e\sqrt{\frac{8}{K}} R + \frac{4}{K} \eqsp.
\end{align*}
and if $L R_0^2 > 8$, we have that
\begin{equation*}
    \frac{2}{c} \leq \frac{8 \sqrt{2\pi}}{R \sqrt{L}} \left( \frac1{L} + \frac1{K}  \right) e^{\frac{LR^2}{8}} + \frac{32}{R^2 K^2} \eqsp.
\end{equation*}
\end{lemma}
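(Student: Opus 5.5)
The plan is to bound $1/c = \frac12\int_0^{R_1}\Phi(s)/\varphi(s)\,\der s$ directly from Eberle's construction, using only the two-sided information on $\kappa$. Since $\kappa^-(s)\le L$ for $s\le R$ and $\kappa^-(s)=0$ for $s\ge R$, we get $R_0\le R$, and $\varphi(s)=e^{-Ls^2/8}$ on $[0,R_0]$ in the worst case. More precisely, $\varphi(s)\ge e^{-L(s\wedge R_0)^2/8}$, and $\varphi$ is constant, equal to $\varphi(R_0)$, beyond $R_0$. I will also use monotonicity: $\varphi$ is nonincreasing, so $\Phi(s)\le s\,\varphi(0)=s$.

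The key step is to split $\int_0^{R_1}$ into $[0,R_0]$ and $[R_0,R_1]$.

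On $[0,R_0]$ there are two bounds on $\Phi(s)/\varphi(s)$:
\begin{align*}
\frac{\Phi(s)}{\varphi(s)} &\le s\, e^{Ls^2/8}, \\
\frac{\Phi(s)}{\varphi(s)} &\le e^{Ls^2/8}\int_0^\infty e^{-Lu^2/8}\,\der u = \sqrt{\frac{2\pi}{L}}\, e^{Ls^2/8}.
\end{align*}
The first uses $\Phi(s)\le s$ and $\varphi(s)\ge e^{-Ls^2/8}$. The second uses the same lower bound on $\varphi$ together with a Gaussian integral for $\Phi$. Taking the minimum gives the first integral term, $\int_0^{R_0}(r\wedge\sqrt{2\pi/L})e^{Lr^2/8}\,\der r$.

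On $[R_0,R_1]$, $\varphi$ is constant, so $\Phi(s)/\varphi(s) = \Phi(R_0)/\varphi(R_0) + (s-R_0)$. This is bounded by $R_0 e^{LR_0^2/8} + (s-R_0)$, since $\Phi(R_0)\le R_0$. Integrating gives
$$(R_1-R_0)\,R_0\,e^{LR_0^2/8} + \tfrac12 (R_1-R_0)^2 .$$
It remains to bound $R_1$. Because $\kappa\ge K$ beyond $R$ (and $R_0\le R$), the defining condition $K R(R-R_0)\ge 8$ holds once $R-R_0\ge \sqrt{8/K}$ roughly. So I take $R_1-R_0\le\sqrt{8/K}$, possibly after enlarging to $R_1\le R_0+\sqrt{8/K}$ with a slight adjustment of the definition. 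Then $\frac12(R_1-R_0)^2\le 4/K$, which, after multiplying by $2$ to pass from $1/c$ to $2/c$, yields $\frac{4}{K}+\sqrt{8/K}\,R_0e^{LR_0^2/8}$ up to matching constants.

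This bookkeeping on $R_1$ is the main obstacle. The definition in (\ref{eq:eberle-r0-r1}) has $\kappa(r)R(R-R_0)\ge 8$ "for all $r\le R$", which reads oddly. I will interpret it as in Eberle, i.e.\ $\kappa(r)\ge 8/(R(R-R_0))$ for $r\ge R$. Under that reading, $R_1\le R_0+\sqrt{8/K}$ follows, since $R(R-R_0)\ge (R-R_0)^2$. The factor-of-2 conventions I will match to the stated form.

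The two special cases then follow by direct evaluation of the first integral.

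\textbf{Case $LR_0^2\le 8$.} Here $e^{Lr^2/8}\le e$ and $e^{LR_0^2/8}\le e$. I use $\int_0^{R_0} r e^{Lr^2/8}\,\der r=\frac4L(e^{LR_0^2/8}-1)$ together with convexity: on $x\in[0,1]$, $e^{x}-1\le (e-1)x$ applied with $x=LR_0^2/8$. This gives $\frac{e-1}{2}R_0^2$. Finally replace $R_0$ by $R$.

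\textbf{Case $LR_0^2>8$.} I bound
$$\int_0^{R_0}\sqrt{2\pi/L}\;e^{Lr^2/8}\,\der r \;\le\; \sqrt{2\pi/L}\cdot\frac{4}{LR_0}\,e^{LR_0^2/8}\cdot C,$$
using the Laplace-type estimate $\int_0^{a}e^{\beta r^2}\,\der r\lesssim e^{\beta a^2}/(\beta a)$ for $\beta a^2\ge1$. The remaining terms $\frac4K+\sqrt{8/K}R_0e^{LR_0^2/8}$ are absorbed using $R_0>\sqrt{8/L}$, i.e.\ $\sqrt{8/K}\,R_0\le \frac{8}{R_0\sqrt{LK}}\cdot(\ldots)$, together with AM–GM to produce the $\frac1L+\frac1K$ and $\frac{32}{R^2K^2}$ terms. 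I expect to need some slack in the constants here and would adjust the AM–GM split to match.
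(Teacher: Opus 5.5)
Your argument for the general display is Eberle's: split $\int_0^{R_1}\Phi(s)/\varphi(s)\,ds$ at $R_0$, bound the ratio on $[0,R_0]$ by $(s\wedge\sqrt{2\pi/L})e^{Ls^2/8}$, and use that $\varphi$ is constant afterwards. The constants already match, since $2/c=\int_0^{R_1}\Phi/\varphi$ exactly and no extra factor $2$ is needed. Two steps fail as written, however.

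\textbf{The bound on $R_1$, and the reduction to $R_0=R$.} The claim $R_1\le R_0+\sqrt{8/K}$ does not follow when $R_0<R$. The defining condition needs $\kappa(r)\,\tilde R(\tilde R-R_0)\ge 8$ for \emph{all} $r\ge\tilde R$, but $\kappa\ge K$ is only known for $r\ge R$. So you only get $R_1\le\max(R,R_0+\sqrt{8/K})$. The display is in fact false for the true $R_0$. Take $\kappa\equiv0$ on $[0,R)$, $\kappa\equiv K$ on $[R,\infty)$, and $R^2>8/K$. Then $R_0=0$, $R_1=R$ and $\varphi\equiv1$, so $2/c=R^2/2>4/K$, which is what your bound would give.

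The fix is the paper's reduction. Since $R_0$, $R_1$ and $1/c$ are nonincreasing in $\kappa$, pass to the extremal profile $\kappa_0=-L$ on $[0,R)$, $\kappa_0=K$ on $[R,\infty)$. Then $R_0=R$, $\varphi(r)=e^{-L(r\wedge R)^2/8}$, and $R_1=\frac R2+\sqrt{\frac{R^2}{4}+\frac8K}$, so $R_1(R_1-R)=8/K$. This also repairs your Gaussian step. Bounding $\Phi(s)\le\int_0^\infty e^{-Lu^2/8}du$ requires the \emph{upper} bound $\varphi(u)\le e^{-Lu^2/8}$, which a lower bound on $\kappa$ does not give. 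For general $\kappa$ you would instead bound the ratio, $\varphi(u)/\varphi(s)\le e^{L(s^2-u^2)/8}$ for $u\le s\le R$.

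\textbf{The case $LR_0^2>8$ is a genuine gap.} This bound is \emph{not} a consequence of the general display, so no absorption or AM--GM can produce it. The term $\sqrt{8/K}\,R\,e^{LR^2/8}$ grows with $R$, while the target coefficients decay like $1/R$. Also, $R>\sqrt{8/L}$ is a lower bound on $R$, the wrong direction for trading $R$ against $1/R$. Concretely, for $K=L=1$ and $R=10$ this single term is about $28.3\,e^{12.5}$, while the entire target right-hand side is about $4.0\,e^{12.5}$.

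You must go back to
$\int_R^{R_1}\Phi/\varphi=(R_1-R)\,e^{LR^2/8}\Phi(R)+\tfrac12(R_1-R)^2$
and use two sharper facts: $\Phi(R)\le\sqrt{2\pi/L}$ in place of $\Phi(R)\le R$, and $R_1-R=\frac{8}{KR_1}\le\frac{8}{KR}$ in place of $\sqrt{8/K}$. These give exactly $\frac{8\sqrt{2\pi}}{RK\sqrt L}e^{LR^2/8}+\frac{32}{R^2K^2}$.

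The first integral gives $\frac{8\sqrt{2\pi}}{RL^{3/2}}e^{LR^2/8}$ via $\int_0^R e^{Lr^2/8}\,dr\le\frac{8}{LR}e^{LR^2/8}$ for $LR^2\ge8$. To see this, differentiate: $\frac{d}{dR}\bigl[\frac{8}{LR}e^{LR^2/8}\bigr]=(2-\frac{8}{LR^2})e^{LR^2/8}\ge e^{LR^2/8}$. The inequality also holds at $LR^2=8$, since $\int_0^1e^{v^2}dv<e$.

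Your $LR_0^2\le 8$ case, by contrast, is a correct corollary of the general bound once $R_0=R$.
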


\paragraph{Constants appearing in \Cref{lemma:distance-function-lemma}.}
We can simplify the metric function by setting
\begin{equation*}
    \kappa_0(r) := \begin{cases}
        - L, & r \leq R\eqsp, \\
        K, & r \geq R \eqsp,
    \end{cases}
\end{equation*}
so that $\kappa \geq \kappa_0$. By noting that $R_0$, $R_1$, and $1/c$ are decreasing functions of $\kappa$, we can make the following choices for the functions and constants.
\begin{itemize}
    \item $R_0 = R$.
    \item  \(R_1\) is the smallest value greater than $R_0$ that satisfies $K R_1 (R_1 - R) \geq 8 $, \ie,
\begin{equation*}
    R_1 = \frac{R}{2} + \sqrt{\frac{R^2}{4} + \frac{8}{K}}\eqsp.
\end{equation*}
    \item The functions $\varphi$ and $\Phi$ are then given by,
\begin{equation}
    \label{eq:function-expressions-dissipativity-at-distance}
    \varphi(r) = \exp\left( -\frac{L}{8} (r \wedge R)^2 \right), \qquad
    \Phi(r) = \sqrt{\frac{2\pi}{L}} \operatorname{erf}\left( (r \wedge R) \sqrt{\frac{L}{8}} \right) + (r - R)_+ \exp\left( -\frac{LR^2}{8} \right).
\end{equation}
\end{itemize}
The resulting function $f$ and the contraction rate $c$ are then given as in \Cref{sec:eberle-result}. In particular, we can take
\begin{align*}
    \frac{2}{c} \leq  \int_0^{R_0} \left( r \wedge \sqrt{\frac{2\pi}{L}} \right) e^{\frac{Lr^2}{8}}  \der r + \frac{4}{K} + \sqrt{\frac{8}{K}} R_0 e^{\frac{LR_0^2}{8}} \eqsp.
\end{align*}
Then, the formulas of \Cref{lem:eberle-dissipativity-at-distance} are valid after replacing $R_0$ by $R$ in all the expressions.

We use the notation $f_{L,R,K}$ and $c_{L,R,K}$ for the resulting function and contraction rate, respectively.

In the following lemmas, we prove some additional properties of the functions constructed above. These results complement the construction of \citet{Eberle2016Reflection} and will be particularly useful for our analysis. They may be seen as an additional technical contribution.

\begin{lemma}\label{lem:eberle-f-decreasing}
The function $f_{L,R,K}: \R_+ \to \R_+$ is everywhere non-increasing in $L$.
\end{lemma}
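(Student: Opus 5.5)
The plan is to show that, for each fixed $r$, the derivative $f_{L,R,K}'(r)=\varphi_L(r)g_L(r)$ is non-increasing in $L$. Integrating over $[0,r]$ then gives the claim. I will keep $R$ and $K$ fixed and mark the $L$-dependence with subscripts. By the choices in \Cref{sec:dissipativity-at-distance-condtructions}, $R_0=R$ and $R_1=\frac R2+\sqrt{R^2/4+8/K}$ do not depend on $L$. Set $\psi_L:=\Phi_L/\varphi_L$. Substituting $1/c=\frac12\int_0^{R_1}\psi_L$ into the definition of $g$ gives
\begin{align*}
g_L(s)=1-\tfrac12 h_L(s),\qquad h_L(s):=\frac{\int_0^{s\wedge R_1}\psi_L(u)\,\der u}{\int_0^{R_1}\psi_L(u)\,\der u}\in[0,1],
\end{align*}
so that $g_L\in[\tfrac12,1]$. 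Since $\varphi_L(s)=e^{-L(s\wedge R)^2/8}$, it suffices to prove the differential inequality $\partial_L\log g_L(s)\le (s\wedge R)^2/8$. This is equivalent to $\partial_L(\varphi_Lg_L)\le0$.

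First I compute $\psi_L(u)=\int_0^u\exp\{\tfrac L8((u\wedge R)^2-(v\wedge R)^2)\}\,\der v$ from \eqref{eq:function-expressions-dissipativity-at-distance}. From this, $a_L(u):=\partial_L\log\psi_L(u)$ is a $\psi$-weighted average of $\tfrac18((u\wedge R)^2-(v\wedge R)^2)$. In particular $0\le a_L(u)\le (u\wedge R)^2/8\le R^2/8$. Next, for $s\le R_1$, I write $A_{[a,b]}$ for the $\psi_L$-weighted average of $a_L$ over $[a,b]$. A direct computation then gives
\begin{align*}
\partial_L h_L(s)=h_L(s)\bigl(A_{[0,s]}-A_{[0,R_1]}\bigr)=-h_L(s)\bigl(1-h_L(s)\bigr)\bigl(A_{[s,R_1]}-A_{[0,s]}\bigr).
\end{align*}
Hence $\partial_L g_L(s)=\tfrac12h_L(1-h_L)(A_{[s,R_1]}-A_{[0,s]})$. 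This is typically positive, because $a_L$ grows with $u$, so $g$ increases with $L$ while $\varphi$ decreases. The whole point is therefore to show that the decrease of $\varphi$ dominates.

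For $s\ge R$ this is easy. We have $A_{[s,R_1]}-A_{[0,s]}\le R^2/8$ and $h(1-h)\le\tfrac14$, while $g\ge\tfrac12$. Together these give $\partial_L\log g\le \tfrac14\cdot\tfrac12\cdot 2\cdot R^2/8\le R^2/8=(s\wedge R)^2/8$, as required. For $s\ge R_1$, $g$ is constant in $s$ and equal to $\tfrac12$, and the same bound applies.

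The main obstacle is the regime $s<R$, where the target is $s^2/8$ rather than $R^2/8$. Here I would use that $h_L(s)$ is small: since $\psi_L$ is non-decreasing on $[0,R]$, we get $h_L(s)\lesssim$ a fraction comparable to the $\psi$-mass on $[0,s]$. I would also use $A_{[0,s]}\ge0$ together with the explicit bound $a_L(u)\le u^2/8$. What I would try first is to bound $h_L(s)\,A_{[s,R_1]}$ by $\frac{1}{Z}\int_0^s\psi_L(u)\,\der u\cdot\frac{R^2}{8}$ and to compare $\int_0^s\psi_L$ with $\frac{s^2}{R^2}\int_0^{R_1}\psi_L$. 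This comparison should follow from $\psi_L(u)\ge u\,e^{0}$-type lower bounds on $[s,R_1]$ against $\psi_L(u)\le u\,e^{Lu^2/8}$ on $[0,s]$.

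If this direct estimate fails, the fallback is a more structural argument. I would show that $s\mapsto g_L(s)\varphi_L(s)/\varphi_{L'}(s)$ dominates $g_{L'}$ for $L>L'$ via a Grönwall-type comparison on the ODE $\frac{\der}{\der s}\bigl(\varphi g\bigr)'$ that $f$ satisfies, namely $f''=-\tfrac14 s\kappa^- f'-\tfrac c4\Phi$.
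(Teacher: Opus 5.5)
Your route differs from the paper's, and yours is the one that works. The paper handles the two factors of $f'_{L,R,K}=\varphi_L g_L$ separately. It notes that $\varphi_L$ is non-increasing in $L$ and claims the same for $g_L$, reducing this to the normalized ratio $\mathcal{R}$ (your $h_L$) being non-decreasing in $L$. What it actually proves via Jensen is only that $\Phi_L/\varphi_L$ is pointwise non-decreasing in $L$, and that does not imply anything about the ratio. Your identity $\partial_L h_L(s)=-h_L(1-h_L)\bigl(A_{[s,R_1]}-A_{[0,s]}\bigr)$ shows the paper's intermediate claim is false. For fixed $s<R$, the mass of $\psi_L$ concentrates near $R$ as $L\to\infty$, so $h_L(s)\to 0$ and $g_L(s)\to 1>1-s^2/(2R_1^2)=g_0(s)$. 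So one has to work with the product and show $\partial_L\log g_L(s)\le (s\wedge R)^2/8$, exactly as you set it up. Your preliminaries are correct: $R_0,R_1$ do not depend on $L$ in this construction, $g_L=1-\tfrac12 h_L\in[\tfrac12,1]$, and $0\le a_L(u)\le (u\wedge R)^2/8$. Your treatment of the case $s\ge R$ is also correct.

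The step that does not go through as written is $s<R$. Your suggested comparison, $\psi_L(u)\ge u$ on $[s,R_1]$ against $\psi_L(u)\le u\,e^{Lu^2/8}$ on $[0,s]$, only gives $h_L(s)\le e^{Ls^2/8}s^2/R_1^2$. The factor $e^{Ls^2/8}$ is uncontrolled when $LR^2$ is large and $R_1$ is close to $R$ (large $K$). Monotonicity of $\psi_L$ alone gives only $h_L(s)\le s/R$, which is too weak against the target $s^2/8$. The missing observation is that $m_L(u):=\psi_L(u)/u=\int_0^1 e^{Lu^2(1-\tau^2)/8}\,\der\tau$ is non-decreasing on $(0,R]$. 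Pivot both bounds at the split point: $\psi_L(u)\le u\,m_L(s)$ on $[0,s]$ and $\psi_L(u)\ge u\,m_L(s)$ on $[s,R]$. This yields $\int_0^s\psi_L\le \frac{s^2}{R^2}\int_0^R\psi_L\le\frac{s^2}{R^2}\int_0^{R_1}\psi_L$, i.e.\ $h_L(s)\le s^2/R^2$. Then, using $A_{[0,s]}\ge 0$ and $A_{[s,R_1]}\le R^2/8$,
\[
\partial_L\log g_L(s)\le \frac{h_L(1-h_L)}{2-h_L}\,\frac{R^2}{8}\le \frac{h_L(s)}{2}\,\frac{R^2}{8}\le \frac{s^2}{16}\le\frac{s^2}{8},
\]
which closes the argument. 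The Gr\"onwall fallback is unnecessary.
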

\begin{proof}
As we have
\begin{align*}
    f(r) := \int_0^r \varphi(s) g(s) \der s\eqsp,
\end{align*}
our proof strategy is to analyze the positive functions $\varphi$ and $g$ separately.
Let us define
\begin{align*}
     \mathcal{R}(r; R', R, L) := \frac{\int_0^{r \wedge R'} \frac{\Phi_{L, R}(s)}{\varphi_{L, R}(s)} \der s}{\int_0^{R'} \frac{\Phi_{L, R}(s)}{\varphi_{L, R}(s)} \der s} \eqsp.
\end{align*}
To show that $f_{L, R, K}(r)$ is non-increasing in $L$, we note that $\varphi_{L,R}(s)$ is non-increasing in $L$, and so it is sufficient to show that $\frac{\partial}{\partial R} (g_{L,R,K}(s)) \leq 0$ almost everywhere in $s$. Since $R_1$ is increasing in $L$, it is sufficient to show that $\frac{\partial \mathcal{R}}{\partial L}(r; R', R, L), \geq 0$ almost everywhere in $r$. We have
\begin{align*}
    \frac{\frac{\partial}{\partial L} \varphi_{L, R}(r)}{\varphi_{L, R}(r)} = -\frac{1}{8} (r \wedge R)^2 \eqsp.
\end{align*}
Therefore,
\begin{align*}
    \frac{\partial}{\partial L} \bigg ( \frac{\Phi_{L, R}(r)}{\varphi_{L, R}(r)} \bigg ) &= \frac{\int_0^{r } \frac{\partial}{\partial L} \varphi_{L, R}(s) \der s}{\varphi_{L, R}(r)} - \frac{\Phi_{L, R}(r)}{\varphi_{L, R}(r)^2}\frac{\partial}{\partial L}  \varphi_{L, R}(r)\\
    &= \frac{\Phi_{L, R}(r)}{\varphi_{L, R}(r)} \bigg ( \frac{\int_0^{r } \frac{\partial}{\partial L}  \varphi_{L, R}(s) \der s}{\int_0^{r} \varphi_{L, R}(s) \der s} - \frac{\frac{\partial}{\partial L}  \varphi_{L, R}(r)}{\varphi_{L, R}(r)} \bigg )\\
    &= \frac{\Phi_{L, R}(r)}{\varphi_{L, R}(r)} \bigg ( \int_0^r \frac{\frac{\partial}{\partial L} \varphi_{L, R}(s)}{\varphi_{L, R}(s)} q(\der s) - \frac{\frac{\partial}{\partial L}  \varphi_{L, R}(r)}{\varphi_{L, R}(r)} \bigg )\eqsp,
\end{align*}
where $q$ is a probability measure with support in $[0, r]$ satisfying $q(ds) \propto \varphi_{L, R}(s) \der s$. Thus, it follows from Jensen's inequality that,
\begin{equation*}
    \frac{\partial}{\partial L}  \bigg ( \frac{\Phi_{L, R}(r)}{\varphi_{L, R}(r)} \bigg ) \geq \frac{\Phi_{L, R}(r)}{\varphi_{L, R}(r)} \bigg ( \inf_{s \in [0, r]} \bigg \{ \frac{\frac{\partial}{\partial L}  \varphi_{L, R}(s)}{\varphi_{L, R}(s)} \bigg \} - \frac{\frac{\partial}{\partial L}  \varphi_{L, R}(r)}{\varphi_{L, R}(r)} \bigg ) = 0\eqsp,
\end{equation*}
where the equality follows from the fact that $({\frac{\partial}{\partial L}  \varphi_{L, R}(r)}) / {\varphi_{L, R}(r)}$ is decreasing.
This concludes the proof.
\end{proof}

\begin{lemma}\label{lem:eberle-f-second-derivative}
For any $R, L, K \geq 0$, we have constants $C_{L,R,K}, \delta_{L,R,K} > 0$ such that for all $r \leq \delta_{R, L, K}$,
\begin{equation*}
    - C_{L, R, K} r \leq \frac{\der^2 f_{L, R, K}}{\der r^2}(r) \leq 0 \eqsp.
\end{equation*}
\end{lemma}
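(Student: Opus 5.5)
We will differentiate the explicit formula $f(r)=\int_0^r \varphi(s) g(s)\,\der s$ from \Cref{sec:eberle-result}. This gives $f'(r)=\varphi(r)g(r)$ and, almost everywhere,
\begin{align*}
    f''(r) = \varphi'(r) g(r) + \varphi(r) g'(r) \eqsp.
\end{align*}
We choose $\delta_{L,R,K} := R \wedge R_1$; if $R=0$, the problem is trivial since then $\varphi\equiv 1$, so we assume $R>0$. Since $R_1 \geq R$, in fact $\delta = R$. On $[0,\delta]$ the expressions \eqref{eq:function-expressions-dissipativity-at-distance} are explicit:
\begin{align*}
    \varphi(r) = e^{-Lr^2/8}, \qquad \varphi'(r) = -\tfrac{L}{4} r\, \varphi(r), \qquad g'(r) = -\tfrac{c}{4} \frac{\Phi(r)}{\varphi(r)} \eqsp,
\end{align*}
where the last identity holds because $r \leq R_1$.

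Substituting these into the formula for $f''$ gives
\begin{align*}
    f''(r) = -\tfrac{L}{4} r \varphi(r) g(r) - \tfrac{c}{4} \Phi(r) \eqsp.
\end{align*}
For the upper bound $f''\le 0$, we need $g \geq 0$ on $[0,R_1]$. This follows from the definition of $c$: $g(R_1) = 1 - \tfrac{c}{4}\cdot\tfrac{2}{c} = \tfrac12$, and $g$ is non-increasing, so $\tfrac12 \leq g \leq 1$. Both terms in $f''$ are then non-positive, because $\varphi,\Phi\ge0$.

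For the lower bound, we use $\varphi \leq 1$, $g \leq 1$ and $\Phi(r) = \int_0^r \varphi \leq r$. These give
\begin{align*}
    f''(r) \geq -\left(\tfrac{L}{4} + \tfrac{c}{4}\right) r \eqsp,
\end{align*}
so we set $C_{L,R,K} := (L+c_{L,R,K})/4$. In fact this bound holds on all of $[0,R_1]$, and indeed everywhere provided $\varphi'$ vanishes beyond $R$.

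The only delicate point is differentiability at the kinks $r=R$ and $r=R_1$, where $\varphi$ and $g$ are merely Lipschitz. Taking $\delta < R$ strictly, for instance $\delta = R/2$, avoids these kinks entirely, so $f$ is $\Crm^2$ on $[0,\delta]$ and the inequalities hold pointwise. The main (minor) obstacle is therefore just checking $g\ge 0$ via the normalization of $c$, which is immediate from its definition.
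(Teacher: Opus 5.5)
Your proof is correct and is essentially the paper's argument: on $[0,R)$ you write $f''(r)=-\tfrac{L}{4}r\varphi(r)g(r)-\tfrac{c}{4}\Phi(r)$ and bound it below using $\varphi,g\le 1$ and $\Phi(r)\le r$, exactly as the paper does. The differences are cosmetic:
\begin{itemize}
\item You derive $f''\le 0$ directly from $g\ge \tfrac12$, where the paper cites concavity.
\item You get the slightly sharper constant $(L+c)/4$ instead of $c+L/4$.
\item You explicitly take $\delta=R/2$ to stay away from the kink at $r=R$, and you flag the case $R=0$, which the paper's ``$r<R$'' argument leaves uncovered. There $\varphi\equiv 1$ gives $f''=-\tfrac{c}{4}r$ on $[0,R_1)$, so one takes $\delta=R_1/2$.
\end{itemize}
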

\begin{proof}
    The inequality on the right-hand side of the statement follows immediately from the concavity of $f_{L,R,K}$.
    For the left-hand side inequality, we note that, when $r < R$, we have
    \begin{align*}
        \frac{\der^2 f_{L, R, K}}{\der r^2}(r) &= \varphi'_{L, R, K}(r) g_{L, R, K}(r) + \varphi_{L, R, K}(r) g'_{L, R, K}(r) \\
        &= -\frac{L}{4} r \varphi_{L, R, K}(r) g_{L, R, K}(r) - \frac{c_{L, R, K}}{4} \Phi(r) \eqsp.
    \end{align*}
    By using that $\varphi_{L, R, K}, g_{L, R, K} \leq 1$, we obtain
    \begin{align*}
        \frac{\der^2 f_{L, R, K}}{\der r^2}(r) \geq -r \left( c_{L, R, K} + \frac{L}{4} \right) \eqsp.
    \end{align*}
    This concludes the proof.
\end{proof}

The following lemma allows to compare the Wasserstein distances $\wasserstein_f$ and $\wasserstein_1$.

\begin{lemma}
    \label{lemma:wasserstein-equivalence}
    For all $r \geq 0$, we have $f_{L,R,K}(r) \leq r$ and $r \leq 2 e^{\frac{LR^2}{8}} f_{L,R,K}(r)$.
\end{lemma}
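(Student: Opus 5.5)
Both inequalities follow from the explicit form $f(r)=\int_0^r \varphi(s)g(s)\,\der s$ recalled in \Cref{sec:eberle-result}, together with the dissipativity-at-distance expression \eqref{eq:function-expressions-dissipativity-at-distance} for $\varphi$. The plan is to bound the integrand $\varphi g$ pointwise from above and from below.

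\textbf{Upper bound.} First, $\varphi(s)=\exp(-\frac{L}{8}(s\wedge R)^2)\le 1$. Second, $g(s)=1-\frac{c}{4}\int_0^{s\wedge R_1}\frac{\Phi}{\varphi}\le 1$, because $\Phi,\varphi\ge 0$. Hence $\varphi g\le 1$, and integrating from $0$ to $r$ gives $f(r)\le r$.

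\textbf{Lower bound.} Here we need lower bounds on each factor.

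For $\varphi$: the exponent $(s\wedge R)^2$ is at most $R^2$, so $\varphi(s)\ge e^{-LR^2/8}$ for every $s\ge 0$.

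For $g$: it is non-increasing and constant on $[R_1,\infty)$, so its minimum is
\[
g(R_1)=1-\frac{c}{4}\int_0^{R_1}\frac{\Phi}{\varphi}\,\der s .
\]
The definition of $c$ in \Cref{sec:eberle-result} reads $\frac1c=\frac12\int_0^{R_1}\frac{\Phi}{\varphi}$, which is equivalent to $\int_0^{R_1}\frac{\Phi}{\varphi}=\frac{2}{c}$. Substituting, $g(R_1)=1-\frac{c}{4}\cdot\frac{2}{c}=\frac12$, so $g\ge \frac12$ everywhere.

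Combining the two factors, $\varphi g\ge \frac12 e^{-LR^2/8}$. Integrating gives $f(r)\ge \frac{r}{2}e^{-LR^2/8}$, which rearranges to $r\le 2e^{LR^2/8}f(r)$.

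\textbf{Main point of care.} The only delicate step is the lower bound $g\ge\frac12$. It relies on the normalisation of $c$ exactly as defined in \Cref{sec:eberle-result}, which makes $g(R_1)=\frac12$ precisely. If instead $c$ were replaced by a smaller admissible rate, for instance one satisfying only the inequality of \Cref{lemma:distance-function-lemma}, the constant would only improve, since a smaller $c$ makes $g$ larger. Either way, the computation above covers the statement.
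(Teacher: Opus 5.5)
Your proof is correct and follows the paper's argument. The lower bound uses exactly the same pointwise estimates $\varphi \ge \varphi(R) = e^{-LR^2/8}$ and $g \ge g(R_1) = \tfrac12$, and you usefully verify $g(R_1)=\tfrac12$ from the normalisation of $c$, which the paper leaves implicit. For the upper bound, the paper invokes concavity of $f$ together with $f(0)=0$ and $f'(0)=1$; this is equivalent to your direct bound $f' = \varphi g \le 1$.
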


\begin{proof}
    In the proof, we omit the dependence of $f$ on $(L,R,K)$.
    By construction, $f$ is concave and $f(0) = 0$, $f'(0) = 1$. Therefore, by concavity we have $f(r) \leq r$.

    For the second part of the statement, we note that $\varphi$ and $g$ are decreasing. Moreover, $\varphi$ is constant for $r \geq R$ and $g$ is constant for $r \geq R_1$. Therefore, we have, for all $r \geq 0$
    \begin{align*}
        \varphi(r) \geq \varphi(R) = e^{-\frac{LR^2}{8}}\eqsp, \quad, g(r) \geq g(R_1) = \frac1{2} \eqsp.
    \end{align*}
    Thus,
    \begin{align*}
        f(r) := \int_0^r g(s) \varphi(s) \der s \geq e^{-\frac{LR^2}{8}} \int_0^r g(s) \der s \geq e^{-\frac{LR^2}{8}} \frac{r}{2} \eqsp.
    \end{align*}
    This concludes the proof.
\end{proof}

\section{Omitted proofs of \Cref{sec:kl-bounds}}
\label{sec:omitted-proofs-kl-bounds}

\subsection{Proofs for the negative result}

In this section, we present the proof of the negative result in Proposition \ref{prop:negative-result}. We begin by recalling the necessary prerequisites for the dynamical Schrödinger bridge (SB) problem in the Ornstein--Uhlenbeck setting and provide two technical lemmas.

For any $\rho \in \pcal(\Rd)$, let $R^\rho$ denote the path measure of the Ornstein--Uhlenbeck process on $[0, T]$ with initial distribution $\rho$. For probability measures $\rho_0, \rho_1 \in \pcal(\Rd)$, we define the dynamical Schrödinger bridge problem,
\begin{equation}
    \label{eq:sb-problem}
    \text{Minimize } \klb{Q}{R^\rho}\eqsp, \qquad \text{subject to } Q_0 = \rho_0,\; Q_T = \rho_1\eqsp.
\end{equation}
Under the conditions $\klb{\rho_0}{\gamma^d}, \klb{\rho_1}{\gamma^d} < \infty$, it is known that it admits a unique solution \citep[Theorem 2.12]{Leonard2013-or}. Furthermore, due to the decomposition,
\begin{equation}\label{eq:sb_init_cond}
    \klb{Q}{R^\rho} = \mathbb{E}_{Q_0} \left [ \klb{Q_{0:T|0}}{R^\rho_{0:T|0}} \right ] + \klb{Q_0}{R^\rho_0} = \mathbb{E}_{\rho_0} \left [ \klb{Q_{0:T|0}}{R_{0:T|0}} \right ] + \klb{\rho_0}{\rho},
\end{equation}
we see that the solution $Q$, will not depend on the initial law of the reference, $\rho$: any two OU references $R^\rho, R^{\rho'}$ with $\klb{\rho_0}{\rho}, \klb{\rho_0}{\rho'} < \infty$ share the same Markov kernel and hence, the two resulting SB problems share the same solution, which in this section, we denote by $Q^*$. Taking $\rho = \gamma^d$ for convenience and $R := R^{\gamma^d}$, we recall the classical result that the path measure $Q^*$ factorizes as,
\begin{equation}
    \label{eq:fg-decomp-general}
    \der Q^*(\omega) = f(\omega_0)\, g(\omega_T)\, \der R(\omega) \eqsp,
\end{equation}
for measurable Schrödinger potentials $f, g : \Rd \to \R_+$ \citep{Conforti2021-oi}. Since $R$ has marginal $\gamma^d$ at every time, the pair of Schrödinger equations characterizing $f, g$ reduces to
\begin{equation}
    \label{eq:schrod-equations}
    \frac{\der \rho_0}{\der \gamma^d}(z) = f(z) \cdot (P_T g) (z)\eqsp, \qquad \frac{\der \rho_1}{\der \gamma^d}(x) = (P_T f)(x) \cdot g(x) \eqsp.
\end{equation}
Moreover, the optimal $Q^*$ is Markov \citep[Proposition 2.10]{Leonard2013-or} and inherits from $R$ the structure of a diffusion process on $[0, T]$ with the same diffusion coefficient. Furthermore, its drift is given by the Doob $h$-transform,
\begin{equation}
    \label{eq:bridge-drift}
    b^Q(t, x) = -x + 2 \nabla \log (P_{T-t} g)(x)\eqsp,
\end{equation}
which is well defined for almost all $t$. For a more substantial introduction to Schrödinger bridge problems, we refer the read to the notes of \citep{Nutz2021-fo} and \citep{Conforti2021-oi} as well as the survey of \citep{Leonard2013-or}.

The first lemma is a technical property showing that the conditional distribution $Q^*_{T|0}$ is $1$-strongly log-concave.
\begin{lemma}\label{lem:q_log_concave}
Suppose that $\rho_1 = \gamma^d$ and $\klb{\rho_0}{\gamma^d} < \infty$, then for any $z \in \Rd$ and $T > 0$, the measure $Q^*_{T|0}(\cdot|z)$ is 1-strongly log-concave.
\end{lemma}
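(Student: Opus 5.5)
The plan is to compute the conditional law $Q^*_{T|0}(\cdot|z)$ explicitly from the factorization \eqref{eq:fg-decomp-general}, and then read off strong log-concavity from the structure of the Schrödinger potential $g$ when $\rho_1 = \gamma^d$.

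\textbf{Step 1: the conditional law.} Conditioning \eqref{eq:fg-decomp-general} on $\omega_0 = z$ gives
\[
Q^*_{T|0}(\der x | z) \propto g(x)\, p_T(z, x)\, \der x,
\]
where $p_T(z,\cdot)$ is the OU transition kernel, i.e.\ the density of $\Nrm(e^{-T} z, (1 - e^{-2T})\Irm_d)$. The factor $f(z)$ is absorbed into the normalization. Since $-\nabla^2 \log p_T(z,\cdot) = (1 - e^{-2T})^{-1}\Irm_d \succeq \Irm_d$, it suffices to show that $g$ is log-concave. The target is then the strictly stronger conclusion of $(1-e^{-2T})^{-1}$-strong log-concavity.

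\textbf{Step 2: log-concavity of $g$.} Use the second Schrödinger equation in \eqref{eq:schrod-equations}. With $\rho_1 = \gamma^d$ it reads $1 = (P_T f)(x)\, g(x)$, so $g = 1/P_T f$ and $\log g = -\log P_T f$. Log-concavity of $g$ is therefore equivalent to log-convexity of $x \mapsto P_T f(x)$. Write
\[
P_T f(x) = \int f(e^{-T}x + \sqrt{1-e^{-2T}}\,y)\, \gamma^d(\der y) = C\!\int f(w)\, \exp\!\Big(-\tfrac{\|w - e^{-T}x\|^2}{2(1-e^{-2T})}\Big)\der w.
\]
Expanding the square, this equals $\exp(-a\|x\|^2)\, \Lambda(x)$, where $a = e^{-2T}/(2(1-e^{-2T}))$ and $\Lambda$ is a Laplace transform in the linear variable $\langle w, e^{-T}x\rangle/(1-e^{-2T})$ of the nonnegative measure $f(w)e^{-\|w\|^2/(2(1-e^{-2T}))}\der w$.

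\textbf{Step 3: combine the Hessians.} Laplace transforms are log-convex by Hölder's inequality, so $\nabla^2 \log P_T f \succeq -2a\,\Irm_d$. Hence $-\nabla^2 \log g \succeq -2a\,\Irm_d$, and adding the Gaussian kernel contribution gives
\[
-\nabla^2 \log Q^*_{T|0}(\cdot|z) \succeq \frac{1}{1-e^{-2T}} - \frac{e^{-2T}}{1-e^{-2T}} = 1.
\]
This yields exactly $1$-strong log-concavity, which explains the constant in the statement.

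\textbf{Main obstacle.} The delicate part is making this rigorous for merely measurable potentials. One must ensure that $f$ is finite a.e.\ and that $P_T f$ is finite and smooth everywhere, so that the Hessian computations are legitimate. Finiteness of $\klb{\rho_0}{\gamma^d}$, together with $f \cdot P_T g = \der\rho_0/\der\gamma^d$ and $g = 1/P_T f$, should give integrability of $f$ against Gaussians. Alternatively, avoid Hessians altogether and state log-concavity through the Prékopa-type midpoint inequality: show $\Lambda\big(\tfrac{x+y}{2}\big)^2 \le \Lambda(x)\Lambda(y)$ directly by Cauchy–Schwarz. This route is the one I would pursue first, since it requires no regularity of $f$.
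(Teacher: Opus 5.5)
Your argument is correct and is essentially the paper's: with $g = 1/P_Tf$ the conditional density is proportional to $p_{T|0}(\cdot\mid z)/P_Tf$, and everything reduces to the bound $\nabla^2\log P_Tf \succcurlyeq -\frac{e^{-2T}}{1-e^{-2T}} I_d$. The paper imports this bound as log-semi-convexity of the Ornstein--Uhlenbeck semigroup, whereas you derive it self-containedly from the Gaussian-times-Laplace-transform factorisation and H\"older. That factorisation also shows directly that the density with respect to $\gamma^d$ is log-concave, which sidesteps the regularity issues you raise.

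Do drop the framing of Steps 1--2, though: $g$ is not log-concave in general, so you do not obtain the constant $(1-e^{-2T})^{-1}$. For instance, if $\rho_0$ has compact support then $\log P_Tf(x) = -\frac{e^{-2T}}{2(1-e^{-2T})}\|x\|^2 + O(\|x\|)$, so $g = 1/P_Tf$ grows like $e^{c\|x\|^2}$. Step 3, which uses only the quadratic lower bound and yields exactly $1$, is the actual proof.
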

\begin{proof}
By \eqref{eq:fg-decomp-general} and \eqref{eq:schrod-equations}, we obtain the density,
\begin{equation}\label{eq:q_log_concave_2}
  \frac{\der Q^*_{T|0}}{\der R_{T|0}}(x \mid z) = \frac{(\der Q^*_{0,T}/\der R_{0,T})(z,x)}{(\der Q^*_{0}/\der R_{0})(z)} = \frac{f(z) g(x)}{f(z) \, P_T g(z)} = \frac{1}{P_T f(x) \, P_T g(z)}\eqsp,
\end{equation}
where the final equality follows from \eqref{eq:schrod-equations} and the fact that $\rho_1 = \gamma^d$. Furthermore, the conditional distribution $\der R_{T|0}(\cdot \mid z)$ coincides exactly with the forward transition kernel started at $z$, and therefore has Lebesgue density, $\ora{p}_{T|0}(\cdot \mid z)$. Thus, by the chain rule, we have that $Q^*_{T|0}$ is absolutely continuous with respect to the Lebesgue measure and has density,
\begin{equation*}
    q^*_{T|0}(x \mid z) = \frac{\ora{p}_{T|0}(x \mid z)}{P_T f(x) \, P_T g(z)} \eqsp.
\end{equation*}
Since $T > 0$, $P_T f$ and $\ora{p}_{T|0}(\cdot \mid z)$ are both smooth functions bounded away from $0$ and hence the logarithm is almost-everywhere twice-differentiable and satisfies,
\begin{equation*}
    \nabla_x^2 \log q^*_{T|0}(x \mid z) = \nabla_x^2 \log \ora{p}_{T|0}(x|z) - \nabla_x^2 \log{P_T f}(x) = - \frac{1}{1-e^{-2T}} I_d - \nabla_x^2 \log P_T f(x)\eqsp.
\end{equation*}
To control $\nabla_x^2 \log P_T f$ we note that $P_T f(x) = (f * \varphi_{1-e^{-2T}})(e^{-T}x)$, where $\varphi_s$ denotes the centered Gaussian density of variance $s$. Using the log-semi-convexity property of the Ornstein-Uhlenbeck semigroup (see, \eg, \cite{gozlan_log-hessian_2023}), we have
\begin{equation*}
    \nabla_x^2 \log P_T f(x) = e^{-2T}\,\nabla^2\!\log(f*\varphi_{1-e^{-2T}})(e^{-T}x) \succcurlyeq - \frac{e^{-2T}}{1 - e^{-2T}} I_d\eqsp.
\end{equation*}
From which, we obtain the final bound,
\begin{equation*}
    \nabla_x^2 \log q^*_{T|0}(x \mid z) \preccurlyeq - I_d\eqsp,
\end{equation*}
and that $Q^*_{T|0}$ is $1$-strongly log-concave.
\end{proof}

The second lemma bounds the gradient of the Schrödinger potential and follows from Proposition 2.4 of \cite{Chiarini2022-cq}.

\begin{lemma}\label{lem:grad-pot-bound}
Let $f$ be as in \Cref{eq:schrod-equations}.
Suppose that $\rho_1 = \gamma^d$, $\klb{\rho_0}{\gamma^d} < \infty$ and $T \geq \frac{1}{2}\log(4)$, then we have that,
\begin{equation}
    \label{eq:grad-pot-bound}
    \int \normofLigne{\nabla \log P_T f}^2 \der \gamma^d \leq \frac{4}{e^{2T} - 1} \left( \klb{\rho_0}{\gamma^d} + e^{-2T} d \right)\eqsp.
\end{equation}
\end{lemma}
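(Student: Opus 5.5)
The plan is to exploit the special structure forced by $\rho_1 = \gamma^d$ and then absorb the unknown quantity into itself. Write $I := \int \normofLigne{\nabla \log P_T f}^2 \der \gamma^d$, which is the left-hand side of \eqref{eq:grad-pot-bound}.

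\textbf{Step 1: rewrite $I$ and apply a reverse log-Sobolev inequality.} The second Schrödinger equation in \eqref{eq:schrod-equations} with $\der\rho_1/\der\gamma^d = 1$ gives $g = 1/P_T f$. Hence $\nabla \log P_T f = -\nabla \log g$, and $I$ is also the Fisher-type energy of $\log g$. I would use the reverse (local) log-Sobolev inequality for the Ornstein--Uhlenbeck semigroup with generator $\Delta - x\cdot\nabla$, which has curvature $1$:
\begin{align*}
    \normof{\nabla \log P_T f}^2 \leq \frac{2}{e^{2T}-1}\, \frac{P_T(f\log f) - P_T f \log P_T f}{P_T f} \eqsp .
\end{align*}
Integrating against $\gamma^d$ and using $1/P_T f = g$ and the $\gamma^d$-symmetry of $P_T$, the right-hand side becomes $\frac{2}{e^{2T}-1}$ times
\begin{align*}
    \int P_T g \, f\log f \,\der\gamma^d - \int \log P_T f \,\der\gamma^d
    = \klb{\rho_0}{\gamma^d} - \int \log P_T g \,\der\rho_0 + \int \log g \,\der\gamma^d \eqsp .
\end{align*}
Here I used $\der\rho_0/\der\gamma^d = f\,P_T g$, which gives $\log f = \log(\der\rho_0/\der\gamma^d) - \log P_T g$.

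\textbf{Step 2: reduce the remainder to a transport term.} By Jensen, $\log P_T g \geq P_T \log g$. The remainder is therefore at most
\begin{align*}
    \int \log g \,\der\gamma^d - \int \log g \,\der(\rho_0 P_T) = \int \bigl(\gamma^d(\log g) - P_T \log g\bigr)\,\der\rho_0 \eqsp .
\end{align*}
To control this, I would couple the OU process started from $\rho_0$ with the stationary one synchronously, so that the two endpoints differ by $e^{-T}(X_0 - Y_0)$. Writing $\log g(x) - \log g(y)$ as a line integral of $\nabla\log g$ and applying Cauchy--Schwarz, the remainder should be bounded by
\begin{align*}
    e^{-T}\, \wasserstein_2(\rho_0,\gamma^d)\, \sqrt{C\,I} \eqsp ,
\end{align*}
with $C$ an absolute constant. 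The Gaussian-interpolation structure of OU marginals keeps the relevant measures comparable to $\gamma^d$. Talagrand's inequality for $\gamma^d$ gives $\wasserstein_2(\rho_0,\gamma^d)^2 \leq 2\klb{\rho_0}{\gamma^d}$. Where the comparison to $\gamma^d$ is lossy, I would instead bound the second moment directly, which is where the $e^{-2T} d$ term should come from.

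\textbf{Step 3: absorb $I$.} Combining Steps 1 and 2 gives an inequality of the form
\begin{align*}
    I \leq \frac{2}{e^{2T}-1}\Bigl(\klb{\rho_0}{\gamma^d} + e^{-T}\sqrt{a\, I}\Bigr) \eqsp ,
\end{align*}
where $a$ is $2\klb{\rho_0}{\gamma^d}$ plus a dimension term. Young's inequality, $xy \leq \tfrac{\epsilon}{2} x^2 + \tfrac{1}{2\epsilon} y^2$, then splits off $\tfrac12 I$. The hypothesis $T \geq \tfrac12 \log 4$ makes $e^{2T}-1 \geq 3$, which keeps the coefficient of $I$ on the right at most $\tfrac12$. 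Rearranging yields the factor $4/(e^{2T}-1)$ and the claimed bound.

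\textbf{Main obstacle.} I expect Step 2 to be the hardest part. The coupling bound must be stated against $\gamma^d$ (where $I$ lives) rather than against the intermediate laws, and the resulting constants must come out exactly as $\klb{\rho_0}{\gamma^d} + e^{-2T} d$. If the synchronous-coupling argument is too lossy, the fallback is to use the entropy chain rule on the path measure $Q^*$ (as in \eqref{eq:sb_init_cond}). This expresses $\int \log g \,\der(\gamma^d - \rho_0P_T)$ through $\klb{Q^*}{R}$, which is at most $\klb{\rho_0}{\gamma^d}$ plus the entropic cost of the bridge. That cost is in turn controlled by \Cref{lem:q_log_concave} via the $1$-strong log-concavity of $Q^*_{T|0}$, which supplies the $e^{-2T} d$ contribution.
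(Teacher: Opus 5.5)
Your Step 1 is correct, and it reproduces the paper's starting point. Since $\der Q^*_{0,T}/\der R_{0,T}(z,x)=f(z)g(x)$, the quantity $\int \log f\,\der\rho_0+\int \log g\,\der\gamma^d$ you arrive at equals $\klb{Q^*_{0,T}}{R_{0,T}}=\inf_\pi\klb{\pi}{R_{0,T}}$. So you have re-derived the bound of \citet{Chiarini2022-cq} that the paper invokes, and your ``remainder'' is exactly $\int\klb{Q^*_{T|0}(\cdot|z)}{R_{T|0}(\cdot|z)}\,\rho_0(\der z)$.

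The gap is Step 2. The synchronous coupling evaluates $\nabla\log g$ under the laws of the interpolants $(1-u)X_T+uY_T$; for $u=0$ this law is $\rho_0P_T$. With only $\klb{\rho_0}{\gamma^d}<\infty$, these laws are not comparable to $\gamma^d$ with an absolute constant. The only available density bound, $\Tilde{p}_{T|0}(x|w)\leq(1-e^{-2T})^{-d/2}e^{\normofLigne{w}^2/2}$, is neither absolute nor bounded in $w$.

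In fact the Jensen step already discards too much. Take $\rho_0=\Nrm(0,s^2\Irm_d)$ with $s>1$, which has finite KL. Then $\log g(x)=-\frac{\beta}{2}\normofLigne{x}^2+\mathrm{const}$ with $\beta>0$, so $I=\beta^2 d$. Your post-Jensen quantity equals $\frac{\beta}{2}e^{-2T}(s^2-1)d=\frac12 e^{-2T}(s^2-1)\sqrt{d}\,\sqrt{I}$, whereas $\wasserstein_2(\rho_0,\gamma^d)=(s-1)\sqrt{d}$. Any inequality of your form therefore needs $\sqrt{C}\geq e^{-T}(s+1)/2$. Feeding that into Step 3 gives a bound on $I$ of order at least $e^{-8T}s^4 d$, which is not $\mathcal{O}(e^{-2T}(\klb{\rho_0}{\gamma^d}+d))=\mathcal{O}(e^{-2T}s^2 d)$ as $s\to\infty$.

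The fallback does not repair this, for two reasons:
\begin{itemize}
\item ``$\klb{\rho_0}{\gamma^d}$ plus the entropic cost'' is circular, because the entropic cost is $\klb{Q^*_{0,T}}{R_{0,T}}$ itself.
\item The $1$-strong log-concavity of $Q^*_{T|0}$ gives an LSI controlling $\klb{\cdot}{Q^*_{T|0}}$, which is the opposite direction to your remainder.
\end{itemize}
Even granting Step 2, splitting off $\frac12 I$ would leave the coefficient of $\klb{\rho_0}{\gamma^d}$ above $4/(e^{2T}-1)$, because $a$ contains $2\klb{\rho_0}{\gamma^d}$.

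The missing idea is to control the remainder before any Jensen step, using the identification above. The paper uses optimality of the static Schrödinger problem:
\[
\klb{Q^*_{0,T}}{R_{0,T}}\leq\klb{\rho_0\otimes\gamma^d}{R_{0,T}}=\mathbb{E}_{X\sim\gamma^d}\bigl[\klb{\rho_0}{\Nrm(e^{-T}X,(1-e^{-2T})\Irm_d)}\bigr].
\]
This is an explicit Gaussian computation in which the cross term vanishes. The second moment is handled by Donsker--Varadhan, $\mathbb{E}_{\rho_0}\normofLigne{Z}^2\leq 4\klb{\rho_0}{\gamma^d}+2d$. With $T\geq\frac12\log 4$ this gives $2\klb{\rho_0}{\gamma^d}+2e^{-2T}d$, which is where the $e^{-2T}d$ term comes from.

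Alternatively, your absorption strategy works with a different inequality:
\begin{itemize}
\item Apply the LSI of the Gaussian kernel $R_{T|0}(\cdot|z)=\Nrm(e^{-T}z,(1-e^{-2T})\Irm_d)$, with constant $1-e^{-2T}$, to $Q^*_{T|0}(\cdot|z)$. Its log-density ratio $\log g(x)-\log P_T g(z)$ has $x$-gradient $-\nabla\log P_T f(x)$.
\item Integrating in $z\sim\rho_0$ and using $Q^*_T=\gamma^d$ bounds the remainder by $\frac{1-e^{-2T}}{2}I$.
\item Hence $I\leq\frac{2}{e^{2T}-1}\klb{\rho_0}{\gamma^d}+e^{-2T}I$, i.e.\ $I\leq\frac{2e^{2T}}{(e^{2T}-1)^2}\klb{\rho_0}{\gamma^d}$. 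Finiteness of $I$, needed to absorb, follows from Step 1 together with the product-coupling bound.
\end{itemize}
This implies the lemma for $T\geq\frac12\log 4$ and is sharper: it has no $d$ term, and it is an equality for Gaussian shifts $\rho_0=\Nrm(m,\Irm_d)$.
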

\begin{proof}
From Proposition 2.4 of \cite{Chiarini2022-cq}, we obtain the bound,
\begin{equation}
    \int \normofLigne{\nabla \log P_T f}^2 \der \gamma^d \leq \frac{2}{e^{2T} - 1} \, \inf_\pi \klb{\pi}{R_{0, T}}\eqsp,\label{eq:grad-pot-bound0}
\end{equation}
where the infimum is taken over all couplings between $\rho_0$ and $\gamma^d$. We upper bound this infimum by choosing the naïve coupling, $\pi = \rho_0 \otimes \gamma^d$, producing the bound,
\begin{equation}
    \inf_\pi \klb{\pi}{R_{0, T}} \leq \klb{\rho_0 \otimes \gamma^d}{R_{0, T}} = \mathbb{E}_{X \sim \gamma^d} \left [ \klb{\rho_0}{R_{0|T}(\cdot \mid X)}\right ]\eqsp.\label{eq:grad-pot-bound1}
\end{equation}
By reversibility of the Ornstein--Uhlenbeck process with respect to $\gamma^d$, we have that,
\begin{equation*}
    R_{0,T}(\der z, \der x) = \ora{p}_{T|0}(x \mid z) \gamma^d(z) \, \der z \, \der x = \ora{p}_{T|0}(z \mid x) \gamma^d(x) \, \der z \, \der x \eqsp,
\end{equation*}
and consequently, the conditional distribution $\der R_{0|T}(z \mid x)$ coincides exactly with the forward transition kernel started at $x$, and therefore has Lebesgue density, $\ora{p}_{T|0}(z\mid x)$. We then relate this to the quantity $\klb{\rho_0}{\gamma^d}$ by first developing the identity,
\begin{align*}
    \klb{\rho_0}{\ora{p}_{T|0}(\cdot \mid x)} - \klb{\rho_0}{\gamma^d} &= - \mathbb{E}_{Z \sim \rho_0} \left [ \log{\ora{p}_{T|0}(Z|x)} \right ] + \mathbb{E}_{Z \sim \rho_0} \left [ \log{\gamma^d(Z)} \right ]\\
    &= \frac{d}{2} \log(1 - e^{-2T}) + \frac{1}{2(1-e^{-2T})}\mathbb{E}_{Z \sim \rho_0}[\normofLigne{Z -e^{-T} x}^2] - \frac{1}{2} \mathbb{E}_{Z \sim \rho_0}[\normofLigne{Z}^2]\\
    &= \frac{d}{2} \log(1 - e^{-2T}) + \frac{e^{-2T}}{2(1-e^{-2T})} (\mathbb{E}_{Z \sim \rho_0}[\normofLigne{Z}^2] + \normofLigne{x}^2) - \frac{e^{-T}}{1-e^{-2T}} \mathbb{E}_{Z \sim \rho_0}[\langle Z, x \rangle].
\end{align*}
To control the second moment $\mathbb{E}_{Z \sim \rho_0}[\normofLigne{Z}^2]$, we apply the Donsker-Varadhan variational principle to obtain,
\begin{align*}
    \frac{1}{4} \mathbb{E}_{Z \sim \rho_0}[\|Z\|^2] &\leq \klb{\rho_0}{\gamma^d} + \log \mathbb{E}_{Z \sim \gamma^d}[e^{\|Z\|^2/4}]\\
    &\leq \klb{\rho_0}{\gamma^d} + \frac{d}{2}.
\end{align*}
With this, we obtain the upper bound,
\begin{align}
    \mathbb{E}_{X \sim \gamma^d} \left [ \klb{\rho_0}{R_{0|T}(\cdot \mid X)}\right ] &\leq \klb{\rho_0}{\gamma^d} + \frac{e^{-2T}}{2(1-e^{-2T})}(4 \klb{\rho_0}{\gamma^d} + 3d)\\
    &\leq 2 \klb{\rho_0}{\gamma^d} + 2 e^{-2T} d,\label{eq:grad-pot-bound2}
\end{align}
where the final inequality follows from $T \geq \frac{1}{2} \log(4)$. Substituting \eqref{eq:grad-pot-bound1} and \eqref{eq:grad-pot-bound2} into \eqref{eq:grad-pot-bound0}, we obtain the bound in the statement.
\end{proof}

\subsubsection{Proof of \Cref{prop:negative-result}}
We instantiate the Schrödinger bridge problem \eqref{eq:sb-problem} with $\rho_0 = \nu$ and $\rho_1 = \gamma^d$. Since $\klb{\nu}{\gamma^d} < \infty$, it follows that a unique solution exists and we denote the resulting bridge by $Q^*$. We also denote the Schrödinger potentials by $f, g$ which satisfy \eqref{eq:fg-decomp-general} and, as a result of \eqref{eq:schrod-equations}, satisfy $P_T f \cdot g \equiv 1$. It is this optimal path measure $Q^*$ which we use to choose the score function that closes the score matching gap.

By the bridging property \citep[Proposition 2.3]{Leonard2013-or}, the path-space KL divergence reduces to the joint endpoints,
\begin{align*}
    \klb{R}{Q^*} &= \klb{R_{0,T}}{Q^*_{0,T}} + \int \klb{R_{:|0,T}}{Q^*_{:|0,T}} \der R_{0,T}\\
    &= \klb{R_{0,T}}{Q^*_{0,T}}\eqsp,
\end{align*}
or equivalently, $\klb{R_{:|0,T}}{Q^*_{:|0,T}} = 0$ almost everywhere. Similarly, we can apply this property to the reference measure $R^\mu$ starting at initial distribution $\mu$ to obtain,
\begin{align}
    \klb{R^\mu}{Q^*} &= \klb{R^\mu_{0,T}}{Q^*_{0,T}} + \int \klb{R_{:|0,T}}{Q^*_{:|0,T}} \der R^\mu_{0,T}\nonumber\\
    &= \klb{R^\mu_{0,T}}{Q^*_{0,T}}\eqsp,\label{eq:negative_1}
\end{align}
where we use that $R^\mu_{:|0,T} = R_{:|0,T}$. Furthermore, the chain rule for the KL divergence yields,
\begin{equation}\label{eq:negative_2}
    \klb{R^{\mu}_{0, T}}{Q^*_{0,T}} = \klb{\mu}{\nu} + \Delta_T, \qquad \Delta_T := \int \klb{R^{\mu}_{T|0}(\cdot \mid x)}{Q^*_{T|0}(\cdot \mid x)} \der \mu(x)\eqsp,
\end{equation}
where we use that $Q^*_0 = \nu$. Combining \eqref{eq:negative_1} and \eqref{eq:negative_2}, we obtain a correspondence between the marginal and path measure KL divergence:
\begin{equation}\label{eq:negative_2_h}
    \klb{\mu}{\nu} = \klb{R^\mu}{Q^*} - \Delta_T.
\end{equation}

Next, we control the quantity $\Delta_T$. From Lemma \ref{lem:q_log_concave}, we have that for any $z \in \R^d$, $Q^*_{T|0}(\cdot|z)$ is $1$-strongly log-concave. Hence by the Bakry--Émery criterion \cite{bakry_analysis_2014}, it must satisfy a logarithmic Sobolev inequality \eqref{def:log-sobolev-inequality}, with constant $1$. By \eqref{eq:q_log_concave_2}, we have
\begin{equation*}
    \frac{\der R^{\mu}_{T|0}}{\der Q^*_{T|0}} (x \mid z) = P_T f(x) \, P_T g(z) \eqsp.
\end{equation*}
Thus, the logarithmic Sobolev inequality produces the bound,
\begin{equation}\label{eq:negative_3}
    \Delta_T \leq \frac{1}{2} \int \int \norm{\nabla \log P_T f}^2 \der R^\mu_{T|0}(\cdot|z) \, \der \mu(z) = \frac{1}{2} \int \norm{\nabla \log P_T f}^2 \der \muf_T\eqsp.
\end{equation}
To apply the bound in Lemma \ref{lem:grad-pot-bound}, we must replace $\muf_T$ by $\gamma^d$. For this, we observe that for any $y \in \Rd$,
\begin{equation*}
    \Tilde{p}_{T|0}(x|y) = (1 - e^{-2T})^{-d/2}\, \exp\!\left( \frac{\normofLigne{y}^2}{2} - \frac{e^{-2T} \normofLigne{x - e^T y}^2}{2(1 - e^{-2T})} \right) \leq (1 - e^{-2T})^{-d/2}\, e^{\normofLigne{y}^2/2}\eqsp.
\end{equation*}
Since $\der \muf_T/\der\gamma^d(x) = \int \Tilde{p}_{T|0}(x|y)\, \mu(\der y)$, we thus obtain,
\begin{equation}\label{eq:negative_4}
    \frac{\der \muf_T}{\der \gamma^d}(x) \leq (1 - e^{-2T})^{-d/2}\, e^{R^2/2}\eqsp,
\end{equation}
where we use that $\supp(\mu) \subseteq B_R(\mathbf{0})$. Thus, we conclude that combining \eqref{eq:negative_3} and \eqref{eq:negative_4} and applying Lemma \ref{lem:grad-pot-bound}, we obtain,
\begin{align*}
    \Delta_T &\leq \frac{1}{2} (1 - e^{-2T})^{-d/2}\, e^{R^2/2} \int \norm{\nabla \log P_T f}^2 \der \gamma^d\\
    &\leq \frac{2(1 - e^{-2T})^{-d/2}\, e^{R^2/2}}{e^{2T} - 1}\, \left( \klb{\nu}{\gamma^d} + e^{-2T} d \right)\eqsp.
\end{align*}
Thus, whenever $T \geq \log{d}$, we have that
\begin{equation*}
    \Delta_T \leq 2 e^{\frac{1+R^2}{2}-2T} (1 + \klb{\nu}{\gamma^d}),
\end{equation*}
and in particular, $\klb{R^\mu}{Q^*} < \infty$.

By the argument of \eqref{eq:bridge-drift}, the path measure $Q^*$ coincides with the forward SDE,
\begin{equation*}
    \der X_t = \bigl( -X_t + 2 \nabla \log (P_{T-t} g)(X_t) \bigr) \der t + \sqrt{2}\, \der \Bm_t\eqsp, \qquad X_0 \sim \nu\eqsp.
\end{equation*}
We now reverse $Q^*$ in time. Since $\klb{R^\mu}{Q^*} < \infty$, it follows from \citep{Cattiaux2023-qv} that the reversed path measure $\ola{Q^*}$ is the law of the diffusion started at $\ola{Q^*}_0 = Q^*_T = \gamma^d$ with drift,
\begin{equation*}
    \ola{b}^{Q^*}(t, y) = y - 2 \nabla \log (P_t g)(y) + 2 \nabla \log q^*_{T-t}(y)\eqsp.
\end{equation*}
In fact, we can state the drift more directly in terms of the $fg$-decomposition. From \eqref{eq:fg-decomp-general}, it follows that the marginals have density,
\begin{equation}
    \label{eq:Qstar-marginal}
    q^*_t(x) = (P_t f)(x)\, (P_{T-t} g)(x)\, \gamma^d(x)\eqsp,
\end{equation}
where we use that, by reversibility, $\mathbb{E}_R[f(X_0)\mid X_t = x] = P_t f(x)$. Consequently,
using $\nabla \log \gamma^d(x) = -x$,
\begin{align*}
    \ola{b}^{Q^*}(t, y)
    &= y - 2 \nabla \log (P_t g)(y) + 2 \bigl( \nabla \log (P_{T-t} f)(y) + \nabla \log (P_t g)(y) - y \bigr)\\
    &= -y + 2 \nabla \log (P_{T-t} f)(y)\eqsp.
\end{align*}
Thus, by defining the score as,
\begin{equation}\label{eq:score-def}
    s^\theta(\tau, x) := 2 \nabla \log (P_\tau f)(x)\eqsp,
\end{equation}
the reverse process $\xb^\theta_t$ coincides with the reverse path measure $\ola{Q^*}$. 

Furthemore, the time-reversal $\ola{R}^\mu$ of the path measure $R^\mu$ also coincides with the true reverse process $\xb_t$ defined in \eqref{eq:true-backward-equation}. Since the KL divergence is invariant under time reversal, we have that the reversal $\ola{R}^\mu$ of the path measure $R^\mu$ satisfies, $\klb{\ola{R}^\mu}{\ola{Q}^*} = \klb{R^\mu}{Q^*} < \infty$. Therefore, it follows from \cite{Leonard2012-vv} that a Girsanov-type identity holds:
\begin{equation}\label{eq:negative_5}
    \klb{\ola{R}^\mu}{\ola{Q}^*} \geq \frac{1}{4} \int_0^T \Eof{\varepsilon_t^\theta(\xf_t)^2} \der t + \klb{\backmu_0}{Q_0} = \frac{1}{4} \int_0^T \Eof{\varepsilon_t^\theta(\xf_t)^2} \der t + \klb{\muf_T}{\gamma^d}\eqsp.
\end{equation}
Combining the two inequalities \eqref{eq:negative_2_h} and \eqref{eq:negative_5} yields the bound in the statement.
\qed

\subsection{Technical lemmas}
\label{sec:technical-lemmas}

The next lemma regards the regularity of the true backward density $\backptheta$ and is inspired from \citep{conforti2025kl}. 

\begin{lemma}
    \label{lemma:regularity-true-backward-process}
    Assume that $\mu$ is absolutely continuous with respect to the Lebesgue measure. Then the density $(t,x) \mapsto \backp_t$ of the true backward process is in $\Crm^{1,2} ([0,T) \times \Rd )$.
\end{lemma}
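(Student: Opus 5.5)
The plan is to write $\backp_t$ as an explicit Gaussian convolution and differentiate under the integral. Since $\xb_t = \xf_{T-t}$, we have $\backp_t = \pf_{T-t}$. For $\tau := T - t > 0$, the Ornstein--Uhlenbeck transition kernel gives
\begin{align*}
    \pf_\tau(x) = \int_{\Rd} (2\pi \sigma_\tau^2)^{-d/2} \exp\left( -\frac{\normof{x - e^{-\tau} y}^2}{2\sigma_\tau^2} \right) \mu(\der y) \eqsp, \qquad \sigma_\tau^2 := 1 - e^{-2\tau} \eqsp.
\end{align*}
The restriction $t \in [0,T)$ is exactly what guarantees $\tau \in (0,T]$, so $\sigma_\tau^2 \geq \sigma_{\tau_0}^2 > 0$ uniformly on any slab $\tau \geq \tau_0$. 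This is the only place where smoothing is needed, so absolute continuity of $\mu$ is not really used here beyond ensuring $\pf_\tau$ is the density of $\muf_\tau$. It is, however, needed elsewhere to make sense of the Fokker--Planck equation at $t = T$.

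The key steps, in order:

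\textbf{Step 1.} Fix a compact set $[\tau_0,T] \times \overline{B_r(0)}$ with $\tau_0 > 0$. Compute the derivatives of the kernel $k(\tau,x,y)$ with respect to $\tau$, $x$, and $\nabla_x^2$. Each one is the kernel multiplied by a polynomial in $(x, e^{-\tau}y)$ with coefficients that are smooth in $\tau$. The $\tau$-derivative involves $\partial_\tau \sigma_\tau^2 = 2e^{-2\tau}$ and $\partial_\tau(e^{-\tau}y) = -e^{-\tau} y$.

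\textbf{Step 2.} Produce a $\mu$-integrable dominating function, uniform over the compact set. For any $y$, a polynomial of degree at most $4$ in $\normof{x - e^{-\tau}y}$ and $\normof{y}$, times $\exp(-\normof{x-e^{-\tau}y}^2/(2\sigma_\tau^2))$, must be bounded by a constant.

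This is the main obstacle. The factor $\normof{y}$ coming from $\partial_\tau$ is not obviously controlled when $\mu$ has no moments. I would handle it by writing $\normof{y} \leq e^{\tau}(\normof{x - e^{-\tau}y} + \normof{x}) \leq e^{T}(\normof{x - e^{-\tau}y} + r)$. The Gaussian factor in $u = \normof{x - e^{-\tau}y}$ then absorbs every polynomial in $u$, since $\sup_u u^k e^{-u^2/(2\sigma^2)} < \infty$ with $\sigma$ bounded above and below. This yields a constant bound, which is trivially $\mu$-integrable because $\mu$ is a probability measure. By the same computation, all spatial derivatives up to order $2$ (indeed any order) are bounded on $[\tau_0,T]\times\Rd$, giving the boundedness claim made before \Cref{rq:regularity-of-fokker-planck}.

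\textbf{Step 3.} Apply dominated convergence (differentiation under the integral sign) to get existence of $\partial_\tau \pf_\tau$, $\nabla \pf_\tau$ and $\nabla^2 \pf_\tau$. Continuity of these derivatives in $(\tau,x)$ follows again by dominated convergence, using the same uniform bound together with the pointwise continuity of the integrands.

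\textbf{Step 4.} Transport the result through the smooth change of variable $\tau = T - t$. This gives $(t,x)\mapsto \backp_t(x) \in \Crm^{1,2}([0,T)\times\Rd)$. Since $\sigma_\tau^2 \to 0$ as $\tau \to 0$, the regularity genuinely degenerates at $t = T$, which explains the open endpoint in the statement.
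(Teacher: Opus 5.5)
Your proof is correct, but it takes a different route from the paper. The paper computes nothing. It cites \citet{conforti2025kl} (and \citet{bogachev_fokkerplanckkolmogorov_2015}) for the fact that $(t,x)\mapsto \pf_t(x)$ is in $\Crm^{1,2}((0,T]\times\Rd)$, and then transfers this through $\backp_t=\pf_{T-t}$, which is your Step 4.

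You instead prove the forward regularity directly, from the explicit Ornstein--Uhlenbeck kernel and differentiation under the integral sign. The key point is your bound $\normof{e^{-\tau}y}\leq\normof{x-e^{-\tau}y}+\normof{x}$. It lets the Gaussian factor absorb the $y$-dependence of $\partial_\tau k$, so the integrand is dominated by a constant on $[\tau_0,T]\times\overline{B_r(0)}$ with no moment assumption on $\mu$.

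Your route buys three things:
\begin{itemize}
\item It is self-contained and elementary.
\item It shows the absolute-continuity hypothesis is superfluous. For $\tau>0$, $\muf_\tau$ always has the Gaussian-convolution density, so even your remark that absolute continuity is used ``to ensure $\pf_\tau$ is the density of $\muf_\tau$'' can be dropped.
\item It yields boundedness of the spatial derivatives up to order two on $\{t\leq T-\tau_0\}\times\Rd$. The main text attributes this property to this section, but the lemma as stated does not record it.
\end{itemize}
The paper's route is shorter, but all the nontrivial content sits in the citation.
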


\begin{proof}
    This follows from classical results \cite{bogachev_fokkerplanckkolmogorov_2015}. In particular, when $\mu$ is absolutely continuous with respect to the Lebesgue measure, it was proven by \citet{conforti2025kl} that the map $(t,x) \mapsto \pf_t(x)$ is in $\Crm^{1,2}((0,T] \times \Rd)$, where $\pf_t$ is the density of the forward process \eqref{eq:forward-Ornstein-Uhlenbeck}. The results follows immediately from the relation $\backp_t (x) := \pf_{T - t}(x)$.
\end{proof}

\subsection{Proof of \Cref{prop:entropy-flow-right-kl}}
\label{sec:proofs-entropy-flows}

In this section, we prove \Cref{prop:entropy-flow-right-kl}. First, we recall that we use the notation $\partial_t$ for $\partial / \partial_t$, and
\begin{align*}
    b_t (x) := 2\nabla \log \Tilde{p}( t, x) - x \eqsp, \qquad b_t^\theta (x) := s^\theta(t,x) - x \eqsp.
\end{align*}

\begin{proof} (of \Cref{prop:entropy-flow-right-kl})
    Note that our assumptions imply that the quantities appearing below are finite.
    
    Let $\Phi(u) := u \log(u)$ with $\Phi(0) = 0$, and denote
    \begin{align*}
        v_t := \frac{\backp_t}{\backptheta_t}\eqsp.
    \end{align*}
    At least formally, we have for $t \in (0,T)$
    \begin{align*}
        \frac{\der}{\der t} \klb{\backmu_t}{\backmutheta_t} &= \frac{\der}{\der t} \int \Phi(v_t) \backptheta_t \der x \\
        &= \int \Phi'(v_t) \left( \partial_t \backp_t - v_t \partial_t \backptheta_t \right)\der x + \int \Phi(v_t) \partial_t \backptheta_t \der x \\
        &=: \rmC_{\mathrm{diffusion}} + \rmC_{\mathrm{score}}\eqsp,
    \end{align*}
     where we use the Fokker-Planck equations for both distributions and separate the contributions of the Laplace operators and the potential terms. This gives
    \begin{align*}
        \rmC_{\mathrm{diffusion}} := \int \Phi'(v_t) \left( \Delta \backp_t - v_t \Delta \backptheta_t \right)\der x + \int \Phi(v_t) \Delta \backptheta_t \der x \eqsp.
    \end{align*}
    By integrating by parts, we obtain that
    \begin{align*}
        \rmC_{\mathrm{diffusion}} &= \int \left\{ v_t \Delta (\Phi' (v_t)) - \Delta (v_t \Phi'(v_t)) + \Delta (\Phi(v_t)) \right\} \backptheta_t \der x \\
        &= \int \left\{ - \Phi'(v_t) \Delta v_t - 2 \Phi''(v_t) \normof{\nabla v_t}^2 + \Delta (\Phi(v_t)) \right\} \backptheta_t \der x \\
        &= - \int  \Phi''(v_t) \normof{\nabla v_t}^2 \backptheta_t \der x \\
        &= - \fisher{\backmu_t}{\backmutheta_t} \eqsp,
    \end{align*}
    where we noted that $\Phi''(x) := 1 / x$.
    For the second term, we have
    \begin{align*}
        \rmC_{\mathrm{score}} := -\int \Phi'(v_t) \left( \nabla \cdot (\backp_t b_{T - t}) - v_t \nabla \cdot (\backptheta_t b_{T - t}^\theta) \right)\der x - \int \Phi(v_t) \nabla \cdot (\backptheta_t b^\theta_{T - t})  \der x \eqsp.
    \end{align*}
    By integrating by parts again, we have
    \begin{align*}
         \rmC_{\mathrm{score}} = \int \left\{ \Phi''(v_t) v_t \langle \nabla v_t, b_{T - t} \rangle - \Psi'(v_t) \langle \nabla v_t, b_{T - t}^\theta \rangle \right\} \backptheta_t \der x \eqsp, 
    \end{align*}
    with $\Psi(x) := x \Phi'(x) - \Phi(x)$ (in our case, $\Psi(x) = x$). This gives
    \begin{align*}
        \rmC_{\mathrm{score}} = \int v_t \Phi''(v_t) \langle \nabla v_t, b_{T - t} - b_{T - t}^\theta \rangle \backptheta_t \der x \eqsp.
    \end{align*}
    Consider an arbitrary mapping $a : \R_+ \to (0, 1]$.
    Using the expression of $\Phi$ and the Cauchy-Schwarz and Young inequalities, we have
    \begin{align}
        \label{eq:young-inequality-application}
        \rmC_{\mathrm{score}} &\leq a(t) \int \frac{\normof{\nabla v_t}^2}{v_t} \backptheta_t \der x + \frac1{4a(t)} \int \normof{b_{T - t} - b_{T - t}^\theta}^2 v_t\backptheta_t \der x \\
        &= a(t) \fisher{\backmu_t}{\backmutheta_t} + \frac1{4a(t)} \int \normof{2 \nabla \log \Tilde{p}_{T - t} (x) - s^{\theta} (T - t, x) }^2 \backp_t(x) \der x \eqsp.
    \end{align}
    The conclusion immediately follows by the change of variable $\alpha(t) := 1 - a(t)$.
\end{proof}

\subsection{Proofs of \Cref{sec:right-kl-bounds}}
\label{sec:proof-forward-KL-bounds}

\subsubsection{Proof of \Cref{thm:generic-forward-bound-lsi}}

\begin{proof} (of \Cref{thm:generic-forward-bound-lsi})
    Let $t \in (0, T)$, by \Cref{prop:entropy-flow-right-kl}, we have, for any continuous mapping $C: \R_+ \to [0, 1)$,
    \begin{align*}
        \frac{\der}{\der t} \klb{\backmu_t}{\backmutheta_t} \leq - \alpha(t) \fisher{\backmu_t}{\backmutheta_t} + \frac1{4(1 - \alpha(t))}\Eof{\normof{2 \nabla \log \Tilde{p}_{T - t} (\ora{X}_{T - t}) - s^{\theta} (T - t, \ora{X}_{T - t}) }^2 }\eqsp.
    \end{align*}
    By the assumed logarithmic Sobolev inequality for $\backmutheta_t$, we have
    \begin{align*}
    \frac{\der}{\der t} \klb{\backmu_t}{\backmutheta_t} \leq - \frac{2 \alpha(t)}{\rho(t)} \klb{\backmu_t}{\backmutheta_t} + \frac1{4(1 - \alpha(t))}\Eof{\normof{2 \nabla \log \Tilde{p}_{T - t} (\ora{X}_{T - t}) - s^{\theta} (T - t, \ora{X}_{T - t}) }^2 }\eqsp.
    \end{align*}
    Consider $\epsilon, \delta \in (0,T)$ with $T - \epsilon > \delta$. As $1 / \rho$ is assumed to be continuous, by Grönwall's lemma \cite{Gronwall1919NoteOT}, we obtain
    \begin{align*}
        \klb{\backmu_{T - \epsilon}}{\backmutheta_{T - \epsilon}} \leq &\klb{\backmu_\delta}{\backmutheta_\delta} \exp \left\{-\int_{\delta}^{T - \epsilon}\frac{2 \alpha(u)}{\rho(u)} \der u \right\} \\&
         +\int_{\delta}^{T - \epsilon} \exp \left\{ - \int_{t}^{T - \epsilon}\frac{2\alpha(u)}{\rho(u)} \der u \right\} \frac{\Eof{\normof{2 \nabla \log \Tilde{p}_{T - t} (\ora{X}_{T - t}) - s^{\theta} (T - t, \ora{X}_{T - t})  }^2 }}{4 (1 - \alpha(t))} \der t \eqsp.
    \end{align*}
    We know that the $t \mapsto \xf_t$ is almost-surely continuous at $t=0$ and that $t \mapsto \xb_t^\theta$ is almost surely continuous at $t=T$. As almost sure convergence implies convergence in distribution, this implies the convergences in distribution $\backmu_t \rightharpoonup \mu$ and $\backmutheta_{T - t} \rightharpoonup \backmutheta_T$ as $t \to 0^+$.
    By the joint lower semi-continuity of relative entropy \citep[Theorem 19]{vanerven2014renyi} with respect to the weak convergence of distributions and up to taking a sequence $\epsilon_n \to 0$, we can take the inferior limit as $\epsilon \to 0^+$ to get that
    \begin{align*}
        \klb{\mu}{\backmutheta_T} \leq & ~ \exp \left\{ -\int_\delta^{T} \frac{2 \alpha(s)}{\rho(s)}  \der s\right\} \klb{\backmu_\delta}{\backmutheta_\delta} \\
        & + \int_\delta^{T}  \exp \left\{ -\int_t^{T} \frac{2\alpha(s)}{\rho(s)} \der s \right\} \frac{\Eof{\normof{2 \nabla \log \Tilde{p}_{T - t} (\ora{X}_{T - t}) - s^{\theta} (T - t, \ora{X}_{T - t}) }^2 }}{4(1 - \alpha(t))} \der t \eqsp.
    \end{align*}
    Now we take the superior limit as $\delta \to 0^+$ to obtain that
    \begin{align*}
        \klb{\mu}{\backmutheta_T}\leq & ~ \exp \left\{ -\int_0^{T} \frac{2\alpha(s)}{\rho(s)} \der s \right\} \limsup_{\delta\to 0^+} \klb{\backmu_\delta}{\backmutheta_\delta} \\
        & + \int_0^{T}  \exp \left\{ -\int_t^{T} \frac{2\alpha(s)}{\rho(s)} \der s \right\} \frac{\Eof{\normof{2 \nabla \log \Tilde{p}_{T - t} (\ora{X}_{T - t}) - s^{\theta} (T - t, \ora{X}_{T - t}) }^2 }}{4(1 - \alpha(t))} \der t \eqsp.
    \end{align*}

    For the term $\klb{\backmu_\delta}{\backmutheta_\delta}$, we note that our assumptions are sufficient to apply \citep[Theorem 1.1]{bogachev_distances_2016}, in particular we apply their Remark 1.3, corresponding to the case where the initializations of the two processes differ. This gives us that
    \begin{align*}
        \klb{\backmu_\delta}{\backmutheta_\delta} &\leq \klb{\muf_T}{\gamma^d} + \int_0^\delta \Eof{\normof{2 \nabla \log \Tilde{p}_{T - t} (\ora{X}_{T - t}) - s^{\theta} (T - t, \ora{X}_{T - t}) }^2 } \der t \\& \underset{\delta \to 0^+}{\longrightarrow} \klb{\muf_T}{\gamma^d} \eqsp .
    \end{align*}
    where the convergence follows from \Cref{ass:locally-bounded-right-KL}. This concludes the proof.
    
\end{proof}

\subsubsection{Proof of \Cref{thm:right-KL-exponential-decay-pseudo-Lipschizt}}

Before presenting the proof of \Cref{thm:right-KL-exponential-decay-pseudo-Lipschizt}, we give the following technical lemma, which provides an estimate of the log-Sobolev constant of $\backmutheta_t$ under a one-sided Lipschitz condition on the score network. This is a direct corollary of \citet{malrieu_logarithmic_2001} (cited by \citet{monmarche_time_uniform_2024}).

\begin{lemma}
    \label{lemma:log-sobolev-constant-exponential-bound}
    Assume that there exists $M\in\R$ for all $t \in [0,T]$ and $x,y \in \Rd$, we have
    \begin{align}
        \label{eq:weak-lipschitz-condition}
        \langle s^\theta(t,x) - s^\theta(t,y), x - y \rangle \leq M \normof{x - y}^2\eqsp,
    \end{align}
    then $\backptheta_t$ satisfies the log-Sobolev inequality with constant
    \begin{align*}
        C_t := e^{2 (M - 1) t} + 2 \int_0^t e^{2(M - 1)s} \der s\eqsp.
    \end{align*}
\end{lemma}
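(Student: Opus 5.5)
The plan is to follow the classical argument of \citet{malrieu_logarithmic_2001}, combining a gradient-commutation estimate for the backward semigroup with the LSI of the initial law $\backmutheta_0 = \gamma^d$. The inference process \eqref{eq:inference-backward} has drift $b^\theta_{T-t}(x) = s^\theta(T-t,x) - x$. By \eqref{eq:weak-lipschitz-condition}, this drift is one-sided Lipschitz with constant $M - 1$:
\begin{align*}
\langle b^\theta_{T-t}(x) - b^\theta_{T-t}(y), x-y\rangle \leq (M-1)\normof{x-y}^2 .
\end{align*}
Denote by $P_{0,t}$ the time-inhomogeneous Markov semigroup of $(\xb^\theta_u)_{u\in[0,t]}$, so that $\backmutheta_t = \gamma^d P_{0,t}$.

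\textbf{Step 1 (synchronous coupling).} Drive two copies of \eqref{eq:inference-backward} from $x$ and $y$ with the same Brownian motion. The difference $D_u$ then satisfies $\der \normof{D_u}^2 \leq 2(M-1)\normof{D_u}^2 \der u$. Hence $\normof{D_u}\leq e^{(M-1)u}\normof{x-y}$ almost surely, so the flow $x\mapsto \xb^{\theta,x}_{s,t}$ is $e^{(M-1)(t-s)}$-Lipschitz. From this we deduce the pointwise commutation $\normof{\nabla P_{s,t} g} \leq e^{(M-1)(t-s)} P_{s,t}\normof{\nabla g}$.

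\textbf{Step 2 (local LSI for the kernel).} We use the standard semigroup interpolation. Let $\Lambda(s) := P_{0,s}\big(\Phi(P_{s,t} f)\big)$; its derivative is $\partial_s \Lambda(s) = -P_{0,s}\big(\normof{\nabla P_{s,t} f}^2 / P_{s,t} f\big)$, where the factor comes from the diffusion coefficient $\sqrt2$, i.e.\ generator $\Delta + b\cdot\nabla$. Bound $\normof{\nabla P_{s,t}f}^2/P_{s,t}f \leq e^{2(M-1)(t-s)} P_{s,t}(\normof{\nabla f}^2/f)$ using Step 1 and Cauchy--Schwarz. Integrating over $s\in[0,t]$ shows that each kernel $P_{0,t}(x,\cdot)$ satisfies an LSI with constant
\begin{align*}
2\int_0^t e^{2(M-1)(t-s)}\der s = 2\int_0^t e^{2(M-1)s}\der s ,
\end{align*}
in the normalization of \Cref{def:log-sobolev-inequality}. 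The time-inhomogeneity is harmless because the one-sided bound is uniform in time.

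\textbf{Step 3 (mixture with the initial law).} Decompose
\begin{align*}
\ent_{\gamma^d P_{0,t}}(f) = \int \ent_{P_{0,t}(x,\cdot)}(f)\,\gamma^d(\der x) + \ent_{\gamma^d}(P_{0,t} f) .
\end{align*}
Apply Step 2 to the first term. For the second term, use the $1$-LSI of $\gamma^d$ together with Step 1 once more to get $\normof{\nabla P_{0,t}f}^2/P_{0,t}f \leq e^{2(M-1)t}P_{0,t}(\normof{\nabla f}^2/f)$. Summing the two contributions yields the constant $C_t = e^{2(M-1)t} + 2\int_0^t e^{2(M-1)s}\der s$.

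\textbf{Main obstacle.} The delicate point is justifying the differentiation and gradient commutation when $s^\theta$ is only locally bounded and one-sided Lipschitz, not smooth. I would first prove the result for smooth, mollified drifts, which preserve the one-sided constant. I would then pass to the limit using weak stability of the marginals and the fact that the LSI is closed under weak limits for bounded Lipschitz test functions.
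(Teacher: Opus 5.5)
Your proposal is correct and is essentially the paper's argument. The paper's proof of this lemma observes that the drift $s^\theta(T-t,\cdot)-\mathrm{Id}$ is $(M-1)$-one-sided Lipschitz and that $\gamma^d$ satisfies the $1$-LSI, then cites \citet{monmarche_time_uniform_2024} (the Malrieu argument); its own time-dependent version (\Cref{prop:malrieu-estimate-time-dependent-constant}) spells out your steps: synchronous coupling, gradient commutation, Bakry--\'Emery interpolation, and mixing with the initial LSI. One sign slip: with $\Lambda(s) = P_{0,s}(\Phi(P_{s,t}f))$ the derivative is $+P_{0,s}\bigl(\normof{\nabla P_{s,t}f}^2/P_{s,t}f\bigr)$, not minus, since $\Lambda(t)-\Lambda(0)$ is the nonnegative entropy of $f$ under the kernel $P_{0,t}(x,\cdot)$.
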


\begin{proof}
    Let $b_t(x) := s^\theta(T - t, x) - x$, so that the estimated backward density follows the Fokker-Planck equation
    \begin{align*}
        \partial_t \backptheta_t = \nabla \cdot \left( \nabla \backptheta - \backptheta b_t \right)\eqsp,
    \end{align*}
    which is exactly \citep[Equation (1.1)]{monmarche_time_uniform_2024} with $\sigma = 1$. Then, we have
    \begin{align*}
        \langle b_t(t,x) - b_t(t,y), x - y \rangle &= \langle s^\theta(T - t,x) - s^\theta(T - t,y), x - y \rangle - \normof{x - y}^2 \leq (M - 1) \normof{x - y}^2.
    \end{align*}
    The result follows by a direct application of \citep[Proposition 1.1]{monmarche_time_uniform_2024}, after noting that $\gamma^d$ satisfies the log-Sobolev inequality with constant $1$. This concludes the proof.
\end{proof}

We can now present the proof of \Cref{thm:right-KL-exponential-decay-pseudo-Lipschizt}.

\begin{proof} (of \Cref{thm:right-KL-exponential-decay-pseudo-Lipschizt})
    Let $t \in (0, T)$, by \Cref{prop:entropy-flow-right-kl} (with $\alpha(t)=1/2$ for all $t$), we have
    \begin{align*}
        \frac{\der}{\der t} \klb{\backmu_t}{\backmutheta_t} \leq - \frac1{2} \fisher{\backmu_t}{\backmutheta_t} + \frac1{2}\Eof{\normof{2 \nabla \log \Tilde{p}_{T - t} (\ora{X}_{T - t}) - s^{\theta} (T - t, \ora{X}_{T - t}) }^2 }\eqsp.
    \end{align*}
    We first consider the case where $M_t \equiv M$ is constant.
    \paragraph{Case 1: $M \neq 1$.}
    By \Cref{lemma:log-sobolev-constant-exponential-bound} and our assumptions, we know that $\backptheta_t$, for $t \in [0,T]$, satisfies the log-Sobolev inequality with constant
    \begin{align*}
        C(t) := e^{2 (M - 1) t} +2\int_0^t e^{2(M - 1)s} \der s = e^{2(M - 1)t} + \frac{e^{2(M - 1)t} - 1}{M - 1} \eqsp.
    \end{align*}
    It is easily noted that $C(t)$ is non-negative for all $t \geq 0$ (for all values of $M\in\R$).
    Therefore, by the logarithmic Sobolev inequality, we have, for $t \in (0,t)$,
    \begin{align*}
        \frac{\der}{\der t} \klb{\backmu_t}{\backmutheta_t} \leq - \frac1{C(t)} \klb{\backmu_t}{\backmutheta_t}  + \frac1{2}\Eof{\normof{2 \nabla \log \Tilde{p}_{T - t} (\ora{X}_{T - t}) - s^{\theta} (T - t, \ora{X}_{T - t}) }^2 }\eqsp.
    \end{align*}
    By \Cref{thm:generic-forward-bound-lsi} (with $\alpha(t) = 1/2$ for all $t$), we obtain
    \begin{align*}
        \klb{\mu}{\backmutheta_{T}} \leq & \klb{\muf_T}{\gamma^d}\exp \left\{-\int_{0}^{T}\frac{\der u}{C(u)} \right\} \\&
         + \frac1{2}\int_{0}^{T} \exp \left\{ - \int_{t}^{T}\frac{\der u}{C(u)} \right\} \Eof{\normof{2 \nabla \log \Tilde{p}_{T - t} (\ora{X}_{T - t}) - s^{\theta} (T - t, \ora{X}_{T - t}) }^2 } \der t \eqsp.
    \end{align*}
    Then, we need to distinguish two cases.
    If $M > 1$, we have, for $t \in (0,T)$,
    \begin{align*}
        \int_{t}^{T}\frac{\der u}{C(u)} &= \frac1{2}\int_t^T \frac{(M - 1) e^{-2(M - 1) u}}{M -  e^{-2(M - 1) u}}  \der u = \frac1{2} \log \left( \frac{M - e^{-2(M-1) T}}{M - e^{-2(M-1) t}} \right)
    \end{align*}
    Therefore
    \begin{align*}
        \klb{\mu}{\backmutheta_T} \leq \eta_T(0) \klb{\muf_T}{\gamma^d} + \int_0^T \frac{\eta_T(t)}{2} \Eof{\normof{2 \nabla \log \Tilde{p}_{T - t} (\ora{X}_{T - t}) - s^{\theta} (T - t, \ora{X}_{T - t}) }^2 } \der t\eqsp,
    \end{align*}
    with
    \begin{align*}
        \eta_T(t) := \exp\left\{-\int_t^T \frac{\der u}{C(u)} \right\} = \sqrt{\frac{M - e^{-2(M-1) t}}{M - e^{-2(M-1) T}}} \eqsp.
    \end{align*}
    Now if $M < 1$, we have
    \begin{align*}
        \exp\left\{-\int_t^T \frac{\der u}{C(u)} \right\} = \exp\left\{-\frac1{2} \int_t^T \frac{(1 - M) e^{2 (1 - M) u}}{e^{2(1 - M) u} - M} \right\} = \sqrt{\frac{e^{2(1 - M)t} - M}{e^{2(1 - M)T} - M}} \eqsp.
    \end{align*}
    We also observe that for all $t\geq 0$, we have
    \begin{align*}
        C(t) \leq 1 + 2 \int_0^{+\infty} e^{-2(1 - M)u} \der u = 1 + \frac1{1 - M} \eqsp.
    \end{align*}
    Therefore, we obtain the same bound with
    \begin{align*}
        \eta_T(t) := \exp \left\{ - \frac{1 - M}{2 - M} (T - t)  \right\}\eqsp.
    \end{align*}
    \paragraph{Case 2: $M = 1$.} In this case, we have $C(t) = 1 + 2t$. Therefore, we have the same bound as before, but with
    \begin{align*}
        \eta_T(t) =  \exp\left\{-\int_t^T \frac{\der u}{C(u)} \right\} = \sqrt{\frac{1 + 2t}{1+2T}} \eqsp.
    \end{align*}
    By defining $\lambda_T(t) := \eta_T(T - t)$, this concludes the proof of the first part of the statement.
    The proof of the second part of \Cref{thm:right-KL-exponential-decay-pseudo-Lipschizt} is deferred to \Cref{sec:time-dependent-drift}, where it is obtained in \Cref{cor:linearly-dverging-lipshitz-constant}.
\end{proof}

\subsubsection{Time-dependent one-sided Lipschitz constants}
\label{sec:time-dependent-drift}

In this subsection, we explore the case where the one-sided Lipschitz constant of the network is allowed to depend on time. This requires to adapt the log-Sobolev estimates of \cite{malrieu_logarithmic_2001,monmarche_time_uniform_2024}, which we first present.

\begin{proposition}
    \label{prop:malrieu-estimate-time-dependent-constant}
    Consider a time-dependent drift $b_t : \Rd \to \Rd$ and the SDE
    \begin{align}
        \label{eq:sde-bt}
        \der X_t = b_t(X_t) \der t + \sqrt{2} \der \Bm_t\eqsp, \quad X_0 \sim \nu_0 \eqsp,
    \end{align}
    and assume that there exists a continuous function $L: \R_+ \to \R$ which is integrable on $[0,T]$ and such that, for all $x, y \in \Rd$ and $t> 0$,
    \begin{align*}
        \langle b_t(x) - b_t(y), x - y\rangle \leq L(t) \normof{x - y}^2\eqsp.
    \end{align*}
    We further assume that $\mu_0$ satisfies the log-Sobolev inequality with constant $C_0$, then the law $\nu_t$ of $X_t$ satisfies the log-Sobolev inequality with constant given by
    \begin{align*}
        C_t = C_0 \exp\left\{ \int_0^t 2L(s) \der s \right\} + 2 \int_0^t \exp\left\{ \int_s^t 2L(u) \der u \right\} \der s \eqsp. 
    \end{align*}
\end{proposition}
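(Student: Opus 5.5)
The plan follows Malrieu's argument, as used for constant $L$ in \citet[Proposition 1.1]{monmarche_time_uniform_2024}, now with a time-dependent contraction rate. Write $P_{s,t}$ for the (time-inhomogeneous) Markov semigroup of \eqref{eq:sde-bt}, $P_{s,t}g(x)=\Eof{g(X_t)\mid X_s=x}$, so that $\nu_t(g)=\nu_0(P_{0,t}g)$. The first step is to decompose the entropy along the semigroup. For a positive smooth $g$, set $\Lambda(s):=\nu_s\big(\Phi(P_{s,t}g)\big)$ for $s\in[0,t]$. Then
\begin{align*}
\ent_{\nu_t}(g)=\Lambda(t)-\Lambda(0)+\ent_{\nu_0}(P_{0,t}g).
\end{align*}
Differentiating in $s$, using the backward equation $\partial_s P_{s,t}g=-\mathcal{L}_sP_{s,t}g$ together with the forward equation for $\nu_s$, the drift terms cancel and only the carré du champ of $\mathcal{L}_s=\Delta+\langle b_s,\nabla\rangle$ survives:
\begin{align*}
\Lambda'(s)=\nu_s\left(\frac{\normof{\nabla P_{s,t}g}^2}{P_{s,t}g}\right).
\end{align*}

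The second step is the key gradient estimate
\begin{align*}
\normof{\nabla P_{s,t}g}\leq e^{\int_s^tL(u)\der u}\,P_{s,t}\normof{\nabla g}.
\end{align*}
This follows from a synchronous coupling. Run two copies of \eqref{eq:sde-bt} from $x,y$ at time $s$ with the same Brownian motion. The noise cancels in the difference, so $\frac{\der}{\der r}\normof{X_r-Y_r}^2\leq 2L(r)\normof{X_r-Y_r}^2$. Grönwall's lemma then gives $\normof{X^x_t-X^y_t}\leq e^{\int_s^tL}\normof{x-y}$ almost surely. Dividing by $\normof{x-y}$ and letting $y\to x$ along a direction, we obtain $\normof{\nabla P_{s,t}g(x)}\leq e^{\int_s^tL}\Eof{\normof{\nabla g(X_t^x)}}$. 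Cauchy–Schwarz in the form $(P h)^2\leq P(h^2/g)\,P g$ then yields
\begin{align*}
\frac{\normof{\nabla P_{s,t}g}^2}{P_{s,t}g}\leq e^{2\int_s^tL}\,P_{s,t}\left(\frac{\normof{\nabla g}^2}{g}\right).
\end{align*}

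The third step assembles the bound. Integrating $\Lambda'$ over $[0,t]$ and using $\nu_sP_{s,t}=\nu_t$ gives
\begin{align*}
\Lambda(t)-\Lambda(0)\leq\int_0^te^{2\int_s^tL}\der s\;\nu_t\!\left(\frac{\normof{\nabla g}^2}{g}\right).
\end{align*}
The LSI for $\nu_0$, combined with the $s=0$ case of the same gradient bound, gives
\begin{align*}
\ent_{\nu_0}(P_{0,t}g)\leq\frac{C_0}{2}e^{2\int_0^tL}\,\nu_t\!\left(\frac{\normof{\nabla g}^2}{g}\right).
\end{align*}
Summing the two contributions yields the LSI for $\nu_t$ with constant $C_0e^{2\int_0^tL}+2\int_0^te^{2\int_s^tL}\der s$. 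This matches the stated $C_t$, since the LSI normalisation carries a factor $\rho/2$.

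The main obstacle is justifying the differentiation of $\Lambda$ and the differentiability of $P_{s,t}g$ when $b_t$ is only one-sided Lipschitz and time-dependent. I would handle it by approximation. First treat smooth, compactly supported $g$ bounded below, and regularise $b$ by mollification, which preserves the one-sided bound with the same $L$. The gradient estimate itself needs no smoothness of $b$, since Lipschitz continuity of $P_{s,t}g$ with the stated constant suffices. Finally, pass to the limit using stability of LSI constants under weak convergence. Continuity and integrability of $L$ guarantee that all the exponents are finite.
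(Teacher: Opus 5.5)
Your proposal is correct and follows the paper's proof essentially step for step. Both use a synchronous coupling plus Gr\"onwall to get the commutation $\normof{\nabla P_{s,t}g}\leq e^{\int_s^tL(u)\,du}P_{s,t}\normof{\nabla g}$, then the Bakry--\'Emery/Malrieu interpolation of $s\mapsto P_{0,s}\Phi(P_{s,t}g)$ (which you integrate against $\nu_0$ from the start rather than pointwise), and finally the LSI for $\nu_0$. Your handling of the $s=0$ term is in fact slightly more careful than the paper's last display, which drops the factor $e^{2\int_0^tL(u)\,du}$ in front of $C_0/2$ that is needed to obtain the stated constant $C_t$.
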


For the purpose of this operator, we introduce the operators
\begin{align*}
    P_{s,t} f(x) := \Eof{f(X_t) | X_s = x} \eqsp.
\end{align*}

\begin{proof}
    The proof is inspired by the proof of \citep[Proposition 1.1]{monmarche_time_uniform_2024}, which we adapt to handle time-dependent one-sided Lipschitz constants.

    Let $x,h \in \Rd$, $s \geq 0$, and consider $f \in \mathcal{C}_b^2(\Rd)$, \ie, $f$ is twice continuously differentiable and is bounded with bounded derivatives of order $1$ and $2$.
    Let $X_t$ and $X_t'$ be two solutions of the SDE, with the same Brownian motion and initialized at time $s$ such that $X_s = x+h$ and $X_s'= x$. 
    Our assumption gives that, for all $t \geq s$, we have
    \begin{align*}
        \der \normof{X_t - X_t'}^2 \leq 2L(t) \normof{X_t - X_t'}^2\eqsp.
    \end{align*}
    Hence, by Grönwall's lemma, we have
    \begin{align*}
        \normof{X_t - X_t'}^2 \leq \exp\left\{ \int_s^t 2L(u) \der u \right\} \normof{X_s - X_s'}^2  \eqsp.
    \end{align*}
    By the Taylor-Langrange formula, we have
    \begin{align*}
        f(X_t) - f(X_t') \leq \langle X_t - X_t', \nabla f(X_t') \rangle + \frac1{2} \normof{\nabla^2 f}_\infty \normof{X_t - X_t'}^2\eqsp.
    \end{align*}
    Now we take the expectation and use $h := \epsilon u$ with $u := \nabla P_{s,t} f(x) / \normof{\nabla P_{s,t} f(x)}$. By the Cauchy-Schwarz inequality,
    \begin{align*}
        \frac1{\epsilon} \left( P_{s,t} f (x + \epsilon u) - P_{s,t} f(x) \right) \leq \exp\left\{ \int_s^t L(u) \der u \right\} P_{s,t} (\normof{\nabla f})(x) + \landau{\epsilon} \eqsp,
    \end{align*}
    From which we deduce, by taking $\epsilon \to 0^+$, the following commutation property,
    \begin{align}
        \label{eq:commutation-property-time-dependent}
        \normof{\nabla P_{s,t} f} \leq \exp\left\{ \int_s^t L(u) \der u \right\} P_{s,t} (\normof{\nabla f}) \eqsp. 
    \end{align}
    The rest of the proof follows the classical Bakry-Émery interpolation technique, as in \citep[Proposition 1.1]{monmarche_time_uniform_2024}. 
    More precisely, let $0\leq s \leq t$ and $\Phi(x) := x \log(x)$.
    Let us introduce $\lcal_t \varphi := \Delta \varphi + \langle b_t, \nabla \varphi\rangle$ the time-dependent generator of \Cref{eq:sde-bt}. It is known to satisfy the two Kolmogorov equations \citep{collet_logarithmic_2008},
    \begin{align*}
        \partial_t P_{s,t} f = P_{s,t} \lcal_t f\eqsp, \quad  \partial_s P_{s,t} f = -\lcal_s P_{s,t} f \eqsp.
    \end{align*}
    By the Kolmogorov equations and the diffusion property for $\lcal_t$ \citep{bakry_analysis_2014}, we have
    \begin{align*}
        \partial_s \left( P_{0,s} \Phi(P_{s,t}f) \right) &= P_{0,s} \left( \lcal_t \Phi(P_{s,t}f) - \Phi'(P_{s,t}f) \lcal_s P_{s,t}f \right)= P_{0,s} \left( \frac{\normof{P_{s,t}f}^2}{P_{s,t}f} \right) \eqsp.
    \end{align*}
    By the commutation property and the Cauchy-Schwarz inequality, we have
    \begin{align*}
         \partial_s \left( P_{0,s} \Phi(P_{s,t}f) \right) \leq \exp\left\{ \int_s^t 2 L(u) \der u \right\} P_{0,s} \left( \frac{(P_{s,t} \normof{f})^2}{P_{s,t} f} \right) \leq \exp\left\{ \int_s^t 2 L(u) \der u \right\} P_{0,t} \left( \frac{\normof{f}^2}{f} \right) \eqsp.
    \end{align*}
    By integrating between $0$ and $t$, we get
    \begin{align*}
        P_{0,t} \Phi(f) - \Phi(P_{0,t} f) \leq \int_0^t \exp\left\{ \int_s^t 2 L(u) \der u \right\} \der s \int \frac{\normof{f}^2}{f} \der \nu_t \eqsp.
    \end{align*}
    By integrating with respect to $\nu_0$ and using the log-Sobolev inequality for $\nu_0$, we have
    \begin{align*}
        \int \Phi(f) \der \nu_t &\leq \int \Phi(P_{0,t} f) \der \nu_0 + \int_0^t \exp\left\{ \int_s^t 2 L(u) \der u \right\} \der s \int \frac{\normof{f}^2}{f} \der \nu_t \\
        &\leq\Phi \left( \int f \der \nu_t \right) + \left(\frac{C_0}{2} + \int_0^t \exp\left\{ \int_s^t 2 L(u) \der u \right\} \der s \right) \int \frac{\normof{f}^2}{f} \der \nu_t \eqsp.
    \end{align*}
    This concludes the proof by definition of the log-Sobolev inequality in \Cref{def:log-sobolev-inequality}.
\end{proof}

In the next theorem, we present the associated score-matching bound.

\begin{theorem}
    \label{thm:right-KL-time-dependent-lipschitz-constant}
    Under the conditions of \Cref{prop:entropy-flow-right-kl},
    suppose that there exists $M\in\R$ s.t. for all $t \in [0,T],~ x,y \in \Rd$, $\langle s^\theta(t,x) - s^\theta(t,y), x - y \rangle \leq M(t) \normof{x - y}^2$ such that the map $t \mapsto M_t$ is continuous on $[0,T]$.
    Then,
    \begin{align*}
        \klb{\mu}{\backmutheta_T} \leq \int_0^T \frac{\eta_T(t)}{2} \EofLigne{\varepsilon_{T - t}^\theta(\xb_t)^2} \der t + \eta_T(0) \klb{\muf_T}{\gamma^d} \eqsp,
    \end{align*}
    where the time-dependent decay factor is
    \begin{align*}
        \eta_T(t) := \exp\left\{ -\int_t^T \left(\exp\left\{ \int_0^s 2(M(T - u) - 1) \der u \right\} + 2 \int_0^s \exp\left\{ \int_u^s 2(M(T - v) - 1) \der v \right\} \der u \right)^{-1} \der s \right\}  \eqsp.
    \end{align*}
    We refer to \Cref{rq:upper-bounds-initialization-term} for further upper bounds on $\klb{\muf_T}{\gamma^d}$ under generic assumptions.
\end{theorem}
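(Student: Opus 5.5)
The plan is to combine \Cref{prop:malrieu-estimate-time-dependent-constant} with \Cref{thm:generic-forward-bound-lsi}, mirroring the constant-$M$ case of \Cref{thm:right-KL-exponential-decay-pseudo-Lipschizt}.

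\textbf{Step 1: LSI constant of $\backmutheta_t$.} The estimated backward process \eqref{eq:inference-backward} is an SDE of the form \eqref{eq:sde-bt} with drift $b_t(x) := s^\theta(T-t,x) - x$ and initial law $\gamma^d$, which satisfies the $1$-LSI, so $C_0 = 1$. The assumed one-sided Lipschitz bound gives
\begin{align*}
\langle b_t(x) - b_t(y), x-y\rangle \leq (M(T-t) - 1)\normof{x-y}^2 ,
\end{align*}
so we take $L(t) := M(T-t) - 1$. This function is continuous, and hence integrable on $[0,T]$, because $t\mapsto M_t$ is continuous. \Cref{prop:malrieu-estimate-time-dependent-constant} then shows that $\backmutheta_s$ satisfies the $C(s)$-LSI with
\begin{align*}
C(s) = \exp\Big\{\int_0^s 2(M(T-u)-1)\der u\Big\} + 2\int_0^s \exp\Big\{\int_u^s 2(M(T-v)-1)\der v\Big\}\der u .
\end{align*}
This is exactly the bracket appearing in the definition of $\eta_T$.

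\textbf{Step 2: apply the generic bound.} We invoke \Cref{thm:generic-forward-bound-lsi} with $\rho := C$ and $\alpha \equiv 1/2$. We need $1/C$ to be continuous. This holds because $C$ is continuous and $C(s) \geq \exp\{\int_0^s 2L\} > 0$ on the compact interval $[0,T]$. The factor $1/(4(1-\alpha)) = 1/2$ then gives the constant in front of the integral. The exponent $2\alpha/\rho$ becomes $1/C$, so the weight is $\exp\{-\int_t^T \der s/C(s)\} = \eta_T(t)$ in the backward-time parametrization. This parametrization appears naturally in the Grönwall step inside the proof of \Cref{thm:generic-forward-bound-lsi}, where the integrand is $\EofLigne{\varepsilon_{T-t}^\theta(\xf_{T-t})^2}$.

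\textbf{Step 3: conclude and identify the bottleneck.} Since $\xb_t = \xf_{T-t}$ in law, that integrand equals $\EofLigne{\varepsilon_{T-t}^\theta(\xb_t)^2}$, as in the statement. The initialization term is $\exp\{-\int_0^T \der s/C(s)\}\,\klb{\muf_T}{\gamma^d} = \eta_T(0)\,\klb{\muf_T}{\gamma^d}$. The main point to check is bookkeeping rather than analysis. First, the time-reversal convention in \Cref{thm:generic-forward-bound-lsi} (integration in $t$ versus $T-t$) must match the indexing of $\eta_T$. Second, \Cref{prop:malrieu-estimate-time-dependent-constant} must apply with a merely continuous, possibly sign-changing $L$. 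For the latter I would rely on the synchronous-coupling and Grönwall argument given in its proof, which does not use the sign of $L$. All the limiting arguments at $t \to 0^+$ and $t \to T^-$ are inherited from \Cref{thm:generic-forward-bound-lsi} under \Cref{ass:locally-bounded-right-KL}.
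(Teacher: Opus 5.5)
Your proposal is correct and follows essentially the same route as the paper. You obtain the LSI constant of $\backmutheta_t$ from \Cref{prop:malrieu-estimate-time-dependent-constant} with $L(t) = M(T-t)-1$ and $C_0 = 1$ for $\gamma^d$, then apply \Cref{thm:generic-forward-bound-lsi} with $\alpha \equiv 1/2$ and reindex time to get $\eta_T(t) = \exp\{-\int_t^T \der s / C(s)\}$; your observation that $C(s) \geq \exp\{\int_0^s 2L\} > 0$, so that $1/C$ is continuous, is exactly the positivity and continuity condition the paper invokes.
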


\begin{proof}
    By reasoning as in the proof of \Cref{lemma:log-sobolev-constant-exponential-bound} and applying \Cref{prop:malrieu-estimate-time-dependent-constant}, we obtain that $\backmutheta_t$ satisfies the log-Sobolev inequality with constant 
    \begin{align*}
        \gamma(t) = \exp\left\{ \int_0^t 2(M(T - s) - 1) \der s \right\} + 2 \int_0^t \exp\left\{ \int_s^t 2(M(T - u) - 1) \der u \right\} \der s \eqsp, 
    \end{align*}
    where we noted that the initial distribution ($\gamma^d$) satisfies the log-Sobolev inequality with constant $1$.
    As $\gamma$ is positive and continuous on $[0,T]$, we can apply \Cref{thm:generic-forward-bound-lsi} (with $\alpha(t) = 1/2$ for all $t$) to obtain that
    \begin{align*}
        \klb{\mu}{\backmutheta_T} \leq \int_0^T \frac{\eta_T(t)}{2} \EofLigne{\varepsilon_{T - t}^\theta(\xb_t)} \der t + e^{-2T} \eta_T(0) \klb{\mu}{\gamma^d} \eqsp,
    \end{align*}
    with $C_T := e^{-2T} \eta_T(0)$ and
    \begin{align*}
        \eta_T(t) &:= \exp\left\{ -\int_t^T \frac{\der s}{\gamma (s)} \right\} \\
                &= \exp\left\{ -\int_t^T \left(\exp\left\{ \int_0^s 2(M(T - u) - 1) \der u \right\} + 2 \int_0^s \exp\left\{ \int_u^s 2(M(T - v) - 1) \der v \right\} \der u \right)^{-1} \der s \right\}  \eqsp.
    \end{align*}
    This concludes the proof.
\end{proof}

\begin{corollary}
    \label{cor:linearly-dverging-lipshitz-constant}
    Under the same assumption as \Cref{thm:right-KL-time-dependent-lipschitz-constant}, suppose that we can take $M(t) = 1 + c / t$, then, the result of \Cref{thm:right-KL-time-dependent-lipschitz-constant} holds with
    \begin{align*}
        \eta_T(t) := \sqrt{1 - \frac{2(T - t)^{2c+1} }{(2c+1)T^{2c} + 2 T^{2c+1}}} \eqsp.
    \end{align*}
\end{corollary}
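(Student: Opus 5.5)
The plan is to plug $M(t) = 1 + c/t$ into the formula for $\eta_T$ from \Cref{thm:right-KL-time-dependent-lipschitz-constant} and compute it in closed form. Equivalently, I would compute the log-Sobolev constant $\gamma(s)$ of $\backmutheta_s$ given by \Cref{prop:malrieu-estimate-time-dependent-constant}, and then $\exp\{-\int_t^T \der s/\gamma(s)\}$. The drift of the estimated backward process is $b_s(x) = s^\theta(T-s,x) - x$. Its one-sided Lipschitz constant is $L(s) = M(T-s) - 1 = c/(T-s)$, which is continuous and integrable on $[0,t]$ for every $t<T$. This is all that is needed to apply \Cref{prop:malrieu-estimate-time-dependent-constant} at each time $s \in [0,T)$, with $C_0 = 1$ since $\backmutheta_0 = \gamma^d$.

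\textbf{Computing $\gamma(s)$.} For $0 \le u \le s < T$,
\begin{align*}
\int_u^s \frac{2c}{T-v}\,\der v = 2c \log\frac{T-u}{T-s},
\end{align*}
so the exponential weights are $\big((T-u)/(T-s)\big)^{2c}$. Hence
\begin{align*}
\gamma(s) = \Big(\frac{T}{T-s}\Big)^{2c} + \frac{2}{(T-s)^{2c}}\int_0^s (T-u)^{2c}\,\der u = \frac{A - 2(T-s)^{2c+1}}{(2c+1)(T-s)^{2c}},
\end{align*}
with $A := (2c+1)T^{2c} + 2T^{2c+1}$. Note that $A - 2(T-s)^{2c+1} \ge (2c+1)T^{2c} > 0$.

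\textbf{Computing $\eta_T$.} Next I would integrate $1/\gamma$. Substituting $w = T-s$,
\begin{align*}
\int_t^T \frac{\der s}{\gamma(s)} = \int_0^{T-t} \frac{(2c+1)w^{2c}}{A - 2w^{2c+1}}\,\der w = -\frac12 \log\Big(1 - \frac{2(T-t)^{2c+1}}{A}\Big),
\end{align*}
because the integrand is exactly $-\tfrac12\frac{\der}{\der w}\log(A - 2w^{2c+1})$ and the value at $w=0$ is $\log A$. Exponentiating gives $\eta_T(t) = \sqrt{1 - 2(T-t)^{2c+1}/A}$, which is the claimed expression. Since the main theorem uses $\lambda_T(t) = \eta_T(T-t)$, this also yields the second part of \Cref{thm:right-KL-exponential-decay-pseudo-Lipschizt}.

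\textbf{Main obstacle.} The difficulty is that $M(t) = 1+c/t$ is not continuous at $t=0$, i.e.\ at backward time $s=T$. So \Cref{thm:right-KL-time-dependent-lipschitz-constant} does not literally apply. I would instead go back to \Cref{thm:generic-forward-bound-lsi}, whose requirement is only that $1/\rho$ be continuous on $[0,T]$. From the formula above,
\begin{align*}
\frac{1}{\gamma(s)} = \frac{(2c+1)(T-s)^{2c}}{A - 2(T-s)^{2c+1}},
\end{align*}
which is continuous on $[0,T)$ and tends to $0$ as $s \to T$. Setting $1/\rho(T) := 0$, i.e.\ allowing the LSI constant $+\infty$ at $s=T$, which \Cref{thm:generic-forward-bound-lsi} permits, gives a continuous $1/\rho$. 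Taking $\alpha \equiv 1/2$ then concludes exactly as in the proof of \Cref{thm:right-KL-time-dependent-lipschitz-constant}.
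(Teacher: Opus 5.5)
Your derivation is the paper's own proof. You plug $M(t)=1+c/t$ into the decay formula of \Cref{thm:right-KL-time-dependent-lipschitz-constant}, reduce the inner LSI constant to $\bigl((2c+1)T^{2c}+2T^{2c+1}-2(T-s)^{2c+1}\bigr)/\bigl((2c+1)(T-s)^{2c}\bigr)$, and integrate its reciprocal in closed form; your integrand with $(T-s)^{2c}$ in the numerator is the correct one, and the paper's displayed exponent $2c+1$ there is a typo, as its next line shows. Your extra step, going back to \Cref{thm:generic-forward-bound-lsi} and setting $1/\rho(T):=0$ to handle the blow-up of $M$ at $t=0$, is valid and fixes a continuity requirement the paper passes over.
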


\begin{proof}
    By \Cref{thm:right-KL-time-dependent-lipschitz-constant}, we have
    \begin{align*}
        \eta_T(t) &= \exp\left\{ -\int_t^T \left(\exp\left\{ \int_0^s 2(M(T - u) - 1) \der u \right\} + 2 \int_0^s \exp\left\{ \int_u^s 2(M(T - v) - 1) \der v \right\} \der u \right)^{-1} \der s \right\} \\
        &= \exp\left\{ -\int_t^T \left(\exp\left\{ \int_0^s \frac{2c}{T - u} \der u \right\} + 2 \int_0^s \exp\left\{ \int_u^s \frac{2c}{T - v} \der v \right\} \der u \right)^{-1} \der s \right\} \\
        &= \exp\left\{ -\int_t^T \left(\left( \frac{T}{T - s} \right)^{2c} + 2 \int_0^s \left( \frac{T - u}{T - s} \right)^{2c} \der u \right)^{-1} \der s \right\} \\
        &= \exp\left\{ -\int_t^T \left(\left( \frac{T}{T - s} \right)^{2c} + \frac{2}{(2c + 1)(T - s)^{2c}} \left( T^{2c+1} - (T - s)^{2c+1} \right) \right)^{-1} \der s \right\} \\
        &= \exp\left\{ -\int_t^T \frac{(2c+1)(T - s)^{2c+1}}{(2c+1) T^{2c} + 2 T^{2c+1} - 2(T - s)^{2c+1}} \der s \right\} \\
        &= \exp\left\{ -\frac1{2} \left[ \log \left( (2c+1) T^{2c} + 2 T^{2c+1} - 2(T - s)^{2c+1} \right) \right]_{s=t}^T \right\} \\
        &= \sqrt{\frac{(2c+1)T^{2c} + 2(T^{2c+1} - (T - t)^{2c+1} )}{(2c+1)T^{2c} + 2 T^{2c+1}}} \\
        &= \sqrt{1 - \frac{2(T - t)^{2c+1} }{(2c+1)T^{2c} + 2 T^{2c+1}}} \eqsp.
    \end{align*}
    This concludes the proof.
\end{proof}

The above corrolary also concludes the proof of \Cref{thm:right-KL-exponential-decay-pseudo-Lipschizt} by setting $\lambda_T(t) := \eta_T(T - t)$.

\subsection{Proof of \Cref{thm:convergence-bound-right-kl-monmarche}}

\begin{proof} (of \Cref{thm:convergence-bound-right-kl-monmarche})
    Let $t> 0$, by \Cref{prop:entropy-flow-right-kl} (with $\alpha(t) = 1/2$ for all $t$), we have
    \begin{align*}
        \frac{\der}{\der t} \klb{\backmu_t}{\backmutheta_t} \leq - \frac1{2} \fisher{\backmu_t}{\backmutheta_t} + \frac1{2}\Eof{\normof{2 \nabla \log \Tilde{p}_{T - t} (\ora{X}_{T - t}) - s^{\theta} (T - t, \ora{X}_{T - t}) }^2 }\eqsp.
    \end{align*}
    The backward dynamics $(\backmutheta_t)_{t \in [0,T]}$ is initialized from the Gaussian distribution $\gamma^d$.
    Therefore, we can apply \citep[Theorem 1.3]{monmarche_time_uniform_2024} (the assumptions of \Cref{thm:convergence-bound-right-kl-monmarche} are exactly the assumptions that required to apply this result) to see that there exists a constant $C_{\mu,\theta,d} > 0$ such that, for all $t \in [0,T]$, the probability distribution $\backmutheta_t$ satisfies a log-Sobolev inequality with constant $C_{\mu,\theta,d}$. Therefore, by the log-Sobolev inequality, we have that
    \begin{align*}
        \frac{\der}{\der t} \klb{\backmu_t}{\backmutheta_t} \leq - \frac1{C_{\mu,\theta,d}} \klb{\backmu_t}{\backmutheta_t} + \frac1{2}\Eof{\normof{2 \nabla \log \Tilde{p}_{T - t} (\ora{X}_{T - t}) - s^{\theta} (T - t, \ora{X}_{T - t}) }^2 }\eqsp.
    \end{align*}
    By Grönwall's lemma (and reasoning as in the proof of \Cref{thm:convergence-guarantee-continuous-time-reversed} to take limits), we have
    \begin{align*}
        \klb{\mu}{\backmutheta_T} \leq e^{- \frac{T}{C_{\mu,\theta,d}}} \klb{\muf_T}{\gamma^d} + \frac1{2}\int_0^T e^{- \frac{T - t}{C_{\mu,\theta,d}}} \Eof{\normof{2 \nabla \log \Tilde{p}_{T - t} (\ora{X}_{T - t}) - s^{\theta} (T - t, \ora{X}_{T - t}) }^2 } \der t \eqsp .
    \end{align*}
    Finally, we use the decay of relative entropy along the Ornstein-Uhlenbeck semigroup \cite{bakry_analysis_2014} to get that
    \begin{align*}
        \klb{\mu}{\backmutheta_T} \leq e^{- \frac{T}{C_{\mu,\theta,d}}} \klb{\muf_T}{\gamma^d} + \frac1{2}\int_0^T e^{- \frac{T - t}{C_{\mu,\theta,d}}} \Eof{\normof{2 \nabla \log \Tilde{p}_{T - t} (\ora{X}_{T - t}) - s^{\theta} (T - t, \ora{X}_{T - t}) }^2 } \der t \eqsp .
    \end{align*}
    This concludes the proof.
\end{proof}

\subsection{Proofs of \Cref{sec:reversed-kl-bounds}}
\label{sec:proof-reverse-KL-bounds}

The proof of \Cref{prop:entropy-flow-reversed-KL} is formally similar to the proof of \Cref{prop:entropy-flow-right-kl}, as explained below.

We give below the proof of \Cref{prop:entropy-flow-reversed-KL}.

\begin{proof} (of \Cref{prop:entropy-flow-reversed-KL})
    Our assumptions ensure that the quantities appearing below are finite.
    Let $\Phi(u):= u \log(u)$ and define
    \begin{align*}
        \bar{v}_t := \frac{\backptheta_t}{\backp_t} \eqsp.
    \end{align*}
    For $t \in (0,T)$, by the same computations as in the proof of \Cref{prop:entropy-flow-right-kl} switching the roles of $v_t$ and $\bar{v}_t$, we have (noting that $\Phi''(u) = 1/u$)
    \begin{align*}
        \frac{\der}{\der t} \klb{\backmutheta_t}{\backmu_t} = -\fisher{\backmutheta_t}{\backmu_t} - \int \langle \nabla \bar{v}_t(x), b_{T - t}(x) - b_{T - t}^\theta(x) \rangle \backp_t \der x\eqsp.
    \end{align*}
    Therefore, by Young's inequality, we have
    \begin{align*}w
        \frac{\der}{\der t} \klb{\backmutheta_t}{\backmu_t} \leq -\frac{1}{2}\fisher{\backmutheta_t}{\backmu_t} +  \frac1{2}\int \normof{2 \nabla \log \Tilde{p}_{T - t} (x) - s^{\theta} (T - t, x) }^2 \backp_t(x) \der x\eqsp.
    \end{align*}
    This concludes the proof.
\end{proof}

We now present the following simple proposition, which is an estimate of the time-dependent log-Sobolev constant along the Ornstein-Uhlenbeck semigroup.

\begin{proposition}
    \label{prop:lsi-along-oub}
    Assume that $\mu$ satisfies the log-Sobolev inequality with constant $\rho_0$. Then $\muf_t$ satisfies the log-Sobolev inequality with constant $e^{-2t} \rho_0 + (1 - \e^{-2t})$, for all $t \geq 0$.
    As a consequence, $\backmu_t$ satisfies the log-Sobolev inequality with constant $e^{-2(T - t)} \rho_0 + (1 - \e^{-2(T - t)})$.
\end{proposition}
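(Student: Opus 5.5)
The plan is to write $\muf_t$ as the law of a sum of two independent random variables, one a Lipschitz image of a $\mu$-distributed variable and the other a rescaled Gaussian. Then \Cref{lemma:stability-lipschitz} and \Cref{lemma:stability-convolution} give the constant directly.

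By the explicit solution of \eqref{eq:forward-Ornstein-Uhlenbeck}, for every $t \geq 0$ we have the equality in law
\begin{align*}
    \xf_t \overset{d}{=} e^{-t} \xf_0 + \sqrt{1 - e^{-2t}}\, G \eqsp, \qquad \xf_0 \sim \mu,~ G \sim \gamma^d \text{ independent} \eqsp.
\end{align*}
Hence $\muf_t = (T_1)_\# \mu \ast (T_2)_\# \gamma^d$, where $T_1(x) := e^{-t} x$ and $T_2(x) := \sqrt{1 - e^{-2t}}\, x$.

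The map $T_1$ is $e^{-t}$-Lipschitz. By \Cref{lemma:stability-lipschitz}, $(T_1)_\#\mu$ therefore satisfies the LSI with constant $e^{-2t}\rho_0$. Similarly, $T_2$ is $\sqrt{1-e^{-2t}}$-Lipschitz and $\gamma^d$ satisfies the $1$-LSI \citep{gross_logarithmic_1975-1}, so $(T_2)_\#\gamma^d = \Nrm(0,(1-e^{-2t})\Irm_d)$ satisfies the LSI with constant $1 - e^{-2t}$. \Cref{lemma:stability-convolution} then shows that the convolution satisfies the LSI with constant $e^{-2t}\rho_0 + 1 - e^{-2t}$, which is the first claim.

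The second claim follows by substitution. By definition $\backmu_t = \law(\xb_t) = \law(\xf_{T-t}) = \muf_{T-t}$, so we apply the first part at time $T - t$.

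The only delicate point is the degenerate endpoint $t = 0$. There $(T_2)_\#\gamma^d = \delta_0$, which satisfies the LSI with constant $0$ trivially, since the entropy of any function under a Dirac mass vanishes. The convolution is then $\mu$ itself, and the claim reduces to the hypothesis. For $t > 0$ everything is nondegenerate, so I expect no real obstacle.
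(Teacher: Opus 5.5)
Your proof is correct and is the paper's argument. The paper likewise writes $\muf_t$ as the law of $e^{-t}Z + \sqrt{1-e^{-2t}}\,\Xi$ and concludes from \Cref{lemma:stability-lipschitz} and \Cref{lemma:stability-convolution}, with the backward claim following from $\backmu_t = \muf_{T-t}$; your explicit check of the degenerate endpoint $t=0$ is a small addition the paper leaves implicit.
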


\begin{proof}
    We note that
    \begin{align*}
        \muf_t = \law\left( e^{-t} Z + \sqrt{1 - e^{-2t}} \Xi \right) \eqsp,
    \end{align*}
    where $Z \sim \mu$, $\Xi \sim \gamma^d$, and $Z$ and $\Xi$ are independent. Then, the results follows immediately from \Cref{lemma:stability-lipschitz,lemma:stability-convolution}.
\end{proof}

\subsubsection{Proof of \Cref{thm:convergence-guarantee-continuous-time-reversed}}

\begin{proof} (of \Cref{thm:convergence-guarantee-continuous-time-reversed})
    Let us denote $\rho_t := e^{-2t} \rho_0 + 1 - e^{-2t}$ for all $t \geq 0$. We apply \Cref{prop:entropy-flow-reversed-KL}, it gives
    \begin{align*}
        \frac{\der}{\der t} \klb{\backmutheta_t}{\backmu_t} \leq - \frac1{2} \fisher{\backmutheta_t}{\backmu_t} + \frac1{2}\Eof{\normof{2\nabla \log \Tilde{p}_{T - t} (\ola{X}_{t}^{\theta}) - s^{\theta} (T - t, \ola{X}_{t}^{\theta}) }^2 } \eqsp.
    \end{align*}
    By \Cref{prop:lsi-along-oub}, we know that $\backmu_t$ satisfies the log-Sobolev inequality with constant $\rho_{T - t}$. Therefore, we have
    \begin{align*}
        \frac{\der}{\der t} \klb{\backmutheta_t}{\backmu_t} \leq - \frac1{\rho_{T - t}}\klb{\backmutheta_t}{\backmu_t} + \frac1{2}\Eof{\normof{2\nabla \log \Tilde{p}_{T - t} (\ola{X}_{t}^{\theta}) - s^{\theta} (T - t, \ola{X}_{t}^{\theta}) }^2 } \eqsp.
    \end{align*}
    By Grönwall's lemma \cite{Gronwall1919NoteOT}, we obtain that
    \begin{align*}
        \klb{\backmutheta_{T - \epsilon}}{\muf_\epsilon} \leq & \exp \left\{ -\int_\delta^{T - \epsilon} \frac{\der s}{\rho_{T - s}} \right\}  \klb{\backmutheta_\delta}{\backmu_\delta} \\
        & + \frac1{2}\int_\delta^{T - \epsilon}  \exp \left\{ -\int_t^{T - \epsilon} \frac{\der s}{\rho_{T - s}} \right\} \Eof{\normof{2\nabla \log \Tilde{p}_{T - t} (\ola{X}_{t}^{\theta}) - s^{\theta} (T - t, \ola{X}_{t}^{\theta}) }^2 } \der t \eqsp.
    \end{align*}
    We know that the $t \mapsto \xf_t$ is almost-surely continuous at $t=0$ and that $t \mapsto \xb_t^\theta$ is almost surely continuous at $t=T$. As almost sure convergence implies convergence in distribution, this implies the convergences in distribution $\backmu_t \rightharpoonup \mu$ and $\backmutheta_{T - t} \rightharpoonup \backmutheta_T$ as $t \to 0^+$.
    By the joint lower semi-continuity of relative entropy \cite{vanerven2014renyi} and up to taking a sequence $\epsilon_n \to 0$, we can take the inferior limit as $\epsilon \to 0^+$ to get that
    \begin{align*}
        \klb{\backmutheta_{T}}{\mu} \leq & ~ \exp \left\{ -\int_\delta^{T} \frac{\der s}{\rho_{T - s}} \right\} \klb{\backmutheta_\delta}{\backmu_\delta} \\
        & + \frac1{2}\int_\delta^{T}  \exp \left\{ -\int_t^{T} \frac{\der s}{\rho_{T - s}} \right\} \Eof{\normof{2\nabla \log \Tilde{p}_{T - t} (\ola{X}_{t}^{\theta}) - s^{\theta} (T - t, \ola{X}_{t}^{\theta}) }^2 } \der t \eqsp.
    \end{align*}
    Now we take the superior limit as $\delta \to 0^+$ to obtain that
    \begin{align*}
        \klb{\backmutheta_{T}}{\mu} \leq & ~ \exp \left\{ -\int_0^{T} \frac{\der s}{\rho_{T - s}} \right\} \limsup_{\delta\to 0^+} \klb{\backmutheta_\delta}{\backmu_\delta}  \\
        & + \frac1{2}\int_0^{T}  \exp \left\{ -\int_t^{T} \frac{\der s}{\rho_{T - s}} \right\} \Eof{\normof{2\nabla \log \Tilde{p}_{T - t} (\ola{X}_{t}^{\theta}) - s^{\theta} (T - t, \ola{X}_{t}^{\theta}) }^2 } \der t \eqsp.
    \end{align*}
    The rest of the proof is formally similar to the end of the proof of \Cref{thm:generic-forward-bound-lsi}. Let us sketch it for completeness.

    Thanks to our assumptions, we can apply \citep[Theorem 1.1 and Remark 1.3]{bogachev_distances_2016} along with the data processing inequality, giving us that
    \begin{align*}
        \klb{\backmutheta_\delta}{\backmu_\delta} &\leq \klb{\gamma^d}{\muf_T} + \int_0^\delta \Eof{\normof{2\nabla \log \Tilde{p}_{T - t} (\ola{X}_{t}^{\theta}) - s^{\theta} (T - t, \ola{X}_{t}^{\theta}) }^2 } \der t \\& \underset{\delta \to 0^+}{\longrightarrow} \klb{\gamma^d}{\muf_T} \eqsp,
    \end{align*}
    where the convergence follows from the dominated convergence theorem and \Cref{ass:locally-bounded-reversed-KL}.
    
    Finally, we note that, for all $t \in [0,T]$, we have
    \begin{align*}
        \bar{\eta}_T(t) := \exp \left\{ -\int_t^{T} \frac{\der s}{\rho_{T - s}} \right\} = \sqrt{\frac{\rho_0}{\rho_0 + e^{2(T - t)} - 1}} \eqsp.
    \end{align*}
    This concludes the proof by setting $\bar{\lambda}(t) := \bar{\eta}_T(T - t)$.
\end{proof}

\subsubsection{Proof of \Cref{thm:reversed-kl-bound-change-of-measure}}

\begin{proof} (of \Cref{thm:reversed-kl-bound-change-of-measure})
    The beginning of the proof follows the same line as the proof of \Cref{thm:convergence-guarantee-continuous-time-reversed} until the inequality,
    \begin{align*}
        \frac{\der}{\der t} \klb{\backmutheta_t}{\backmu_t}\leq - \frac1{\rho_{T - t}} \klb{\backmutheta_t}{\backmu_t} + \frac1{2}\Eof{\normof{2\nabla \log \Tilde{p}_{T - t} (\ola{X}_{t}^{\theta}) - s^{\theta} (T - t, \ola{X}_{t}^{\theta}) }^2 } \eqsp,
    \end{align*}
    with $\bar{\rho}_t := e^{-2t}\rho_0 + 1 - e^{-2t}$. We now apply Donsker-Varadhan's formula on the second term to obtain that, for any $C > 0$, we have (note that the statement is true even when the right-hand side is infinite),
    \begin{align*}
        \frac{\der}{\der t} \klb{\backmutheta_t}{\backmu_t}\leq \left(- \frac1{\rho_{T - t}} + \frac1{C} \right)\klb{\backmutheta_t}{\backmu_t}  + \frac1{C} \log \Eof{e^{\frac{C}{2}\normof{2\nabla \log \Tilde{p}_{T - t} (\ola{X}_{t}) - s^{\theta} (T - t, \ola{X}_{t}) }^2 }} \eqsp.
    \end{align*}
    By choosing $C := 2 \rho_{T - t}$, we obtain
    \begin{align*}
        \frac{\der}{\der t} \klb{\backmutheta_t}{\backmu_t}\leq - \frac1{2\rho_{T - t}}\klb{\backmutheta_t}{\backmu_t}  + \frac1{2 \rho_{T - t}} \log \Eof{e^{ \rho_{T - t}\normof{2\nabla \log \Tilde{p}_{T - t} (\ola{X}_{t}) - s^{\theta} (T - t, \ola{X}_{t}) }^2 }} \eqsp.
    \end{align*}
    By Grönwall's lemma and the same arguments as in the end of \Cref{thm:convergence-guarantee-continuous-time-reversed}, we obtain
    \begin{align*}
        \klb{\backmutheta_{T}}{\mu} \leq \Tilde{\eta}_T(0) \klb{\gamma^d}{\muf_T} +  \int_0^{T} \frac{\Tilde{\eta}_T(t)}{2 \rho_{T - t}}  \log \Eof{e^{ \rho_{T - t}\normof{2\nabla \log \Tilde{p}_{T - t} (\ola{X}_{t}) - s^{\theta} (T - t, \ola{X}_{t}) }^2 }} \der t \eqsp,
    \end{align*}
    with
    \begin{align*}
        \Tilde{\eta}_T(t) := \exp \left\{- \int_t^T \frac{\der s}{2 \rho_{T - s}} \right\} = \left( \frac{\rho_0}{\rho_0 + e^{2(T - t)} - 1} \right)^{\frac1{4}} \eqsp.
    \end{align*}
    This concludes the proof by setting $\Tilde{\lambda}_T(t) := \Tilde{\eta}_T(T - t)$.
\end{proof}

\section{Omitted proofs of \Cref{sec:wasserstein-bounds}}
\label{sec:proofs-wasserstein-bounds}

In this section, we present the proofs of \Cref{thm:wasserstein-bound-using-reflection} and \Cref{cor:wasserstein-1-bound}, which we split in several lemmas and propositions.

Let $b_t(x) = s(t, x) - x$ and $b^{\theta}_t(x) = s^{\theta}(t, x) - x$. Since we wish to derive contractions in the case where the bias terms are not equal, we must make a technical but important modification to the proof. In particular, the handling of what happens when the processes collide. The point of collision is the precise point where the reflection coupling is not well-defined and the application of Itô's lemma in Euclidean norm is not valid. In the case of \cite{Eberle2016Reflection}, the authors handle this using an optional stopping time, as well as the fact that the processes stick together after collision.

In the present case, we deal with it differently, borrowing the methodology of \cite{durmus_elmentary_2020}. We couple the processes using a reflection coupling at distance and with a synchronous coupling when close. We define the coupling as follows:
\begin{align}\label{eq:diffusion-coupling}
    &\der \ola{X}_t = b_{T-t}(\ola{X}_t) \der t + \sqrt{2} \phi_r^\delta(Z_t) \der \Bm_t + \sqrt{2} \phi_s^\delta(Z_t) \der \tilde{\Bm}_t\eqsp,\\
    &\der \baXn_t = b_{T-t}^\theta(\baXn_t) \der t + \sqrt{2} \phi_r^\delta(Z_t) (I - 2 e_t e_t^T) \der \Bm_t + \sqrt{2} \phi_s^\delta(Z_t) \der \tilde{\Bm}_t\eqsp,\\
    &Z_t = \ola{X}_t - \baXn_t\eqsp,\nonumber
\end{align}
where $e_t = (\ola{X}_t - \baXn_t) / \|\ola{X}_t - \baXn_t\|$ and for any $\delta > 0$, we define the functions $\phi_r^\delta: \R^d \to \R_+$ and $\phi_s^\delta: \R^d \to \R_+$ as any Lipschitz continuous functions satisfying,
\begin{equation*}
    \phi_r^\delta(z)^2 + \phi_s^\delta(z)^2 = 1, \qquad \phi_r^\delta(z) = \begin{cases}
        1, & \|z\| \geq \delta\eqsp, \\
        0, & \|z\| \leq \delta/2\eqsp.
    \end{cases}
\end{equation*}

\begin{lemma}\label{lem:coupling-norm}
Suppose that
\begin{equation*}
    \frac{\langle Z_t, b_{T-t}(\ola{X}_t) - b^\theta_{T-t}(\baXn_t) \rangle}{\|Z_t\|}
\end{equation*}
is integrable in $[0, T]$ almost surely, then we have that for all $t \in [0, T]$
\begin{equation*}
    d \|Z_t\| = \mathbbm{1}_{\|Z_t\| > 0} \frac{\langle Z_t, b_{T-t}(\ola{X}_t) - b^\theta_{T-t}(\baXn_t) \rangle}{\|Z_t\|} \der t + 2 \sqrt{2} \phi_r^\delta(Z_t) \der \Bm_t\eqsp,
\end{equation*}
almost surely.
\end{lemma}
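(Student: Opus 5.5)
The plan is to write down the SDE for the difference $Z_t$ and then apply Itô's formula to a smooth regularisation of the norm, letting the regularisation parameter go to zero. Subtracting the two equations in \eqref{eq:diffusion-coupling}, the synchronous parts $\sqrt{2}\phi_s^\delta(Z_t)\der\tilde{\Bm}_t$ cancel exactly. Since $I-(I-2e_te_t^T)=2e_te_t^T$, we get
\begin{align*}
\der Z_t = \bigl(b_{T-t}(\ola{X}_t)-b^\theta_{T-t}(\baXn_t)\bigr)\der t + 2\sqrt{2}\,\phi_r^\delta(Z_t)\, e_t\, \der W_t ,\qquad W_t:=\int_0^t e_s^T\der \Bm_s .
\end{align*}
Here $W$ is a one-dimensional Brownian motion by Lévy's characterisation, since $\normof{e_s}=1$. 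This is the object the statement denotes by $\Bm_t$ in the noise term. The direction $e_t$ is undefined when $Z_t=0$, but it only appears multiplied by $\phi_r^\delta(Z_t)$, which vanishes on $\{\normof{Z_t}\le\delta/2\}$. So we may set $e_t$ to be any fixed unit vector there, and the coupling is well defined.

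For $\epsilon>0$, let $g_\epsilon(z):=\sqrt{\normof{z}^2+\epsilon^2}$. This function is smooth, with
\begin{align*}
\nabla g_\epsilon(z)=\frac{z}{g_\epsilon(z)},\qquad \nabla^2 g_\epsilon(z)=\frac{1}{g_\epsilon(z)}\Bigl(I-\frac{zz^T}{g_\epsilon(z)^2}\Bigr).
\end{align*}
Itô's formula then gives
\begin{align*}
g_\epsilon(Z_t) = g_\epsilon(Z_0) &+ \int_0^t \frac{\langle Z_s, b_{T-s}(\ola{X}_s)-b^\theta_{T-s}(\baXn_s)\rangle}{g_\epsilon(Z_s)}\der s + 2\sqrt{2}\int_0^t \phi_r^\delta(Z_s)\frac{\normof{Z_s}}{g_\epsilon(Z_s)}\der W_s \\
&+ 4\int_0^t \phi_r^\delta(Z_s)^2\, e_s^T\nabla^2 g_\epsilon(Z_s)e_s\,\der s .
\end{align*}
In the martingale term we used $\langle Z_s, e_s\rangle=\normof{Z_s}$.

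We then let $\epsilon\to0$ term by term.
\begin{enumerate}
\item Left-hand side: $g_\epsilon(Z_t)\to\normof{Z_t}$ for every $t$.
\item Itô correction: $e_s^T\nabla^2 g_\epsilon(Z_s)e_s=\epsilon^2/g_\epsilon(Z_s)^3$. On the support of $\phi_r^\delta$ we have $\normof{Z_s}\ge\delta/2$, so this term is bounded by $4t\,\epsilon^2(\delta/2)^{-3}\to0$. This is exactly where the synchronous coupling near $0$ is essential: it removes any local time at the origin.
\item Martingale term: on the same support, $\normof{Z_s}/g_\epsilon(Z_s)\to1$ uniformly. By Itô's isometry, the stochastic integral converges in $\Lrm^2$ (hence a.s. along a subsequence) to $2\sqrt{2}\int_0^t\phi_r^\delta(Z_s)\der W_s$.
\item Drift: the integrand converges pointwise to $\mathbbm{1}_{\normof{Z_s}>0}\langle Z_s,\cdot\rangle/\normof{Z_s}$. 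It is dominated by $|\langle Z_s, b_{T-s}(\ola{X}_s)-b^\theta_{T-s}(\baXn_s)\rangle|/\normof{Z_s}$, which is a.s. integrable on $[0,T]$ by hypothesis. Dominated convergence then applies pathwise.
\end{enumerate}
Combining the four limits along a common subsequence gives the claimed identity almost surely for each $t$. Continuity in $t$ of both sides upgrades this to all $t\in[0,T]$ simultaneously.

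The step I expect to need the most care is the behaviour at the collision set $\{Z_s=0\}$, in the drift and Itô correction terms. I would handle it as above: the vanishing of $\phi_r^\delta$ near $0$ kills the second-order term, and the integrability hypothesis supplies the dominating function for the drift, so no stopping-time argument is needed.
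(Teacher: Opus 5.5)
Your proof is correct and follows essentially the same route as the paper. Both arguments regularize the norm by $\sqrt{\|Z_t\|^2+a}$ (the paper writes this as $\psi_a(\|Z_t\|^2)$ after first computing $\der\|Z_t\|^2$), apply It\^o's formula, and let $a\to 0$. The drift passes to the limit by dominated convergence, the second-order term $4a\,\phi_r^\delta(Z_t)^2/(\|Z_t\|^2+a)^{3/2}$ vanishes because $\phi_r^\delta=0$ on $\{\|Z_t\|\le\delta/2\}$, and the stochastic integral also converges. The remaining differences are cosmetic: you use It\^o's isometry with the uniform bound on the support of $\phi_r^\delta$ where the paper cites the stochastic dominated convergence theorem, and you spell out the upgrade from fixed $t$ to all $t\in[0,T]$ via path continuity, which the paper leaves implicit.
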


\begin{proof}
By Itô's lemma, we have that,
\begin{align*}
    \der \|Z_t\|^2 = 2 \langle Z_t, b_{T-t}(\ola{X}_t) - b^\theta_{T-t}(\baXn_t) \rangle \der t + 8 \phi_r^\delta(Z_t)^2 \der t + 4 \sqrt{2} \phi_r^\delta(Z_t) \langle Z_t, e_t \rangle e_t^T \der \Bm_t\eqsp.
\end{align*}
For any $a > 0$, define the function $\psi_a(r) = (r + a)^{1/2}$ which is twice differentiable for all $r \geq 0$. Then, by Itô's lemma, we have that,
\begin{align*}
    \der \psi_a(\|Z_t\|^2) &= 2 \psi'_a(\|Z_t\|^2) \langle Z_t, b_{T-t}(\ola{X}_t) - b^\theta_{T-t}(\baXn_t) \rangle \der t\\
    & \qquad + \phi_r^\delta(Z_t)^2 (8 \psi'_a(\|Z_t\|^2) + 16 \psi''_a(\|Z_t\|^2) \|Z_t\|^2) \der t\\
    & \qquad + 4 \sqrt{2} \psi'_a(\|Z_t\|^2) \phi_r^\delta(Z_t) \langle Z_t, e_t \rangle e_t^T \der \Bm_t\eqsp.
\end{align*}
Given that $\psi_a'(r) = \frac{1}{2} (r + a)^{-1/2}$, we have that $r \psi_a'(r^2) \to \frac{1}{2} \mathbbm{1}_{r \neq 0}$ pointwise as $a \to 0^+$. Since we also have that $r \psi_a'(r^2) \leq 1/2$ for all $a > 0, r \geq 0$, we can apply dominated convergence to obtain that,
\begin{align*}
    \lim_{a \to 0} \int_0^T 2 \psi'_a(\|Z_t\|^2) \langle Z_t, b_{T-t}(\ola{X}_t) - b^\theta_{T-t}(\baXn_t) \rangle \der t = \int_0^T \mathbbm{1}_{\|Z_t\| > 0} \frac{\langle Z_t, b_{T-t}(\ola{X}_t) - b^\theta_{T-t}(\baXn_t) \rangle}{\|Z_t\|} \der t\eqsp.
\end{align*}
The Brownian motion term is controlled similarly using the stochastic counterpart of the dominated convergence theorem \citep[Theorem 2.12, Chapter 4]{revuz_continuous_1999} to obtain,
\begin{equation*}
    \lim_{a \to 0} \int_0^T 4 \sqrt{2} \psi'_a(\|Z_t\|^2) \phi_r^\delta(Z_t) \langle Z_t, e_t \rangle e_t^T \der \Bm_t = 2 \sqrt{2} \int_0^T \mathbbm{1}_{\|Z_t\| > 0} \phi_r^\delta(Z_t) e_t^T \der \Bm_t\eqsp.
\end{equation*}
Finally, since $8 \psi'_a(r^2) + 16 \psi''_a(r^2) r^2 = 4a/(r^2 + a)^{3/2}$ and $\phi_r^\delta(Z_t)^2 = 0$ when $\|Z_t\| \leq \delta/2$, we can once again apply dominated convergence to obtain that,
\begin{align*}
    \lim_{a \to 0} \int_0^T \phi_r^\delta(Z_t)^2 (8 \psi'_a(\|Z_t\|^2) + 16 \psi''_a(\|Z_t\|^2) \|Z_t\|^2) \der t = 0\eqsp.
\end{align*}
This concludes the proof.
\end{proof}

The following proposition completes the proof of \Cref{thm:wasserstein-bound-using-reflection}.

\begin{proposition}
\label{prop:wasserstein-bound-reflection}
Suppose that there is a continuously differentiable non-increasing function, \(L_t: [0, T] \to \R_+\), 
so that
\begin{equation*}
    \frac{\langle x - y, b^\theta_t(x) - b^\theta_t(y) \rangle}{\normof{x - y}^2} \leq \begin{cases}
        L_t, & \normof{x - y} \leq R\eqsp, \\
        - K, & \normof{x - y} \geq R\eqsp,
    \end{cases}
\end{equation*}
and that, almost surely,
\begin{equation*}
    \int_0^T \Eof{\big\| s(t, \ora{X}_t) - s^{\theta}(t, \ora{X}_t) \big\|} \der t < \infty\eqsp.
\end{equation*}
Then, with $f := f_{R, L_0, K}$, we have that,
\begin{equation*}
    \wasserstein_f(\mu, \backmutheta_T) \leq \int_0^T C_t \big\| s(t, \ora{X}_t) - s^{\theta}(t, \ora{X}_t) \big\| \der t + \exp \left( -\int_0^T c_t \der t - T \right) \wasserstein_1(\mu, \gamma^d)\eqsp,
\end{equation*}
where $c_t := c_{R, L_t, K}$ and,
\begin{equation*}
    C_t := \exp \left( -\int_0^t c_s \der s \right)\eqsp.
\end{equation*}
\end{proposition}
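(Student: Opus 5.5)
Throughout, I work with the coupling \eqref{eq:diffusion-coupling}. The initial condition $\ola{X}_0 \sim \muf_T$, $\baXn_0 \sim \gamma^d$ is chosen as follows: take an optimal $\wasserstein_1$ coupling of $(\mu,\gamma^d)$ and push it forward by the OU transition, synchronously coupled. This gives a coupling of $(\muf_T,\gamma^d)$ with $\Eof{\|Z_0\|} \leq e^{-T}\wasserstein_1(\mu,\gamma^d)$. Since $f(r)\le r$ by \Cref{lemma:wasserstein-equivalence}, it follows that $\Eof{f(\|Z_0\|)} \le e^{-T}\wasserstein_1(\mu,\gamma^d)$. At time $T$ the pair $(\ola{X}_T,\baXn_T)$ couples $\mu$ and $\backmutheta_T$, so $\wasserstein_f(\mu,\backmutheta_T)\le \Eof{f(\|Z_T\|)}$. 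The whole task is therefore to control $\Eof{f_t(\|Z_t\|)}$ along the flow.

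\textbf{Drift decomposition.} By \Cref{lem:coupling-norm}, $r_t:=\|Z_t\|$ satisfies
\[
\der r_t = \mathbbm{1}_{r_t>0}\,\langle e_t, b_{T-t}(\ola{X}_t)-b^\theta_{T-t}(\baXn_t)\rangle \der t + 2\sqrt2\,\phi^\delta_r(Z_t)\,\der \Bm_t .
\]
I split the drift as
\[
\langle e_t, b^\theta_{T-t}(\ola{X}_t)-b^\theta_{T-t}(\baXn_t)\rangle + \langle e_t, s(T-t,\ola{X}_t)-s^\theta(T-t,\ola{X}_t)\rangle .
\]
The first term is at most $-r_t\kappa_t(r_t)$, where $\kappa_t$ is bounded below by $-L_{T-t}$ on $[0,R]$ and by $K$ beyond. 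The second is bounded by $\varepsilon^\theta_{T-t}(\ola{X}_t)$, and $\ola{X}_t\sim\muf_{T-t}$. This is exactly why the error term involves the forward process.

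\textbf{Itô on $f(r_t)$ with time-varying $L$.} I apply Itô's formula to $F_t(r_t) := f_{R,L_{T-t},K}(r_t)$, or more simply to the fixed $f=f_{R,L_0,K}$. The second choice is cleaner. Because $L$ is non-increasing, $L_{T-t}\le L_0$, so $\kappa_t\ge\kappa_0$, where $\kappa_0$ denotes the lower bound built from $L_0$. By \Cref{lemma:distance-function-lemma}, in the $4f''-r\kappa f'$ form, I get
\[
4f''(r_t)\phi_r^\delta(Z_t)^2 - r_t\kappa_t(r_t)f'(r_t) \le -c\, f(r_t)
\]
in the region where $\phi_r^\delta=1$. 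Here $c$ is obtained from \Cref{lem:eberle-f-decreasing}, which gives monotonicity in $L$; to get the time-dependent rate $c_t=c_{R,L_{T-t},K}$ while keeping the same $f$, I would use that same monotonicity to verify the functional inequality with the smaller $L_{T-t}$. Since $0\le f'\le 1$, the error term contributes at most $\Eof{\varepsilon^\theta_{T-t}(\ola{X}_t)}$. Taking expectations (the martingale term vanishes by localization) gives
\[
\frac{\der}{\der t}\Eof{f(r_t)} \le -c_{T-t}\Eof{f(r_t)} + \Eof{\varepsilon^\theta_{T-t}(\xf_{T-t})} + E_\delta(t),
\]
and Grönwall, after the change of variable $t\mapsto T-t$, produces the weights $C_T(t)=\exp(-\int_0^t c_s\der s)$ and the prefactor on the initial term.

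\textbf{Main obstacle.} The hard part is the error $E_\delta(t)$ in the transition region $\|Z_t\|\le\delta$, where the reflection is turned off. There the term $4f''\phi_r^2$ no longer gives contraction. The drift contribution is at most $f'(r)L_0 r\le L_0\delta$, and \Cref{lem:eberle-f-second-derivative} gives $|f''(r)|\le C r\le C\delta$ for small $r$. Hence $E_\delta(t)\le C'\delta$ plus the missing term $c f(r)\le c\delta$, so $E_\delta=\mathcal{O}(\delta)$ uniformly in $t$. I then let $\delta\to0$: the processes depend on $\delta$, but the bound holds for each $\delta$ with a right-hand side converging to the claimed one, so the conclusion follows. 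The integrability hypothesis is used for \Cref{lem:coupling-norm} and to justify the expectations.
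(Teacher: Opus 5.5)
There is a genuine gap in the step where you keep the distance function fixed. With $f=f_{R,L_0,K}$, the fact that $L_{T-t}\le L_0$ (so $\kappa_t$ dominates the lower profile built from $L_0$, and $f'\ge 0$) only yields $4f''-r\kappa_t f'\le -c_{R,L_0,K}\,f$. That is the worst rate, at every time. Your proposed upgrade to $c_{R,L_{T-t},K}$ ``by the same monotonicity'' does not follow. \Cref{lem:eberle-f-decreasing} compares the functions $f_{R,L,K}$ pointwise for different $L$. It says nothing about whether the mismatched triple ($f_{R,L_0,K}$, a drift with constant $L_{T-t}$, rate $c_{R,L_{T-t},K}$) satisfies the differential inequality.

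In general that inequality fails. On $(R,R_1)$ the construction gives $f'=e^{-L_0R^2/8}g$, $4f''=-c_{R,L_0,K}\Phi$ and $f\ge \Phi/2$. So if $\kappa_t= K$ there, $-(4f''-r\kappa_t f')/f\le 2c_{R,L_0,K}+2R_1Ke^{-L_0R^2/8}/\Phi(R)$, which is exponentially small in $L_0R^2$. By contrast, $c_{R,L,K}$ stays of order $R_1^{-2}$ when $LR^2$ is small. Your route therefore only proves the weaker bound with $C_T(t)$ replaced by $e^{-c_{R,L_0,K}\,t}$. Tellingly, the hypothesis that $L$ is $\Crm^1$ is never used in your argument.

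The missing idea is to let the metric evolve with time: apply Itô's formula to $F_t(r_t)$ with $F_t:=f_{R,L_{T-t},K}$. At each time the matched inequality $4F_t''-r\kappa_tF_t'\le -c_{T-t}F_t$ holds. The time dependence produces an extra term $\partial_tF_t(r_t)\,\der t$, and this is where differentiability of $L$ enters. That term is non-positive because $t\mapsto L_{T-t}$ is non-decreasing and $f_{R,L,K}$ is non-increasing in $L$ by \Cref{lem:eberle-f-decreasing}; this is the actual role of that lemma.

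Grönwall then produces the weights $\exp(-\int_0^t c_s\,\der s)$. The terminal function is $F_T=f_{R,L_0,K}=f$, as in the statement. The initial one, $F_0=f_{R,L_T,K}$, still satisfies $F_0(r)\le r$ by \Cref{lemma:wasserstein-equivalence}, so your synchronous-OU initial coupling giving $e^{-T}\wasserstein_1(\mu,\gamma^d)$ goes through unchanged. The rest of your proposal agrees with the paper's proof: the drift splitting, the $\mathcal{O}(\delta)$ control of the synchronous region uniformly in $t$, and the limit $\delta\to0$.
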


\begin{proof}
We begin from \Cref{lem:coupling-norm} coupling the processes $\ola{X}_t$ and $\baXn_t$ according to \Cref{eq:diffusion-coupling} while also requiring that $\ola{X}_0$ and $\baXn_0$ take the optimal Kantorovich coupling. Using the notation $r_t = \|\ola{X}_t - \baXn_t\|$, we obtain from \Cref{lem:coupling-norm} that,
\begin{align*}
    \der r_t = \mathbbm{1}_{r_t > 0} \frac{\langle \ola{X}_t - \baXn_t, b_{T-t}(\ola{X}_t) - b^\theta_{T-t}(\baXn_t) \rangle}{\normof{\ola{X}_t - \baXn_t}} \der t + 2 \sqrt{2} \phi^\delta_r(Z_t) \der \Bm_t\eqsp,
\end{align*}
for some 1-dimensional Brownian motion $\Bm_t$. Setting $f_t := f_{R, L_{T-t}, K}$, since $f_t$ is twice-differentiable in both $t$ and $r$, it follows from Itô's lemma that,
\begin{align*}
    \der f_t(r_t) &= \frac{\partial f_t}{\partial t}(r_t) \der t + \frac{\partial f_t}{\partial r}(r_t) \mathbbm{1}_{r_t > 0} \frac{\langle \ola{X}_t - \baXn_t, b_{T-t}(\ola{X}_t) - b^\theta_{T-t}(\baXn_t) \rangle}{\normof{\ola{X}_t - \baXn_t}} \der t + 4 \phi^\delta_r(Z_t)^2 \frac{\partial^2 f_t}{\partial r^2}(r_t) \der t\\
    & \qquad + 2 \sqrt{2} \frac{\partial f_t}{\partial r}(r_t) \phi^\delta_r(Z_t) \der \Bm_t\\
    &= \frac{\partial f_t}{\partial t}(r_t) \der t + \mathbbm{1}_{r_t > 0} \beta_t \der t + \mathbbm{1}_{r_t > 0} \frac{\partial f_t}{\partial r}(r_t) \Delta^\theta_t \der t - 4 \mathbbm{1}_{r_t > 0} \phi^\delta_s(Z_t)^2 \frac{\partial^2 f_t}{\partial r^2}(r_t) \der t + 2 \sqrt{2} \frac{\partial f_t}{\partial r}(r_t) \phi^\delta_r(Z_t) \der \Bm_t\eqsp,
\end{align*}
where we define the following terms:
\begin{align*}
    \beta_t &= \frac{\partial f_t}{\partial r}(r_t) \frac{\langle \ola{X}_t - \baXn_t, b^\theta_{T-t}(\ola{X}_t) - b^\theta_{T-t}(\baXn_t) \rangle}{\normof{\ola{X}_t - \baXn_t}} + 4 \frac{\partial^2 f_t}{\partial r^2}(r_t)\eqsp, \quad \Delta^\theta_t &= \frac{\langle \ola{X}_t - \baXn_t, b_{T-t}(\ola{X}_t) - b^\theta_{T-t}(\ola{X}_t) \rangle}{\normof{\ola{X}_t - \baXn_t}}\eqsp.
\end{align*}
Setting $\kappa_t = \kappa_{R, L_{T-t}, K}$, it follows by construction that,
\begin{align*}
    \beta_t &\leq 4 \frac{\partial^2 f_t}{\partial r^2}(r_t) - r_t \kappa_t(r_t) \frac{\partial f_t}{\partial r}(r_t)\\
    &\leq - c_{T-t} f_t(r_t)\eqsp.
\end{align*}
Furthermore, using the Cauchy-Schwarz inequality, we have that,
\begin{align*}
    \Delta^\theta_t &\leq \frac{\partial f_t}{\partial r}(r_t) \, \big\| s(T-t, \ola{X}_t) - s^{\theta}(T-t, \ola{X}_t) \big\|\\
    &\leq \big\| s(T-t, \ola{X}_t) - s^{\theta}(T-t, \ola{X}_t) \big\|\eqsp. 
\end{align*}
Furthermore, we can control the nuisance term from the modified coupling by noting that,
\begin{equation*}
    - 4 \mathbbm{1}_{r_t > 0} \phi^\delta_s(Z_t)^2 \frac{\partial^2 f_t}{\partial r^2}(r_t) \leq 4 \mathbbm{1}_{r_t \in (0, \delta)} \Big | \frac{\partial^2 f_t}{\partial r^2}(r_t) \Big |\eqsp.
\end{equation*}
Using Lemma \ref{lem:eberle-f-second-derivative}, we have that whenever $\delta \leq \sup_{t \in [0, T]} \delta_{R, L_t, K} > 0$, for any $t \in [0, T]$,
\begin{equation*}
    - 4 \mathbbm{1}_{r_t > 0} \phi^\delta_s(Z_t)^2 \frac{\partial^2 f_t}{\partial r^2}(r_t) \leq C \delta\eqsp,
\end{equation*}
where $C = \sup_{t \in [0, T]} C_{R, L_t, K}$. We also use Lemma \ref{lem:eberle-f-decreasing} to obtain that $f_t$ is everywhere non-increasing in $t$. Therefore, we have that,
\begin{align*}
    \frac{\der}{\der t} \Eof{f(r_t)} \leq - c_{T-t} \Eof{f(r_t)} + \Eof{\big\| s(T-t, \ola{X}_t) - s^{\theta}(T-t, \ola{X}_t) \big\|} + C \delta \eqsp.
\end{align*}
Thus, from Gr\"{o}nwall's lemma, follows the bound,
\begin{align*}
    \Eof{f_T(r_T)} &\leq \int_0^T \Eof{\big\| s(T-t, \ola{X}_t) - s^{\theta}(T-t, \ola{X}_t) \big\|} \exp \left( -\int_t^T c_{T-u} \der u \right) \der t + C \delta T\\
    & \qquad +\Eof{f_t(r_0)} \exp \left( -\int_0^t c_{T-s} \der s \right)\eqsp.
\end{align*}
By a change of variables in the time integral, by definition of $\wasserstein_f$, and by taking $\delta \to 0^+$, we can simplify this to get that
\begin{align*}
    \wasserstein_f(\mu,\backmutheta_T) &\leq \int_0^T \Eof{\big\| s(t, \ora{X}_t) - s^{\theta}(t, \ora{X}_t) \big\|} \exp \left( -\int_0^t c_s \der s \right) \der t\\
    & \qquad +\Eof{f_0(r_0)} \exp \left( -\int_0^T c_t \der t \right)\eqsp.
\end{align*}
We conclude the proof by noting that, $f_0(r_0) \leq r_0$ (by \Cref{lemma:wasserstein-equivalence}), and hence, due to the choice of coupling, $\Eof{f_0(r_0)} \leq \wasserstein_1(\ora{\mu}_T, \gamma^d)$. By a standard mean-square argument, we obtain the contraction,
\begin{equation*}
    \wasserstein_1(\ora{\mu}_T, \gamma^d) \leq \exp(-T) \wasserstein_1(\mu, \gamma^d)\eqsp.
\end{equation*}
This completes the proof.
\end{proof}

The next corollary is a bound with respect to the Wasserstein's distance $\wasserstein_1$, that we can deduce from \Cref{lemma:wasserstein-equivalence}. It completes the proof of \Cref{cor:wasserstein-1-bound}.

\begin{corollary}
    Under the same conditions as \Cref{prop:wasserstein-bound-reflection}, with $f := f_{R, L_0, K}$, we have that,
\begin{equation*}
    \wasserstein_1(\mu, \backmutheta_T) \leq 2 e^{\frac{L_0 R_0^2}{8}} \int_0^T C_t \big\| s(t, \ora{X}_t) - s^{\theta}(t, \ora{X}_t) \big\| \der t + \exp \left( -\int_0^T c_t \der t - T \right) \wasserstein_1(\mu, \gamma^d) \eqsp,
\end{equation*}
where $c_t := c_{R_t, L_t, K_t}$ and,
\begin{equation*}
    C_t := \exp \left( -\int_0^t c_s \der s \right) \eqsp.
\end{equation*}
\end{corollary}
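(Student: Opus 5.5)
The plan is to deduce the bound from \Cref{prop:wasserstein-bound-reflection} and \Cref{lemma:wasserstein-equivalence}, but to treat the score-error term and the initialization term separately. The goal is that the factor $2e^{L_0R_0^2/8}$, which comes from converting $\wasserstein_f$ into $\wasserstein_1$, multiplies only the score-error contribution. To do this I split along an intermediate process. Let $(Y_t)_{t\in[0,T]}$ solve the learned backward SDE \eqref{eq:inference-backward}, with drift $b^\theta_{T-t}$, but started from the correct initial law $Y_0\sim\muf_T$ instead of $\gamma^d$. Writing $\nu_T:=\law(Y_T)$, the triangle inequality gives
\begin{align*}
\wasserstein_1(\mu,\backmutheta_T)\leq \wasserstein_1(\mu,\nu_T)+\wasserstein_1(\nu_T,\backmutheta_T).
\end{align*}

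\emph{First term.} I couple $\xb_t$ with $Y_t$ exactly as in \eqref{eq:diffusion-coupling}, using reflection at distance and synchronous coupling when close, with $\xb_0=Y_0$. I then repeat the Itô computation from the proof of \Cref{prop:wasserstein-bound-reflection} with the time-dependent $f_t=f_{R,L_{T-t},K}$. The required ingredients are already available: \Cref{lem:coupling-norm} for the dynamics of $\|Z_t\|$, \Cref{lem:eberle-f-decreasing} to drop $\partial_t f_t$, \Cref{lem:eberle-f-second-derivative} for the $C\delta$ nuisance term, and Cauchy--Schwarz with $f_t'\le 1$ for the drift mismatch. Since $r_0=0$, Grönwall's lemma followed by $\delta\to0^+$ gives $\wasserstein_f(\mu,\nu_T)\le\int_0^T C_t\,\EofLigne{\varepsilon^\theta_t(\xf_t)}\der t$, with no initialization term. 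Then \Cref{lemma:wasserstein-equivalence}, applied to $f=f_{R,L_0,K}$, gives the first summand of the claim together with its factor $2e^{L_0R_0^2/8}$.

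\emph{Second term.} Here $Y$ and $\xb^\theta$ have the same drift $b^\theta$, so there is no score error. They differ only in their initial laws, $\muf_T$ versus $\gamma^d$. I take the optimal $\wasserstein_1$ coupling at time $0$ and the reflection coupling afterwards. For identical drifts the processes can be made to stick together after they meet, as in \citet{Eberle2016Reflection}, so the $\delta$-modification is unnecessary. The same Grönwall argument gives $\mathbb{E}[f_T(r_T)]\le e^{-\int_0^T c_t\der t}\,\mathbb{E}[f_0(r_0)]$. Using $f_0(r)\le r$ and the OU contraction $\wasserstein_1(\muf_T,\gamma^d)\le e^{-T}\wasserstein_1(\mu,\gamma^d)$ then bounds the right-hand side by $e^{-\int_0^T c_t\der t-T}\,\wasserstein_1(\mu,\gamma^d)$.

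\emph{Main obstacle.} The difficulty is the last conversion, from $\mathbb{E}[f_T(r_T)]$ to $\wasserstein_1(\nu_T,\backmutheta_T)$, without paying the factor $2e^{L_0R_0^2/8}$ again. The general inequality $r\le 2e^{L_0R_0^2/8}f(r)$ from \Cref{lemma:wasserstein-equivalence} would reintroduce that factor, so a different argument is needed.

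I would first try to use that, for equal drifts, the contraction holds in the $f$-distance for every $f$ satisfying $4f''-r\kappa f'\le -cf$. This leaves some freedom to choose the terminal distance function. One option is to pick $f$ on the second leg so that $f'\ge 1$ on the relevant range, at the price of a slightly smaller rate. Another is to run the second leg in $\wasserstein_1$ via a synchronous coupling and exploit the dissipativity at distance directly.

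If neither approach yields a bound free of the prefactor, the fallback is to track the constant honestly: the prefactor would then also multiply the initialization term. In that case I would check whether it can be absorbed by restating the rate $c_t$.
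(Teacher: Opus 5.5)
The step you flag as the main obstacle is a genuine gap, and it cannot be closed. With the factor $2e^{L_0R_0^2/8}$ placed only in front of the score term, the displayed inequality is false.

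Your three remedies fail for structural reasons.
\begin{itemize}
\item Choosing a different $f$: any $f$ with $4f''-r\kappa f'\le -cf$ and $\kappa=-L<0$ on $(0,R)$ has $f''<0$ there. Converting $\wasserstein_1\to\wasserstein_f$ at the start and back at the end therefore costs at least $\sup_r (f(r)/r)\,/\inf_r (f(r)/r)>1$. A linear $f$ would need $\kappa\ge c>0$ everywhere.
\item Synchronous coupling: it only gives $\der\|Z_t\|\le L_{T-t}\|Z_t\|\der t$ at short range, which allows growth rather than contraction.
\item Absorbing the constant into the rate: a constant larger than $1$ cannot be absorbed into $e^{-\int_0^T c_t\der t}$, since the latter tends to $1$ as $T\to 0$.
\end{itemize}

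Here is a counterexample to the printed form. Take $d=1$, the exact score $s^\theta=s$ (so the score term vanishes), $T=1$, and $\mu=\tfrac12\Nrm(m-a,\tfrac14)+\tfrac12\Nrm(m+a,\tfrac14)$ with $m=e$ (so $e^{-T}m=1$) and $a$ large.
\begin{itemize}
\item The drift $b_t$ is smooth.
\item $L_t:=\sup_x b_t'(x)=1-2/\sigma_t^2+2a^2e^{-2t}/\sigma_t^4$, with $\sigma_t^2=1-\tfrac34 e^{-2t}$, is positive and decreasing on $[0,1]$.
\item $(b_t(x)-b_t(y))/(x-y)\le-\tfrac12$ whenever $|x-y|\ge 8a$.
\end{itemize}
So the hypotheses of \Cref{prop:wasserstein-bound-reflection} hold with $K=\tfrac12$ and $R=8a$. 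The printed bound then gives $\wasserstein_1(\mu,\backmutheta_1)\le e^{-1}\wasserstein_1(\mu,\gamma^1)\le e^{-1}(a+4)$.

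On the other hand, $\backmutheta_1=\int\gamma^1(\der y)\,\law(\xf_0\mid \xf_1=y)$, because the exact reverse kernel is applied to $\gamma^1\ll\muf_1$. The posterior log-odds of the right component are $2e^{-1}a(y-1)/\sigma_1^2$. As $a\to\infty$, the right cluster therefore receives mass $\gamma^1((1,\infty))\approx 0.16$ instead of $\tfrac12$. Testing against $g(x)=x$ gives
$\wasserstein_1(\mu,\backmutheta_1)\ge \bigl(1-2\gamma^1((1,\infty))\bigr)\tfrac{1-e^{-2}}{\sigma_1^2}\,a-o(a)\approx 0.65\,a$,
which already exceeds $e^{-1}(a+4)$ at $a=20$. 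A locally expanding reverse flow really does amplify the initialization error, and the prefactor is what pays for that.

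The paper's proof is a one-liner: it applies $\wasserstein_1\le 2e^{L_0R_0^2/8}\wasserstein_f$ (\Cref{lemma:wasserstein-equivalence}) to the entire bound of \Cref{prop:wasserstein-bound-reflection}. This puts the factor in front of both terms. That is exactly your fallback, and it is the form stated in \Cref{cor:wasserstein-1-bound} in the main text. The appendix display, with the factor on the integral alone, does not follow from that argument.

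For the correct statement, your detour through $\nu_T$ is unnecessary, because the Proposition already handles the score error and the initialization together. Your two legs are each sound, and reading the integrand as $\EofLigne{\varepsilon_t^\theta(\xf_t)}$ is correct. The only unworkable step is the final conversion without the factor.
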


\begin{proof}
    By \Cref{lemma:wasserstein-equivalence}, we have that
    \begin{align*}
        \wasserstein_1 \leq 2 e^{\frac{L_0 R_0^2}{8}} \wasserstein_f\eqsp.
    \end{align*}
    Then, the statement immediately follows from \Cref{prop:wasserstein-bound-reflection}.
\end{proof}

The argument is completely symmetric, but for consistency with previous sections, we state the result for the forward case.
\begin{proposition}
Suppose that there is a continuous non-increasing function, \(L_t: [0, T] \to \R_+\),
so that
\begin{equation*}
    \frac{\langle x - y, b_t(x) - b_t(y) \rangle}{\normof{x - y}^2} \leq \begin{cases}
        L_t, & \normof{x - y} \leq R_t, \\
        - K_t, & \normof{x - y} \geq R_t,
    \end{cases}
\end{equation*}
and that, almost surely,
\begin{equation*}
    \int_0^T \big\| s(t, \baXn_{T-t}) - s^{\theta}(t, \baXn_{T-t}) \big\| \der t < \infty.
\end{equation*}
Then, with $f := f_{R, L_0, K}$, we have that,
\begin{equation*}
    \wasserstein_f(\mu, \backmutheta_T) \leq \int_0^T C_t \Eof{\big\| s(t, \baXn_{T-t}) - s^{\theta}(t, \baXn_{T-t}) \big\|} \der t + \exp \left( -\int_0^T c_t \der t - T \right) \wasserstein_1(\mu, \gamma^d),
\end{equation*}
where $c_t := c_{R, L_t, K}$ and,
\begin{equation*}
    C_t := \exp \left( -\int_0^t c_s \der s \right).
\end{equation*}
\end{proposition}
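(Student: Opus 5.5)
The proof mirrors that of \Cref{prop:wasserstein-bound-reflection}, with the roles of the two processes exchanged. The contraction is now taken from the regularity of the \emph{true} drift $b_t$, and the perturbation is evaluated along the model process $\baXn_t$. I would use the coupling \eqref{eq:diffusion-coupling}, synchronous below distance $\delta/2$ and reflection above $\delta$. The initial laws $\backmu_0=\muf_T$ and $\backmutheta_0=\gamma^d$ are taken under an optimal $\wasserstein_1$ coupling. \Cref{lem:coupling-norm} then gives, with $r_t=\|Z_t\|$,
\begin{align*}
\der r_t = \mathbbm{1}_{r_t>0}\frac{\langle Z_t, b_{T-t}(\ola{X}_t)-b^\theta_{T-t}(\baXn_t)\rangle}{r_t}\der t + 2\sqrt{2}\,\phi_r^\delta(Z_t)\der \Bm_t .
\end{align*}

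Next I would split the drift difference as $b_{T-t}(\ola{X}_t)-b_{T-t}(\baXn_t)$ plus $b_{T-t}(\baXn_t)-b^\theta_{T-t}(\baXn_t)$. The second piece equals $s(T-t,\baXn_t)-s^\theta(T-t,\baXn_t)$. Set $f_t:=f_{R,L_{T-t},K}$ and apply Itô's formula to $f_t(r_t)$.

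The first piece, combined with $4\partial_r^2 f_t$, is controlled by \Cref{lemma:distance-function-lemma} applied to $b_{T-t}$. This bounds it by $-c_{T-t}f_t(r_t)$. The second piece is bounded via Cauchy--Schwarz and $f_t'\le 1$ by $\|s(T-t,\baXn_t)-s^\theta(T-t,\baXn_t)\|$.

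The residual from the synchronous regime is at most $C\delta$ by \Cref{lem:eberle-f-second-derivative}. The term $\partial_t f_t$ is nonpositive by \Cref{lem:eberle-f-decreasing}, since $L_{T-t}$ is nondecreasing in $t$.

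After taking expectations, the martingale term vanishes. The constant $\delta$ ends up harmless, since it only contributes $C\delta T$.

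Grönwall's lemma then gives
\begin{align*}
\Eof{f_T(r_T)}\le \int_0^T e^{-\int_t^T c_{T-u}\der u}\,\Eof{\|s(T-t,\baXn_t)-s^\theta(T-t,\baXn_t)\|}\der t + e^{-\int_0^T c_{T-u}\der u}\,\Eof{f_0(r_0)}+C\delta T .
\end{align*}
Then I would let $\delta\to0$ and change variables $t\mapsto T-t$, which produces $C_t$ and the expectation along $\baXn_{T-t}$. At the terminal time, $f_T=f_{R,L_0,K}=f$, so $\wasserstein_f(\mu,\backmutheta_T)\le \Eof{f(r_T)}$. For the initial term, $f_0(r)\le r$ (\Cref{lemma:wasserstein-equivalence}). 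A synchronous coupling of the OU process gives $\wasserstein_1(\muf_T,\gamma^d)\le e^{-T}\wasserstein_1(\mu,\gamma^d)$.

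The main obstacle is justifying Itô's formula when the drift difference is only assumed a.s. integrable along $\baXn$. This means checking the integrability hypothesis of \Cref{lem:coupling-norm}. It follows from the local one-sided bound on $b$ together with the stated integrability of $\|s-s^\theta\|$ along $\baXn_{T-t}$.

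A second point needing care is that the expected value of the stochastic integral is zero. I would handle this by localization with stopping times, using $|f_t'|\le1$ and Fatou's lemma.
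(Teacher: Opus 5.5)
Your proposal is correct and follows the paper's route: the paper only says this case is ``completely symmetric'' to \Cref{prop:wasserstein-bound-reflection}, and you carry out that symmetrisation properly by splitting the drift difference as $[b_{T-t}(\ola{X}_t)-b_{T-t}(\baXn_t)]+[s(T-t,\baXn_t)-s^\theta(T-t,\baXn_t)]$, so the reflection-coupling contraction comes from the true drift and the score error is evaluated along $\baXn$. One small correction to your closing remark: a one-sided bound only controls the drift term from above, so it cannot give the absolute integrability required in \Cref{lem:coupling-norm}; that hypothesis holds instead because $|\langle Z_t, b_{T-t}(\ola{X}_t)-b^\theta_{T-t}(\baXn_t)\rangle|/\normof{Z_t} \leq \normof{b_{T-t}(\ola{X}_t)}+\normof{b^\theta_{T-t}(\baXn_t)}$, and each drift is path-integrable as part of the coupled SDE having a solution.
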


\section{Additional Results: Time-Uniform Generalization bounds for Diffusion Models}
\label{sec:time-uniform-generaliztion-bounds}

Recently, \citet{dupuis2025algorithmdatadependentgeneralizationbounds,farghly2025implicitregularisationdiffusionmodels} proposed an algorithm- and data-dependent analysis of $\EofLigne{\varepsilon_t^\theta (\xb_t)}$ in \eqref{eq:score-matching-bound}. We briefly recall their setup here.
DMs are trained to minimize over $[0,T]$ the denoising score matching loss defined for $z \in \Rd$ and $t\in[0,T]$ as
\begin{align*}
    \ell_t (\theta, z) := \Eof{\normof{s^\theta(t, \xf_t^z) - 2 \nabla \log \Tilde{p}_{t | 0} (\xf_t^z | z) }^2 } \eqsp.
\end{align*}
In practice, as $\mu$ is unknown, we use a finite dataset $\mathbf{Z}^{(n)} := (Z_1,\dots,Z_n) \sim \mu^{\otimes n}$ sampled from $\mu$.
Then, what is minimized in practice by the learning algorithm (\eg, SGD, ADAM, ...) is the \emph{empirical denoising score matching loss},
\begin{align*}
    \LDSM^{(n)} (\theta,t) := \frac1{n} \sum_{1 \leq i \leq n} \ell_t (\theta, Z_i)
\end{align*}
These authors define the \emph{generalization error} as the difference between $\LDSM^{(n)}$ and its population version, \ie,  
\begin{align*}
    \mathscr{G}^{(n)} (\mathbf{Z}^{(n)},\theta,t) := \int \ell_t (\theta, z) \der \mu(z) - \LDSM^{(n)} (\theta,t)\eqsp, \quad \theta \in \Theta \eqsp.
\end{align*}
Let $\mun := n^{-1} \sum_{1 \leq i \leq n} \updelta_{Z_i}$ be the empirical data distribution and $\xf_t^{(n)}$ the process given by \Cref{eq:forward-Ornstein-Uhlenbeck} initialized at $\xf_0 \sim \mun$.
Finally, we denote by $\theta^{(n)}$ the (random) parameter learned by a learning algorithm (\eg, SGD, ADAM) optimizing $\LDSM^{(n)}$. 
The following lemma is particular case of the results of \citet{farghly2025implicitregularisationdiffusionmodels,dupuis2025algorithmdatadependentgeneralizationbounds}.
\begin{lemma}[Expected decomposition]
    \label{lemma:expected-decomposition-generalization}
    For all $ t \in [0,T]$,
    \begin{align*}
         \Eof{\varepsilon_t^{\theta^{(n)}} (\xf_t)} \leq \Eof{ \LESM^{(n)}(\theta^{(n)}, t) + \mathscr{G}^{(n)} (\theta^{(n)}, t)} \eqsp,
    \end{align*}
    where $\LESM^{(n)} $ is the empirical explicit score matching loss at time $t$, given by
    \begin{align*}
        \LESM^{(n)}(\theta, t) := \Eof{\normof{s^\theta(t, \xf^{(n)}_t) - 2 \nabla \log \Tilde{p}_t (\xf^{(n)}_t)}^2} \eqsp.
    \end{align*}
\end{lemma}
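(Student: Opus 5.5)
The plan is to obtain the inequality by combining the classical explicit/denoising score matching identity, applied once to the population measure $\mu$ and once to the empirical measure $\mun$, with a convexity argument that compares the two constant offsets in expectation over the dataset. Throughout, I read $\varepsilon_t^\theta$ as the squared error $\varepsilon_t^\theta(\cdot)^2$, as in the rest of the paper. In $\LESM^{(n)}$ I read $\tilde p_t$ as the renormalized density of $\law(\xf_t^{(n)})$, i.e. the marginal of the forward process started from $\mun$, which is what the superscript $(n)$ suggests. With the population score instead, the cross term below would not vanish, because $\theta^{(n)}$ depends on the data.

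\textbf{Step 1 (population decomposition).} Fix $\theta$ and $t>0$. Expand $\normofLigne{s^\theta - 2\nabla\log\tilde p_{t|0}}^2$ around the true score $s(t,\cdot)$. By Fisher's identity, $s(t,\xf_t)=\EofLigne{2\nabla\log\tilde p_{t|0}(\xf_t|\xf_0)\mid \xf_t}$, so the cross term has zero expectation. Hence
\[
\int \ell_t(\theta,z)\,\der\mu(z) = \EofLigne{\varepsilon_t^\theta(\xf_t)^2} + V_t ,
\qquad
V_t := \EofLigne{\normofLigne{2\nabla\log\tilde p_{t|0}(\xf_t|\xf_0)}^2} - \EofLigne{\normofLigne{s(t,\xf_t)}^2}\eqsp.
\]
By the definition of $\mathscr{G}^{(n)}$, the left-hand side equals $\LDSM^{(n)}(\theta,t)+\mathscr{G}^{(n)}(\theta,t)$. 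Applying this identity at the data-dependent parameter $\theta=\theta^{(n)}$ is legitimate because $\mathscr{G}^{(n)}$ is defined pointwise in $\theta$.

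\textbf{Step 2 (empirical decomposition).} Repeat Step 1 with $\mu$ replaced by $\mun$. Fisher's identity still holds for the empirical marginal, so
\[
\LDSM^{(n)}(\theta,t)=\LESM^{(n)}(\theta,t)+V_t^{(n)}\eqsp,
\]
where $V_t^{(n)}$ is the analogous quantity for $\mun$. It does not depend on $\theta$ and equals $\frac1n\sum_i\EofLigne{\normofLigne{2\nabla\log\tilde p_{t|0}(\xf_t^{Z_i}|Z_i)}^2}$ minus four times the Fisher information of $\law(\xf^{(n)}_t)$ relative to $\gamma^d$. Combining with Step 1 gives
\[
\EofLigne{\varepsilon_t^{\theta^{(n)}}(\xf_t)^2}=\LESM^{(n)}(\theta^{(n)},t)+\mathscr{G}^{(n)}(\theta^{(n)},t)+V^{(n)}_t-V_t\eqsp,
\]
so it remains to show that $\EofLigne{V_t^{(n)}}\le V_t$, where the expectation is over the dataset.

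\textbf{Step 3 (main obstacle: comparing $V_t^{(n)}$ with $V_t$).} The first parts of $V_t^{(n)}$ and $V_t$ agree in expectation over $\mathbf{Z}^{(n)}\sim\mu^{\otimes n}$, since the $Z_i$ are i.i.d. with law $\mu$. It is therefore enough to show that the expected empirical Fisher information dominates the population one. The dataset average of the empirical forward marginal is exactly $\muf_t$. The map $\rho\mapsto\fisher{\rho}{\gamma^d}=\int\normofLigne{\nabla\rho}^2/\rho-\dots$ is convex, because $(a,b)\mapsto \normofLigne{a}^2/b$ is jointly convex. Jensen's inequality for a random measure then gives $\fisher{\muf_t}{\gamma^d}\le \EofLigne{\fisher{\law(\xf^{(n)}_t)}{\gamma^d}}$. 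The delicate point is justifying Jensen for the random measure: it needs integrability of $\nabla\tilde p$, which holds for $t>0$ thanks to Gaussian smoothing, together with a monotone or dominated limit argument. Granting this, $\EofLigne{V_t^{(n)}}\le V_t$. Taking expectations in the identity of Step 2 then proves the lemma, which is equality up to the nonpositive term $\EofLigne{V^{(n)}_t}-V_t$.
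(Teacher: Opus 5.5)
Your proof is correct. The paper gives no derivation of its own, only a pointer to Lemma~3.2 of \citet{dupuis2025algorithmdatadependentgeneralizationbounds} with $\lambda=\delta_t$, and your argument (DSM $=$ ESM $+$ a $\theta$-independent constant for both $\mu$ and $\mun$, then $\EofLigne{V_t^{(n)}}\le V_t$) is the standard one behind that result; your readings of $\varepsilon_t^\theta$ as the squared error and of $\Tilde p_t$ in $\LESM^{(n)}$ as the renormalized density of $\law(\xf_t^{(n)})$ are the ones that make the statement true, since with the population score it can fail, e.g.\ for $s^{\theta^{(n)}}=2s-\hat s$ with $\hat s$ the empirical score. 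The Jensen step you flag is routine: for each fixed $x$ it is the Cauchy--Schwarz bound $\normofLigne{\EofLigne{\nabla h_{\mathbf Z}(x)}}^2\le \EofLigne{\normofLigne{\nabla h_{\mathbf Z}(x)}^2/h_{\mathbf Z}(x)}\,\EofLigne{h_{\mathbf Z}(x)}$ for the renormalized empirical density $h_{\mathbf Z}$, followed by Tonelli. It can even be bypassed, because evaluating your Step~2 identity at the data-independent population score $s(t,\cdot)$ shows that $V_t^{(n)}$ is at most the empirical DSM loss of $s(t,\cdot)$, whose expectation over the data is exactly $V_t$ by Step~1.
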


The proof of this lemma can be found for instance in \citep[Lemma 3.2]{dupuis2025algorithmdatadependentgeneralizationbounds} (take $\lambda := \delta_t$ in their proof).

\begin{corollary}
    \label{thm:generalization-bounds-corollary}
    With the same conditions and notation as \Cref{thm:right-KL-exponential-decay-pseudo-Lipschizt}, we have
    \begin{align*}
        \Eof{\klb{\mu}{\backmu_T^{\theta^{(n)}}} } \leq  \mathrm{K}_T+ \Eof{\int_0^T \frac{\lambda_T(t)}{2} \left( {\LESM^{(n)}(\theta^{(n)}, t) + \mathscr{G}^{(n)} (\theta^{(n)},  t)} \right) \der t} \eqsp,
    \end{align*}
\end{corollary}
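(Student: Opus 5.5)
The bound follows by combining \Cref{thm:right-KL-exponential-decay-pseudo-Lipschizt} with the expected decomposition of \Cref{lemma:expected-decomposition-generalization}, applied pointwise in time and then integrated. Concretely, we first condition on the dataset $\mathbf{Z}^{(n)}$ and the randomness of the learning algorithm. Conditionally on these, $\theta^{(n)}$ is a fixed parameter. By assumption it satisfies the one-sided Lipschitz condition and the conditions of \Cref{prop:entropy-flow-right-kl}, so \Cref{thm:right-KL-exponential-decay-pseudo-Lipschizt} yields
\begin{align*}
    \klb{\mu}{\backmu_T^{\theta^{(n)}}} \leq \mathrm{K}_T + \int_0^T \frac{\lambda_T(t)}{2} \EofLigne{\varepsilon_t^{\theta^{(n)}}(\xf_t)^2 \,\big|\, \theta^{(n)}} \der t \eqsp.
\end{align*}
Here the inner expectation is only over the forward process $\xf_t$, which is independent of the training data. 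We also note that $\mathrm{K}_T = \lambda_T(T)\klb{\muf_T}{\gamma^d}$ does not depend on $\theta$, because $\lambda_T$ depends only on $M$ and $T$. This independence from $\theta$ is essential, since otherwise $\mathrm{K}_T$ would become random.

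Next we take the expectation over $(\mathbf{Z}^{(n)},\theta^{(n)})$. The integrand $\lambda_T(t)\,\varepsilon_t^{\theta^{(n)}}(\xf_t)^2$ is non-negative and $\lambda_T$ is deterministic, so Tonelli's theorem lets us exchange the outer expectation with the time integral. We then apply \Cref{lemma:expected-decomposition-generalization} for each fixed $t\in[0,T]$. Its expectation is taken jointly over the data, the algorithm and $\xf_t$, which gives $\EofLigne{\varepsilon_t^{\theta^{(n)}}(\xf_t)^2} \leq \EofLigne{\LESM^{(n)}(\theta^{(n)},t) + \mathscr{G}^{(n)}(\theta^{(n)},t)}$. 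Multiplying by $\lambda_T(t)/2 \geq 0$, integrating over $[0,T]$, and exchanging expectation and integral once more produces the stated bound.

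The main subtle point is consistency of notation in \Cref{lemma:expected-decomposition-generalization}. It is written with $\varepsilon_t$ rather than $\varepsilon_t^2$, whereas the quantity appearing in \Cref{thm:right-KL-exponential-decay-pseudo-Lipschizt} is the squared error. We will read the lemma in its squared form, which is how the cited decomposition is proved, since $\ell_t$, $\LESM^{(n)}$ and $\mathscr{G}^{(n)}$ are all squared losses. A second point is the second exchange of expectation and integral. The quantity $\mathscr{G}^{(n)}$ may be negative, so Tonelli's theorem does not apply directly there. Instead we use Fubini's theorem, which requires integrability of $\LESM^{(n)}$ and $\mathscr{G}^{(n)}$ over $[0,T]$ jointly with the data randomness; this integrability follows from the integrability in \Cref{ass:locally-bounded-right-KL} together with finiteness of the population loss. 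If this fails, the inequality should be kept in the form where the expectation is placed inside the integral.
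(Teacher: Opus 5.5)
Your argument is correct and is exactly the paper's proof. The paper's proof is the one-line combination of \Cref{thm:right-KL-exponential-decay-pseudo-Lipschizt} applied to the conditionally fixed parameter $\theta^{(n)}$, \Cref{lemma:expected-decomposition-generalization} read with $\varepsilon_t^2$ as you do, and Tonelli's theorem, using that $\mathrm{K}_T$ and $\lambda_T$ do not depend on $\theta$.

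Your remark that $\mathscr{G}^{(n)}$ can be negative, so the final exchange really needs Fubini, is a fair refinement the paper glosses over. Note, though, that integrability should be checked for $\LESM^{(n)}+\mathscr{G}^{(n)}$ itself rather than inferred from the population loss, which in general diverges like $1/t$ as $t\to 0^+$ (that divergent constant cancels inside $\mathscr{G}^{(n)}$). Your fallback form, with the expectation kept inside the time integral, is therefore the safe statement.
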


\begin{proof}
    This corollary is an immediate consequence of \Cref{thm:right-KL-exponential-decay-pseudo-Lipschizt}, \Cref{lemma:expected-decomposition-generalization} and Tonelli's theorem.
\end{proof}

This corollary relates the performance of DMs to the generalization error $\mathscr{G}^{(n)} (\theta^{(n)}, t)$ of the DSM loss with a better dependence on $T$ compared to \cite{dupuis2025algorithmdatadependentgeneralizationbounds,farghly2025implicitregularisationdiffusionmodels}.
It shows that the overall generation performance is mainly impacted by the generalization error associated with small noise levels (\ie, values of $t$ close to $T$ in the above equation).
As we have seen above, this generalization of our results is only based on \Cref{lemma:expected-decomposition-generalization} and a direct application of our bounds. Therefore, it is clear that some of our other results may be generalized similarly. In particular, we could extend our Wasserstein bounds of \Cref{sec:wasserstein-bounds} to the generalization error framework, which is new to the best of our knowledge.

\section{Experiments details}
\label{app:experiments}

\subsection{Toy setting}
We consider a simple toy distribution consisting of a uniform distribution on a circle embedded in 2-dimensional Euclidean space. We use a simple training set consisting of 8 points spread equidistant on the circle and we train a diffusion model based on a minimal feed-forward neural network. The implementation is based on a blog that can be found \href{https://jakiw.com/sgm_intro}{here} and is similar to what is considered in \cite{farghly2025diffusion}. Each tick on the x-axis of the plots in \Cref{fig:sm_curve_toy} represents $1000$ epochs.

\begin{figure}
    \centering
    \includegraphics[width=0.3\linewidth]{figures/experiments/toy_weighted_sm_curve.pdf}
    \caption{Weighted test score matching loss plotted against number of epochs during training.}
    \label{fig:sm_curve_toy}
\end{figure}

\subsection{CIFAR-10}

We also consider CIFAR-10, an implementation of the DDPM model from \cite{song2021scorebased}. We use the configuration titled\texttt{vp.ddpm.cifar10\_continuous} which implements the DDPM model of \cite{ho_denoising_2020} but for the continuous-time variance preserving setting. The architecture is a U-Net \citep{ronnenberger_unet_2015} with the encoder and decoder each consisting of four resolution levels $(32\times32, 16\times16, 8\times8, 4\times4)$, with two residual blocks per level and utilizes self-attention at a resolution of $16\times16$. The model is conditioned on time, with the timestep encoded via a sinusoidal embedding and injected into each residual block. 

For Figure \ref{fig:smooth}, we choose $x, y$ by randomly choosing a test data point $z_0$, taking a sample $z \sim \tilde{p}_{t|0}(dz|z_0)$ and then setting $x = z + r_x \xi_x, y = z + r_y \xi_y$, where $\xi_x, \xi_y$ are standard multivariate Gaussians, and $r_x, r_y$ are Weibull distributed scalars.

For Figures \ref{fig:samples_hard_10_10} we use the implementation of the DDIM sampler with the same initial Gaussian noise across values of $t_p$. For Figure \ref{fig:peturb_loss}, we calculate the change in KL divergence by instead computing the change in log-likelihood using the \texttt{bpd} implementation in the codebase. Since this is a stochastic estimate, we compute 5 batches of log-likelihood values and plot the mean and the standard deviation.

\end{document}